\documentclass[12pt]{article}
\pdfoutput=1

\pdfminorversion=4
\usepackage{amsmath}
\usepackage{amsfonts}
\usepackage{amssymb}

\usepackage[T1]{fontenc}

\usepackage{booktabs,amsthm}
\usepackage{natbib,lscape}
\usepackage{macros}
\usepackage{pdfpages}
\usepackage{multirow,bigdelim}

\usepackage{graphicx,xcolor,mathrsfs}
\usepackage{setspace}
\usepackage{bbm, bm}
\usepackage{url}
\usepackage{varwidth}
\usepackage{multirow}
%\usepackage{zref-xr}
%\zxrsetup{toltxlabel=true, tozreflabel=false}
\usepackage{cancel}
\usepackage[normalem]{ulem}
\usepackage{subfig, float}
\usepackage{rotating}
\usepackage{array}
\allowdisplaybreaks
\usepackage{fancyvrb}

\newcommand{\PreserveBackslash}[1]{\let\temp=\\#1\let\\=\temp}
\newcolumntype{C}[1]{>{\PreserveBackslash\centering}p{#1}}
\renewcommand{\d}{\mathrm{d}}

\numberwithin{equation}{section}
%
%\newtheorem{Theorem}{Theorem}
%\newtheorem{proc}{Procedure}
%
%\newtheorem{Lemma}{Lemma}
%\numberwithin{Lemma}{section}
%{
%	\newtheorem{Definition}{Definition}
%	\newtheorem{example}{Example}
%	\numberwithin{example}{section}
%	\newtheorem{fact}{Fact}
%	\numberwithin{fact}{section}
%	%\theoremstyle{remark}
%}

\newcommand{\1}{\mathbbm{1}}

\def\bth{{\boldsymbol{\theta}}}

\def\E{\mathbb{E}}

\def\L{{\mathcal L}}

\usepackage{array,bm,bbm,url,float}
\usepackage{multirow}% http://ctan.org/pkg/multirow
\usepackage{url}
\usepackage[ruled, linesnumbered, lined, commentsnumbered]{algorithm2e}
\usepackage{zref-xr}
\zxrsetup{toltxlabel=true, tozreflabel=false}
%\zexternaldocument*{Supp}

\newcommand{\beq}{\begin{equation}}
\newcommand{\eeq}{\end{equation}}
\newcommand{\beas}{\begin{eqnarray*}}
\newcommand{\eeas}{\end{eqnarray*}}
\newcommand{\bea}{\begin{eqnarray}}
\newcommand{\eea}{\end{eqnarray}}
\newcommand{\bei}{\begin{itemize}}
\newcommand{\eei}{\end{itemize}}
\newcommand{\ben}{\begin{enumerate}}
\newcommand{\een}{\end{enumerate}}

\newcommand{\argmin}{\mathop{\rm arg\min}}
\newcommand{\argmax}{\mathop{\rm arg\max}}

\newtheorem{Lemma}{Lemma}
\newtheorem{fact}{Fact}
\newtheorem{example}{Example}

\newtheorem{Definition}{Definition}
\newtheorem{Theorem}{Theorem}

\newcommand{\A}{{\mathbf{A}}}

\newcommand{\bmu}{\bm{\mu}}
\def\bbeta{{\boldsymbol{\beta}}}

\newcommand{\R}{\mathbb{R}}
\newcommand{\Pro}{{\mathbb{P}}}

\newcommand{\supp}{{\rm supp}}
\newcommand{\Var}{{\rm Var}}

\newcommand{\ignore}[1]{}
%\renewcommand{\algorithmicrequire}{\textbf{Input:}}
%\renewcommand{\algorithmicensure}{\textbf{Output:}}

%\DeclareGraphicsExtensions{.png,.pdf} % prefer png to pdf

%\newcommand{\qedhere}{}

%\setlength{\bibsep}{0pt plus 0.3ex}

\addtolength{\textwidth}{1in} \addtolength{\oddsidemargin}{-0.5in}
\addtolength{\textheight}{1.2in} \addtolength{\topmargin}{-0.62in}

%\received{April 2014}
%\revised{November 2015}

%\coaddress{
%Patrick O.\ Perry,
%Information, Operations, and Management Sciences Department,
%Stern School of Business,
%New York University,
%44 West 4th St,
%New York, NY 10012,~USA
%}
%\email{pperry@stern.nyu.edu}

\begin{document}

\title{The Cost of Privacy in Generalized Linear Models: Algorithms and Minimax Lower Bounds}
\date{}
	\author{}
{{\author{T. Tony Cai$^{1}$, Yichen Wang$^{1}$,  \ and \ Linjun Zhang$^{2}$\\
			Department of Statistics, University of Pennsylvania$^1$\\
			Department of Statistics, Rutgers University$^2$\\
			}}}
\maketitle
% abstract
\begin{abstract}
We propose differentially private algorithms for parameter estimation in both low-dimensional and high-dimensional sparse generalized linear models (GLMs)  by constructing private versions of projected gradient descent. We show that the proposed algorithms are nearly rate-optimal by characterizing their statistical performance and establishing privacy-constrained minimax lower bounds for GLMs. The lower bounds are obtained via a novel technique, which is based on Stein's Lemma and generalizes the tracing attack technique for privacy-constrained lower bounds. This lower bound argument can be of independent interest as it is applicable to general parametric models. Simulated and real data experiments are conducted to demonstrate the numerical performance of our algorithms.

\bigskip
\noindent\emph{KEYWORDS}: Differential privacy; High-dimensional data; Iterative hard thresholding algorithm; Generalized linear models; Optimal rate of convergence; Stein's Lemma
\end{abstract}
 
 %!TEX root = Privacy-GLM-JRSSB.tex
%%%%%%%%%%%%%%%%%%%%%%%%%%%%%%%%%%%%%%%%%%%%%%%%%%%%%%%%%%%%%%%%%%%
\section{Introduction}\label{sec: introduction}
 
Statistical and machine learning algorithms are gaining prominence in our daily lives, and so are demands for data privacy guarantees by these algorithms. The need for data privacy protection has in turn inspired the development of formal criteria and frameworks for data privacy, with differential privacy \cite{dwork2006our, dwork2006calibrating} (and its variants \cite{kasiviswanathan2011can, dwork2016concentrated, mironov2017renyi, dong2019gaussian}) being the most widely studied in theory \cite{dwork2010boosting, dwork2014algorithmic, dwork2015robust, abadi2016deep}, and adopted in practice \cite{apple2018privacy, abowd2016challenge, ding2017collecting, erlingsson2014rappor}. Much of its popularity can be attributed to the ease of building privacy-preserving algorithms that the differential privacy framework affords \cite{dwork2006calibrating, mcsherry2007mechanism, dwork2010boosting, dwork2014algorithmic}, but privacy can also come at a cost: it has been observed that requiring algorithms to be differentially private may sacrifice their statistical accuracy \cite{barber2014privacy, fienberg2010differential, lei2011differentially}.
  
The quest for privacy-preserving yet statistically accurate algorithms has since become a vibrant field of research. On the methodological front, a variety of popular computational and statistical methods has seen their differentially private counterparts, for some examples: causal inference \cite{lee2019private, lee2019privacy}, deep learning \cite{abadi2016deep, phan2016differential}, and multiple testing \cite{dwork2018differentially}. On the theoretical side, however, the study of statistical optimality of differentially private algorithms focuses more heavily on the simpler and more stylized problems, such as mean estimation \cite{barber2014privacy, kamath2018privately, kamath2020private}, top-$k$ selection \cite{bafna2017price, steinke2017tight}, and linear regression \cite{cai2019cost}.
 
In this paper, we study a broadly applicable model, the generalized linear model (GLM) \cite{nelder1972generalized, mccullagh1989generalized}, by proposing differentially private algorithms for parameter estimation with theoretical guarantees. We characterize their statistical performance, and prove their near-optimality by establishing minimax lower bounds. In this work, we consider both the classical low-dimensional setting and the contemporary high-dimensional setting. 
  
\subsection{Our Contribution and Related Literature} \label{sec: contribution} 
Our main contribution is two-fold: constructing differentially private algorithms for GLM parameter estimation (Section \ref{sec: glm upper bounds}), and establishing the near-optimality for the algorithms via privacy-constrained minimax lower bounds for GLM parameter estimation (Section \ref{sec: glm lower bounds}).

\textbf{Private algorithms for GLMs.} We construct algorithms, based on noisy gradient descent \cite{bassily2014private, bassily2019private} and noisy iterative hard thresholding \cite{cai2019cost, jain2014iterative, blumensath2009iterative}, for privately estimating the vector of GLM parameters. 

There has been an extensive literature on private logistic regression \cite{chaudhuri2009privacy, chaudhuri2011differentially, zhang2020privately}, and more broadly, private empirical risk minimization \cite{bassily2014private, kifer2012private, bassily2019private}. Our work is inspired by but distinct from previous works in its focus on parameter estimation accuracy, as opposed to excess risk of the solution; for statisticians, parameter estimation accuracy is also arguably a more informative measure of performance than excess risks. Since the log-likelihood function of GLM in general lacks strong convexity \cite{sur2017likelihood}, bounding the distance between an estimator and the true parameter requires more refined analysis of the algorithms. Theorem \ref{thm: non-sparse glm upper bound} shows that $(\varepsilon, \delta)$-differentially private estimation of the GLM parameters can be achieved by the noisy gradient descent algorithm (Algorithm \ref{algo: private glm} based on \cite{bassily2014private}) with an extra privacy cost of $\tilde O\left(\frac{d^2\log(1/\delta)}{n^2\varepsilon^2}\right)$ in terms of the squared $\ell_2$ risk, where $n, d$ respectively denote the sample size and the dimension of the parameter vector.

The difficulty posed by a ``flat'' log-likelihood landscape \cite{negahban2009unified, agarwal2010fast} is even more salient in the high-dimensional setting: when the number of parameters exceeds the sample size, strong convexity is categorically impossible for any objective function.  We instead leverage the sparsity of the parameter vector to design a noisy iterative hard thresholding algorithm (Algorithm \ref{algo: noisy iterative hard thresholding}), which attains convergence in $O(\log n)$ iterations and incurs an extra privacy cost of $\tilde O\left(\frac{(s^*\log d)^2\log(1/\delta)}{n^2\varepsilon^2}\right)$ in terms of the squared $\ell_2$ risk (Theorem \ref{thm: glm upper bound}), where $s^*$ denotes the sparsity of the parameter vector. In particular, the linear dependence on sparsity and logarithmic dependence on the ambient dimension suggest that differentially private estimation remains feasible in high dimensions, which contrasts with the impossibility of high-dimensional estimation \cite{duchi2018minimax, duchi2018right} under the more restrictive local differential privacy framework \cite{kasiviswanathan2011can}. The technical analysis of our algorithm can also be extended to private estimation in other sparse $M$-estimation problems that enjoy restricted strong convexity and restricted smoothness properties.

\textbf{Minimax lower bounds.} We develop a novel lower bound technique based on Stein's Lemma and show that the statistical accuracy of our algorithms are optimal up to logarithmic factors in the sample size, by establishing privacy-constrained minimax lower bounds for GLM parameter estimation (Theorems \ref{thm: low-dim glm lb} and \ref{thm: high-dim glm lb}).

Our strategy for establishing these lower bounds entails a broad generalization of the ``tracing attack'' techniques, first developed by \cite{bun2014fingerprinting, dwork2015robust} and further applied to various statistical problems, including sharp lower bounds for classical Gaussian mean estimation and linear regression \citep{kamath2018privately, cai2019cost}, as well as lower bounds for sparse mean estimation and linear regression in the high-dimensional setting \citep{steinke2017tight, cai2019cost}.
In these previous works, the design of tracing attacks seems to be largely ad hoc and catered to specific distribution families such as Gaussian or Beta-Binomial; a general principle for designing attacks has not been observed. Although some promising proposals have been made along this direction \citep{shokri2017membership, murakonda2019ultimate}, it remains unclear whether the suggested attacks in these works actually imply any lower bound results.

In the present paper, we address this problem by proposing a ``score attack'' based on and named after the score statistic, that is the gradient of the log-likelihood function with respect to the parameter vector. Not only does the score attack imply lower bounds for the GLM problems in the present paper, it also opens paths for lower bound analysis in a much greater range of statistical problems with differential privacy constraints, as the form of our score attack and its theoretical properties (Theorem \ref{thm: score attack general}) are applicable to general parametric families of distributions .

\subsection{Structure of the Paper}\label{sec: structure}
The paper is organized as follows. Section \ref{sec: formulation} formulates the problem and provides necessary background information. Section \ref{sec: glm upper bounds} describes the algorithms and analyzes in detail their privacy guarantees as well as statistical accuracy. Section \ref{sec: glm lower bounds} introduces the score attack framework for minimax lower bounds, and applies the framework to establish minimax lower bounds for the GLM problems. Section \ref{sec: experiments} provides simulated and real data examples that illustrate the numerical performance of our algorithms. Section~\ref{sec: discussion} summarizes our work and discusses its implication for future research. Main technical results are proved in Section \ref{sec: proofs}, and other auxiliary results in the Appendix. 

\textbf{Notation.} For real-valued sequences $\{a_n\}, \{b_n\}$, we write $a_n \lesssim b_n$ if $a_n \leq cb_n$ for some universal constant $c \in (0, \infty)$, and $a_n \gtrsim b_n$ if $a_n \geq c'b_n$ for some universal constant $c' \in (0, \infty)$.  We say $a_n \asymp b_n$ if $a_n \lesssim b_n$ and $a_n \gtrsim b_n$. $c, C, c_0, c_1, c_2, \cdots, $ and so on refer to universal constants in the paper, with their specific values possibly varying from place to place.

 For a vector $\bm v \in \R^d$ and a subset $S \subseteq [d]$, we use $\bm v_S$ to denote the restriction of vector $\bm v$ to the index set $S$. We write $\supp(\bm v) := \{j \in [d]: v_j \neq 0\}$. $\|\bm v\|_p$ denotes the vector $\ell_p$ norm for $ 1\leq p \leq \infty$, with an additional convention that $\|\bm v\|_0$ denotes the number of non-zero coordinates of $\bm v$. For a function $f: \R \to \R$, $\|f\|_\infty$ denotes the the essential supremum of $|f|$. For $t \in \R$ and $R > 0$, let $\Pi_R(t)$ denote the projection of $t$ onto the closed interval $[-R, R]$. For a random variable $X$, we use $\mathrm{ess}\sup(X)=\inf\{c:\Pro(X<c)=1\}$ to denote  the essential supremum of $X$.

%!TEX root = Privacy-GLM-JRSSB.tex
%%%%%%%%%%%%%%%%%%%%%%%%%%%%%%%%%%%%%%%%%%%%%%%%%%%%%%%%%%%%%%%%%%%

\section{Problem Formulation}\label{sec: formulation}
In this section, we present a detailed description of the scope of statistical models (generalized linear models) and algorithms (differentially private algorithms) to be studied in this paper, and formally define the ``cost of privacy'' in terms of minimax risks.

\subsection{Generalized Linear Models}
We study parameter estimation in generalized linear models. In a generalized linear model, the response variable $y \in \R$, conditional on the design vector $\bm x \in \R^d$,  follows a distribution of the natural exponential family form, 
\begin{align}\label{eq: glm general form}
f_{\bbeta^*}(y|\bm x) = h(y, \sigma)\exp\left(\frac{(\bm x^\top \bbeta^*)g(y) - \psi(\bm x^\top \bbeta^*)}{c(\sigma)}\right),
\end{align}
where $c(\sigma)$ is a nuisance scale parameter and $\psi(\cdot)$ is the cumulant generating function of $y$ given $\bm x$. The generalized linear model is, first of all, a generalization of the linear model: setting $g(y) = y$, $\psi(u) = u^2/2$  and $c(\sigma) = \sigma^2$ in \eqref{eq: glm general form} recovers the (Gaussian) linear model. Model \eqref{eq: glm general form} also subsumes other special cases such as logistic and multinomial regression.

Throughout the paper, our goal is estimating $\bbeta^* \in \R^d$ using an i.i.d. sample $\{(y_i, \bm x_i)\}_{i \in [n]}$ drawn from the model \eqref{eq: glm general form}. We shall consider both the classical setting, where the dimension $d$ is dominated by the sample size $n$, and the high-dimensional sparse setting where $d$ potentially dominates $n$ but only a small proportion of $\bbeta^*$'s coordinates are non-zero.

In either case, the issue of data privacy is relevant, as any nontrivial estimator of $\bbeta^*$ must take the data $\{(y_i, \bm x_i)\}_{i \in [n]}$ as input. Before considering concrete estimators and their performance, let us first define the desired criteria of privacy protection.

\subsection{Differential Privacy}
Intuitively speaking, an algorithm $M$ applied over a data set $\bm X$ compromises data privacy if an adversary is able to correctly infer from the algorithm's output $M(\bm X)$ whether an individual datum $\bm x$ belongs to $\bm X$ or not. 

The notion of differential privacy formalizes this idea by requiring that, for every pair of data sets $\bm X$ and $\bm X'$ that differ by a single datum, hereafter called ``adjacent data sets'', the algorithm $M$ is randomized so that the distributions of $M(\bm X)$ and of $M(\bm X')$ are close to each other. 

\begin{Definition}[Differential Privacy, \cite{dwork2006calibrating}]\label{def: differential privacy}
A randomized algorithm $M: \mathcal X^n \to \mathcal R$ is $(\varepsilon, \delta)$-differentially private if for every pair of adjacent data sets $\bm X, \bm X' \in \mathcal X^n$ that differ by one individual datum and every (measurable) $S \subseteq \mathcal R$, 
\begin{align*}
	\Pro\left(M(\bm X) \in S\right) \leq e^\varepsilon \cdot \Pro\left(M(\bm X') \in S\right) + \delta,
\end{align*}
where the probability measure $\Pro$ is induced by the randomness of $M$ only.
\end{Definition}
The definition guarantees that, for small values of $\varepsilon, \delta \geq 0$, the distributions of $M(\bm X)$ and $M(\bm X')$ are almost indistinguishable. But beyond its strong privacy guarantees, the notion of  differential privacy is desirable also for the ease and flexibility of constructing differentially private algorithms. We summarize here some useful facts for our construction of algorithms in this paper.

First, a large class of non-private algorithms can be made differentially private via random perturbations. 
\begin{fact}[The Laplace and Gaussian mechanisms, \cite{dwork2006calibrating, dwork2014algorithmic}] \label{fc: laplace and gaussian mechanisms}
Let $M: \mathcal X^n \to \R^d$ be an algorithm that is not necessarily differentially private.
\begin{itemize}
\item Suppose $\sup_{\bm X, \bm X' \text{adjacent}} \|M(\bm X) - M(\bm X')\|_1 < B < \infty$. For $\bm w \in \R^d$ with its coordinates $w_1, w_2, \cdots, w_d \stackrel{\text{i.i.d.}}{\sim}$ Laplace$(B/\varepsilon)$, $M(\bm X) + \bm w$ is $(\varepsilon, 0)$-differentially private. 
\item If instead we have $\sup_{\bm X, \bm X' \text{adjacent}} \|M(\bm X) - M(\bm X')\|_2 < B < \infty$, for $\bm w \sim N_d(\bm 0, \sigma^2\bm I)$ with $\sigma^2 = 2B^2\log(2/\delta)/\varepsilon$, $M(\bm X) + \bm w$ is $(\varepsilon, \delta)$-differentially private.
\end{itemize}
That is, if a non-private algorithm's output is not too sensitive to changing any single datum in the input data set, perturbing the algorithm with Laplace or Gaussian noises produces a differentially private algorithm.
\end{fact}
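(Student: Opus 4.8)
The plan is to bound, for every pair of adjacent data sets, the ratio of the output densities of the perturbed algorithm (its \emph{privacy loss}) and then to convert that bound into the set-wise inequality of Definition \ref{def: differential privacy}. Fix adjacent $\bm X, \bm X' \in \mathcal X^n$ and write $\bm a := M(\bm X)$, $\bm a' := M(\bm X')$, so that $\|\bm a - \bm a'\|_1 < B$ in the Laplace case and $\|\bm a - \bm a'\|_2 < B$ in the Gaussian case. In both cases the released quantity $M(\bm X) + \bm w$ has a density on $\R^d$, which I denote $p_{\bm X}(\cdot)$, and similarly $p_{\bm X'}(\cdot)$ for the input $\bm X'$.

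For the Laplace mechanism I would compute the ratio pointwise: since $p_{\bm X}(\bm z) = \prod_{j=1}^d \frac{\varepsilon}{2B}\exp(-\varepsilon|z_j - a_j|/B)$, the coordinatewise triangle inequality $|u| - |v| \le |u-v|$ gives
\begin{align*}
\frac{p_{\bm X}(\bm z)}{p_{\bm X'}(\bm z)} = \exp\!\left(\frac{\varepsilon}{B}\sum_{j=1}^d\big(|z_j - a'_j| - |z_j - a_j|\big)\right) \le \exp\!\left(\frac{\varepsilon \|\bm a - \bm a'\|_1}{B}\right) \le e^\varepsilon
\end{align*}
for every $\bm z \in \R^d$. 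Integrating the pointwise bound $p_{\bm X} \le e^\varepsilon p_{\bm X'}$ over an arbitrary measurable $S$ yields $\Pro(M(\bm X) + \bm w \in S) \le e^\varepsilon \Pro(M(\bm X') + \bm w \in S)$, which is $(\varepsilon, 0)$-differential privacy.

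For the Gaussian mechanism I would instead analyze the privacy loss as a random variable. Set $L(\bm z) := \log\frac{p_{\bm X}(\bm z)}{p_{\bm X'}(\bm z)} = \frac{1}{2\sigma^2}\big(\|\bm z - \bm a'\|_2^2 - \|\bm z - \bm a\|_2^2\big)$; by rotational invariance of the isotropic Gaussian I may assume $\bm a' - \bm a = v\bm e_1$ with $|v| \le B$, and then, writing $\bm z = \bm a + \sigma\bm g$ with $\bm g \sim N_d(\bm 0, \bm I)$ when $\bm z \sim p_{\bm X}$, a short computation gives $L(\bm z) = \frac{v^2}{2\sigma^2} - \frac{v g_1}{\sigma}$, a univariate Gaussian with mean $\mu := v^2/(2\sigma^2)$ and variance $2\mu$. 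The next step is the standard ``good event'' reduction: if $\Pro_{\bm z \sim p_{\bm X}}(L(\bm z) \le \varepsilon) \ge 1 - \delta$, then for any measurable $S$,
\begin{align*}
\Pro(M(\bm X) + \bm w \in S) \le \int_{S \cap \{L \le \varepsilon\}} p_{\bm X}(\bm z)\,\d\bm z + \delta \le e^\varepsilon \int_S p_{\bm X'}(\bm z)\,\d\bm z + \delta ,
\end{align*}
so it suffices to verify $\Pro(L(\bm z) > \varepsilon) \le \delta$. This follows from a Gaussian tail estimate applied to $L \sim N(\mu, 2\mu)$ with $\mu \le B^2/(2\sigma^2)$, after substituting the prescribed value of $\sigma^2$ and using that $\varepsilon, \delta$ lie in the relevant small range.

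The main obstacle is the Gaussian case. Unlike for the Laplace mechanism, the density ratio $p_{\bm X}/p_{\bm X'}$ is unbounded, so there is no pointwise comparison and one genuinely needs the tail bound on the privacy loss $L$; making the resulting inequality $\Pro(L > \varepsilon) \le \delta$ come out with exactly the stated variance $\sigma^2$ forces one to track the lower-order term $\mu$ and the polynomial prefactor in the Gaussian tail carefully. The ``good event'' decomposition is routine measure theory but is precisely where the additive $\delta$ in Definition \ref{def: differential privacy} enters; everything else is bookkeeping.
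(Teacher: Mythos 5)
The paper offers no proof of this statement: it is recorded as a ``Fact'' with citations to \cite{dwork2006calibrating, dwork2014algorithmic}, so there is nothing internal to compare your argument against. Your outline is the standard one from those references, and the Laplace half is complete and correct as written: the pointwise density-ratio bound via the $\ell_1$ triangle inequality, integrated over $S$, is exactly the textbook proof. The Gaussian half is also the right architecture (reduce to one dimension by rotational invariance, identify the privacy loss as $N(\mu, 2\mu)$ with $\mu = v^2/2\sigma^2$, and use the good-event decomposition to convert a tail bound on $L$ into $(\varepsilon,\delta)$-DP); the one-sided event $\{L \le \varepsilon\}$ suffices since the reverse inequality follows by swapping $\bm X$ and $\bm X'$.

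The one genuine problem is the step you defer to ``substituting the prescribed value of $\sigma^2$'': it does not close with the variance as printed in the Fact. With $\sigma^2 = 2B^2\log(2/\delta)/\varepsilon$ one gets $\mu \le \varepsilon/(4\log(2/\delta))$, so the standardized threshold $(\varepsilon-\mu)/\sqrt{2\mu}$ is of order $\sqrt{\varepsilon\log(2/\delta)}$, which tends to $0$ as $\varepsilon \to 0$ and hence cannot force $\Pro(L>\varepsilon)\le\delta$; the tail bound requires a threshold of order $\sqrt{2\log(1/\delta)}$, i.e.\ $\sigma^2 \asymp B^2\log(1/\delta)/\varepsilon^2$. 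This is evidently a typo in the Fact rather than in your argument --- the paper's own Algorithm \ref{algo: private glm} injects noise with variance proportional to $\log(2T/\delta)/(\varepsilon/T)^2$, consistent with the $\varepsilon^{-2}$ scaling --- but as a proof of the statement as literally written, your final step would fail, and you should either carry out the tail computation explicitly with $\sigma^2 = 2B^2\log(2/\delta)/\varepsilon^2$ or note the discrepancy. The rest is, as you say, bookkeeping.
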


Second, differential privacy is preserved under compositions, albeit with weaker privacy parameters. The composition theorems, stated below, explicitly quantify how privacy parameters degrade as private algorithms are composited.
\begin{fact}[Composition Theorems]\label{fc: composition theorems}
Consider an $(\varepsilon, \delta)$-differentially private algorithm $M_0: \mathcal X^n \to \mathcal R_0$ and $(\varepsilon, \delta)$-differentially private algorithms $M_{i}: \mathcal X^n \times \mathcal R_{i-1} \to \mathcal R_i$ for $i = 1, 2, \cdots, k-1$. Consider the composite algorithm $M = M_{k-1} \circ M_{k-2} \circ \cdots \circ M_0$.
\begin{itemize}
\item Composition theorem \cite{dwork2006calibrating}. $M$ is $(k\varepsilon, k\delta)$-differentially private.
\item Advanced composition \cite{dwork2010boosting}. For every $\delta' > 0$, $M$ is $(\sqrt{2k\log(1/\delta')}\varepsilon + k(e^\varepsilon-1)\varepsilon, k\delta + \delta')$-differentially private.
\end{itemize}
\end{fact}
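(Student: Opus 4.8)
The plan is to prove the two parts separately, each time reducing the $k$-fold \emph{adaptive} composition to a statement about a single ``privacy loss'' quantity. Throughout, fix adjacent data sets $\bm X, \bm X'$ and a measurable event $S$ in the joint output space $\mathcal R_{k-1}$, and write the output of $M$ on input $\bm X$ as the sequence $O = (O_0, O_1, \ldots, O_{k-1})$ with $O_i = M_i(\bm X, O_{i-1})$, so that the $M_i$ are seen as mechanisms in their first argument, uniformly over the second.

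\textbf{Basic composition.} First I would establish the two-fold case: if $M_0$ is $(\varepsilon_0,\delta_0)$-DP and $M_1$ is $(\varepsilon_1,\delta_1)$-DP, then $(M_0(\bm X), M_1(\bm X, M_0(\bm X)))$ is $(\varepsilon_0+\varepsilon_1, \delta_0+\delta_1)$-DP. Conditioning on the first output $r$ and integrating, with $S_r := \{r_1 : (r,r_1) \in S\}$,
\begin{align*}
\Pro(O \in S) = \int \Pro\big(M_1(\bm X, r) \in S_r\big)\, d\mu_{M_0(\bm X)}(r),
\end{align*}
where $\mu_{M_0(\bm X)}$ is the law of $M_0(\bm X)$. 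Bounding the integrand by $e^{\varepsilon_1}\Pro(M_1(\bm X', r) \in S_r) + \delta_1$ via the DP of $M_1$, then using that $(\varepsilon_0,\delta_0)$-DP of $M_0$ transfers to expectations of $[0,1]$-valued functions (write $\E[g] = \int_0^1 \Pro(g>t)\,dt$ and apply the definition level by level), yields the claim. The one subtle point is getting the $\delta$'s to add rather than appear with an $e^{\varepsilon}$ weight; this is handled by phrasing DP through the approximate max-divergence $D_\infty^\delta$ and verifying its subadditivity under composition. The general statement then follows by induction on $k$: take $M_0$ to be the composition of the first $k-1$ mechanisms, which is $((k-1)\varepsilon,(k-1)\delta)$-DP by the inductive hypothesis, and $M_1 = M_{k-1}$.

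\textbf{Advanced composition.} The key object is the privacy loss $L := \log \frac{d\mu_{M(\bm X)}}{d\mu_{M(\bm X')}}(O)$ with $O \sim M(\bm X)$. I would proceed in three steps. (i) \emph{Reduction to pure DP:} on any fixed pair of adjacent inputs, an $(\varepsilon,\delta)$-DP mechanism can be coupled to an $(\varepsilon,0)$-DP mechanism so that the two agree outside an event of probability at most $\delta$; carrying this through all $k$ mechanisms contributes the $k\delta$ term and lets us assume each $M_i$ is $(\varepsilon,0)$-DP. (ii) \emph{Per-step bounds:} writing $L = \sum_{i=1}^k L_i$ with $L_i$ the loss at step $i$, pure $(\varepsilon,0)$-DP gives $|L_i| \le \varepsilon$ almost surely, and combining this with the identity $\E_{M_i(\bm X')}[e^{L_i} \mid O_0,\ldots,O_{i-1}] = 1$ gives, by a short convexity estimate on $[-\varepsilon,\varepsilon]$, the conditional-mean bound $\E[L_i \mid O_0,\ldots,O_{i-1}] \le \varepsilon(e^\varepsilon - 1)$. (iii) \emph{Concentration:} the centered partial sums of the $L_i$ form a martingale with bounded increments, so a Hoeffding-type martingale inequality (using both the almost-sure bound and the mean bound from (ii)) gives $\Pro(L > \varepsilon') \le \delta'$ with $\varepsilon' = k\varepsilon(e^\varepsilon-1) + \varepsilon\sqrt{2k\log(1/\delta')}$. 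Finally, the elementary fact that ``$\Pro(L > \varepsilon') \le \delta'$ implies $(\varepsilon',\delta')$-DP'' converts this into the stated guarantee, and adding back the $k\delta$ from step (i) finishes the proof.

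\textbf{Main obstacle.} The routine parts are the conditioning/integration identity and the martingale concentration; the genuinely delicate points are (a) the measure-theoretic handling of adaptivity — each $M_i$ sees all earlier outputs, so the $L_i$ are dependent and the martingale must be built with respect to the natural filtration rather than hoping for independence — and (b) the two $\delta$-bookkeeping reductions, namely subadditivity of $D_\infty^\delta$ for basic composition and the coupling that peels off exactly $k\delta$ for advanced composition, which are precisely where a naive argument would lose a spurious $e^{\varepsilon}$ factor on the $\delta$ terms. I expect (b) to require the most care.
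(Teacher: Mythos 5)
The paper states this as a known Fact and does not prove it, deferring to the cited references; your outline is a faithful reconstruction of exactly those standard arguments (basic composition via conditioning and subadditivity of the approximate max-divergence, advanced composition via the reduction to pure DP, the per-step bound $\E[L_i\mid \text{past}]\le \varepsilon(e^{\varepsilon}-1)$, and Azuma's inequality for the privacy-loss martingale). Your plan is correct, the constants match the stated guarantee, and you have correctly flagged the two genuinely delicate points (the $\delta$-bookkeeping and the adaptive filtration), so there is nothing to add.
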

It is worth noting that the notion of ``composition'' considered here, termed ``$k$-fold adaptive composition'' in the literature, is more general than the composition of functions in the usual sense: each part of the composite algorithm may access the same data set, or a different data set, after receiving the output from its previous part. In fact, if a differentially private algorithm is simply post-processed independent of data, there is no deterioration of privacy whatsoever.
\begin{fact}[Post-processing \cite{dwork2006calibrating, wasserman2010statistical}]\label{fc: post-processing}
	Consider an $(\varepsilon, \delta)$-differentially private algorithm $M: \mathcal X^n \to \mathcal R$. If $g$ is a measurable function, then $g(M)$ is also $(\varepsilon, \delta)$-differentially private.
\end{fact}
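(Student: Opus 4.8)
The plan is to reduce the claim directly to the definition of differential privacy via a change-of-variables argument on the output space, with no appeal to sensitivity or noise calibration. Write $g : \mathcal R \to \mathcal R'$ for the post-processing map. First I would fix an arbitrary pair of adjacent data sets $\bm X, \bm X' \in \mathcal X^n$ and an arbitrary measurable set $T \subseteq \mathcal R'$. Since $g$ is measurable, the preimage $S := g^{-1}(T)$ is a measurable subset of $\mathcal R$, and the events $\{g(M(\bm X)) \in T\}$ and $\{M(\bm X) \in S\}$ are identical (and likewise with $\bm X'$ in place of $\bm X$). Applying Definition~\ref{def: differential privacy} to $M$ with the set $S$ then gives
\[
\Pro\bigl(g(M(\bm X)) \in T\bigr) = \Pro\bigl(M(\bm X) \in S\bigr) \le e^{\varepsilon}\,\Pro\bigl(M(\bm X') \in S\bigr) + \delta = e^{\varepsilon}\,\Pro\bigl(g(M(\bm X')) \in T\bigr) + \delta,
\]
and since $\bm X, \bm X', T$ were arbitrary this is precisely the statement that $g(M)$ is $(\varepsilon,\delta)$-differentially private. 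Throughout, the probability is over the internal randomness of $M$ only, exactly as in Definition~\ref{def: differential privacy}.

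For completeness I would also record the extension to randomized post-processing, which is what is typically needed when later steps of an algorithm inject fresh noise. Realize such a map through an auxiliary random element $W$, drawn independently of $M$ and of the data, so that the output can be written $\tilde g(M(\bm X), W)$ for a deterministic measurable $\tilde g$. Conditioning on $W = w$, the map $m \mapsto \tilde g(m, w)$ is deterministic and measurable, so the displayed inequality above applies for each fixed $w$; integrating over the law of $W$ (using independence, so that the conditional law of $M(\bm X)$ given $W = w$ is just the law of $M(\bm X)$) preserves the inequality and yields the same conclusion for the randomized map.

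The proof involves essentially no obstacle: the content of the statement is just that the $(\varepsilon,\delta)$-DP condition is a bound between pushforward measures, and applying a measurable map only pushes those measures forward further. The only point requiring a modicum of care is the measure-theoretic bookkeeping — using measurability of $g$ to guarantee that $g^{-1}(T)$ is measurable (so that Definition~\ref{def: differential privacy} is applicable to it), and, in the randomized case, setting up $(M(\bm X), W)$ on a common probability space with $W$ independent so that conditioning on $W$ and then integrating is legitimate.
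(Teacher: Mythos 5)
Your argument is correct and is the standard proof of the post-processing property; the paper itself states this as a cited Fact (from the differential privacy literature) and supplies no proof, so there is nothing to compare against beyond the canonical argument you give. The deterministic case via the preimage $S = g^{-1}(T)$ is exactly right, and your extension to randomized post-processing — conditioning on independent auxiliary randomness $W$, applying the deterministic case for each fixed $w$, and integrating (which preserves the inequality since both sides are affine in the conditional probabilities) — is the same decomposition into deterministic maps used in the standard references. The only bookkeeping worth making explicit in the randomized case is that $w \mapsto \Pro(\tilde g(M(\bm X), w) \in T)$ is measurable (so the integral is well defined), which follows from joint measurability of $\tilde g$; otherwise the proof is complete.
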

The composition and post-processing properties will be particularly useful for analyzing the privacy guarantees of the iterative algorithms considered in Section \ref{sec: glm upper bounds}. 

\subsection{The Cost of Privacy in Generalized Linear Models}
Once an algorithm is known to be differentially private, it is natural to ask whether the privacy guarantees come at the expense of accuracy: as seen in Fact \ref{fc: laplace and gaussian mechanisms}, often random perturbations are introduced to achieve differential privacy. In this paper, we assess the accuracy of algorithms via the lens of \textit{minimax risk}, defined as follows.

Let $\{f_\bth: \bth \in \Theta\}$ be a family of statistical models supported over $\mathcal X$ and indexed by the parameter $\bth$. Let $\bm X = \{\bm x_1, \bm x_2, \cdots, \bm x_n\}$ be an i.i.d. sample drawn from $f_{\bth^*}$ for some unknown $\bth^* \in \Theta$, $M: \mathcal X^n \to \Theta$ be an estimator. Let $\ell: \Theta \times \Theta \to \R_{+}$ be a metric on $\Theta$ and $\rho: \R_{+} \to \R_{+}$ be an increasing function. The (statistical) risk of $M$ is given by $\E\rho(\ell(M(\bm X), \bth^*))$, where the expectation is taken over the data distribution $f_{\bth^*}$ and the randomness of estimator $M$. 

Because the risk $\E\rho(\ell(M(\bm X), \bth^*))$ depends on the unknown $\bth^*$ and can be trivially minimized by setting $M(\bm X) \equiv \bth^*$, a more sensible measure of performance is the maximum risk over the entire class of distributions $\{f_\bth: \bth \in \Theta\}$, $\sup_{\bth \in \Theta} \E\rho(\ell(M(\bm X), \bth))$.

The minimax risk of estimating $\bth \in \Theta$ is then given by
\begin{align} \label{eq: unconstrained minimax risk}
	\inf_M \sup_{\bth \in \Theta} \E\rho(\ell(M(\bm X), \bth)).
\end{align}
By definition, this quantity characterizes the best possible worst-case performance that an estimator can hope to achieve over the class of models $\{f_\bth: \bth \in \Theta\}$.

In this paper, we study a \textit{privacy-constrained} version of the minimax risk: let $\mathcal M_{\varepsilon, \delta}$ be the collection of all $(\varepsilon, \delta)$-differentially private algorithms mapping from $\mathcal X^n$ to $\Theta$, we consider
\begin{align}\label{eq: privacy-constrained minimax risk}
\inf_{M \in \mathcal M_{\varepsilon, \delta}} \sup_{\bth \in \Theta} \E\rho(\ell(M(\bm X), \bth)).
\end{align}
As $\mathcal M_{\varepsilon, \delta}$ is a proper subset of all possible estimators, the privacy-constrained minimax risk as defined above will be at least as large as the unconstrained minimax risk, with the difference between these two minimax risks, \eqref{eq: unconstrained minimax risk} and \eqref{eq: privacy-constrained minimax risk} being the ``cost of privacy''.

In our case, the statistical models of interest are the generalized linear models \eqref{eq: glm general form} indexed by the parameter vector $\bbeta^*$, and we would like to precisely characterize the cost of privacy in GLM parameter estimation problems. This goal will be achieved in two steps: Section \ref{sec: glm upper bounds} provides upper bounds of the privacy-constrained minimax risk \eqref{eq: privacy-constrained minimax risk} via analysis of differentially private algorithms; Section \ref{sec: glm lower bounds} establishes corresponding lower bounds of the privacy-constrained minimax risk.
%!TEX root = Privacy-GLM-JRSSB.tex
%%%%%%%%%%%%%%%%%%%%%%%%%%%%%%%%%%%%%%%%%%%%%%%%%%%%%%%%%%%%%%%%%%%
 
\section{Differentially Private Algorithms for GLMs}\label{sec: glm upper bounds}
In this section, we develop differentially private algorithms for estimating parameters $\bbeta^* \in \R^d$ in the generalized linear model
\begin{align}
f_{\bbeta^*} (y|\bm x) = h(y, \sigma)\exp\left(\frac{\bm x^\top \bbeta^*g(y) - \psi(\bm x^\top \bbeta^*)}{c(\sigma)}\right); \bm x \sim f_{\bm x}. \label{eq: glm definition} 
\end{align}
With an i.i.d. sample $\bm Z = \{\bm z_i\}_{i \in [n]} = \{(y_i, \bm x_i)\}_{i \in [n]}$ drawn from the model \eqref{eq: glm definition}, the general approach is to minimize the following (scaled) negative log-likelihood function in a differentially private fashion:
\begin{align} \label{eq: glm log likelihood}
\L_n(\bbeta; \bm Z) = \frac{1}{n}\sum_{i=1}^n \left(\psi(\bm x_i^\top \bbeta) - g(y_i)\bm x_i^\top \bbeta\right).
\end{align}
We may write $\L_n(\bbeta)$ as a shorthand when the relevant data set is unambiguous.

Since $\L_n$ is convex in $\bbeta$, the problem is an instance of differentially private convex optimization, for which there are many well-studied methods. Roughly speaking, these methods can be organized into two categories depending on the form of random perturbations involved: ``one-shot'' methods \cite{chaudhuri2009privacy, chaudhuri2011differentially, kifer2012private} in which random noises are added only once to the objective function or before reporting the final solution, or iterative, gradient-descent type methods \cite{bassily2014private, bassily2019private} in which random noises are added to each iteration of the algorithm. 

As discussed in Section \ref{sec: contribution}, existing convergence results for these methods are focused on the excess risk of a differentially private minimizer $\bbeta^{\text{priv}}$ of \eqref{eq: glm log likelihood}, compared to the non-private solution $\hat\bbeta$, that is,  $\E\L_n(\bbeta^{\text{priv}}) - \E\L_n(\hat\bbeta)$. The lack of strong convexity in the generalized linear model \eqref{eq: glm definition} precludes the possibility of obtaining bounds for parameter estimation from excess risk bounds. For example, for the logistic regression model, which is obtained from \eqref{eq: glm definition} by setting $g(y) = y$ and $\psi(u) = \log(1+e^u)$. The Hessian of $\L_n$, 
\begin{align*}
	\nabla^2 \L_n(\bbeta)  = \frac{1}{n}\sum_{i=1}^n \psi''(\bm x_i^\top \bbeta) \bm x_i \bm x_i^\top = \frac{1}{n}\sum_{i=1}^n \frac{e^{\bm x_i^\top \bbeta}}{(1 + e^{\bm x_i^\top \bbeta})^2} \bm x_i \bm x_i^\top,
\end{align*}
has its smallest eigenvalue approaching $0$ as $\|\bbeta\|_2 \to \infty$ even in the favorable setting where $n$ is much greater than the dimension of $\bbeta$. When $n$ is dominated by the dimension, $\nabla^2 \L_n(\bbeta)$ is simply rank-deficient and therefore cannot be positive-definite. The absence of strong convexity in GLMs also implies that the ``one-shot'' algorithms are not guaranteed to be differentially private (see \cite{chaudhuri2011differentially, kifer2012private} and the references therein), unless a quadratic penalty term is added to $\L_n$.

Our approach, then, is to consider gradient descent type algorithms. Although strong convexity fails to hold for $\L_n$ globally, it turns out that $\L_n$ satisfies a ``restricted'' and ``local'' sense of strong convexity \cite{negahban2009unified}, to be made precise in Section \ref{sec: non-sparse glm upper bound}, is sufficient for the noisy gradient descent algorithm to enjoy fast convergence and optimal statistical accuracy. In Section \ref{sec: non-sparse glm upper bound}, we analyze in detail the privacy guarantees and convergence rates of the noisy gradient descent algorithm, which works well for the classical setting of $d = o(n)$. 

The high-dimensional, $d \gtrsim n$ setting is considered in Section \ref{sec: sparse glm upper bound}. In this case, consistent estimation of $\bbeta^*$ is not possible even without privacy constraints, unless additional assumptions such as sparsity of $\bbeta^*$ are imposed. When $\bbeta^*$ is indeed sparse, we introduce a noisy iterative hard thresholding algorithm that allows the random perturbations to scale with the sparsity (``intrinsic dimension'') of $\bbeta^*$ rather than the ambient dimension $d$, thereby achieving the optimal statistical accuracy with privacy constraints.

\subsection{The Classical Low-dimensional Setting}\label{sec: non-sparse glm upper bound}
We first consider the classical low-dimensional setting where $d = o(n)$. For minimizing the negative GLM log-likelihood
\begin{align*}
\L_n(\bbeta; \bm Z) = \frac{1}{n}\sum_{i=1}^n \left(\psi(\bm x_i^\top \bbeta) - g(y_i)\bm x_i^\top \bbeta\right)
\end{align*}
in a differentially private fashion, we consider the noisy gradient descent algorithm, first proposed by \cite{bassily2014private} in its generic form for arbitrary convex functions. The following algorithm is a specialization the generic algorithm to GLMs.

\vspace{2mm}

\begin{algorithm}[H]\label{algo: private glm}
	\SetAlgoLined
	\SetKwInOut{Input}{Input}
	\SetKwInOut{Output}{Output}
	\Input{$\L_n(\bbeta, \bm Z)$, data set $\bm Z$, step size $\eta^0$, privacy parameters $\varepsilon, \delta$, noise scale $B$, number of iterations $T$, truncation parameter $R$, initial value $\bbeta^0 \in \R^d$.}
	\For{$t$ in $0$ \KwTo $T-1$}{
		Generate $\bm w_t \in \R^d$ with $w_{t1}, w_{t2}, \cdots, w_{td} \stackrel{\text{i.i.d.}}{\sim} N\left(0, (\eta^0)^2 2B^2\frac{ d\log(2T/\delta)}{n^2(\varepsilon/T)^2}\right)$\;
		Compute $\bbeta^{t + 1} = \bbeta^t - (\eta_0/n)\sum_{i=1}^n  (\psi'(\bm x_i^\top \bbeta^t)-\Pi_{R}(y_i))\bm x_i + \bm w_t$\;
	}
	\Output{$\bbeta^{(T)}$.}
	\caption{Differentially Private Generalized Linear Regression}
\end{algorithm}

Before delving into the analysis of its privacy guarantees and convergence rates, we collect some necessary assumptions here for the clarity of ensuing technical results.
\begin{itemize}
	\item [(D1)] Bounded design: there is a constant $\sigma_{\bm x}< \infty$ such that $\|\bm x\|_{2} < \sigma_{\bm x}\sqrt{d}$ almost surely. 
	\item [(D2)] Bounded moments of design: $\E \bm x = \bm 0$, and the covariance matrix $\Sigma_{\bm x} = \E \bm x \bm x^\top$ satisfies $0 < 1/C < \lambda_{\min}(\Sigma_{\bm x}) \leq \lambda_{\max} (\Sigma_{\bm x}) < C$ for some constant $0 < C < \infty$.
	\item [(G1)] The function $\psi$ in the GLM \eqref{eq: glm definition} satisfies $\|\psi'\|_\infty < c_1$ for some constant $c_1 < \infty$.
	\item [(G2)] The function $\psi$ satisfies $\|\psi''\|_\infty < c_2$ for some constant $c_2 < \infty$.
\end{itemize}
These assumptions are comparable to those required for the theoretical analysis of GLMs in the non-private setting; for examples, see \cite{negahban2009unified, loh2015regularized, wainwright2019high} and the references therein.

Let us first consider the privacy guarantees of Algorithm \ref{algo: private glm}. Because the algorithm is a composition of $T$ individual steps, if each step is $(\varepsilon/T, \delta/T)$-differentially private, the overall algorithm would be $(\varepsilon, \delta)$-differentially private in view of Fact \ref{fc: composition theorems}. This is indeed the case under appropriate assumptions.

\begin{Lemma}\label{lm: non-sparse glm privacy}
	If assumptions (D1) and (G1) hold, then choosing $B = 4(R + c_1)\sigma_{\bm x}$ guarantees that Algorithm \ref{algo: private glm} is $(\varepsilon, \delta)$-differentially private.
\end{Lemma}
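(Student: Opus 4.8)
The plan is to recognize Algorithm~\ref{algo: private glm} as a $T$-fold adaptive composition of Gaussian-mechanism updates, verify that each single update is $(\varepsilon/T,\delta/T)$-differentially private in the data, and then invoke the basic composition theorem. Write the noiseless $t$-th update as $g_t(\bm Z) := \bbeta^t - (\eta^0/n)\sum_{i=1}^n\bigl(\psi'(\bm x_i^\top\bbeta^t) - \Pi_R(y_i)\bigr)\bm x_i$, so that iteration $t$ returns $\bbeta^{t+1} = g_t(\bm Z) + \bm w_t$ with a fresh Gaussian vector $\bm w_t$. In the $k$-fold adaptive composition framework of Fact~\ref{fc: composition theorems}, the iterate $\bbeta^t$ is treated as auxiliary input fed forward from the previous updates, so what must be checked is that, for \emph{every} fixed value of $\bbeta^t$, the map $\bm Z \mapsto g_t(\bm Z) + \bm w_t$ is $(\varepsilon/T,\delta/T)$-differentially private.

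First I would bound the $\ell_2$-sensitivity of $g_t$ at a fixed $\bbeta^t$. If $\bm Z$ and $\bm Z'$ are adjacent, say differing only in the $i$-th observation $(y_i,\bm x_i)$ versus $(y_i',\bm x_i')$, then $g_t(\bm Z) - g_t(\bm Z')$ equals $-(\eta^0/n)$ times the difference of the two corresponding summands, and the triangle inequality gives
\begin{align*}
\bigl\|g_t(\bm Z) - g_t(\bm Z')\bigr\|_2
&\le \frac{\eta^0}{n}\bigl\|(\psi'(\bm x_i^\top\bbeta^t) - \Pi_R(y_i))\,\bm x_i\bigr\|_2 \\
&\quad + \frac{\eta^0}{n}\bigl\|(\psi'((\bm x_i')^\top\bbeta^t) - \Pi_R(y_i'))\,\bm x_i'\bigr\|_2 .
\end{align*}
For each summand, $|\psi'(\cdot)| \le \|\psi'\|_\infty < c_1$ by (G1), $|\Pi_R(\cdot)| \le R$ by definition of the projection onto $[-R,R]$, and $\|\bm x\|_2 < \sigma_{\bm x}\sqrt d$ by (D1), so each summand has $\ell_2$ norm below $(R+c_1)\sigma_{\bm x}\sqrt d$; hence $\|g_t(\bm Z) - g_t(\bm Z')\|_2 < \Delta$, where $\Delta := 2\eta^0(R+c_1)\sigma_{\bm x}\sqrt d/n$. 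The essential feature is that $\Delta$ does not depend on $\bbeta^t$ at all --- $\psi'$ is \emph{globally} bounded by (G1) and the response has been truncated to $[-R,R]$ --- so the sensitivity bound is valid uniformly over every trajectory the algorithm could follow, which is exactly what adaptive composition requires.

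Next I would calibrate the noise. Applying the Gaussian mechanism (Fact~\ref{fc: laplace and gaussian mechanisms}) at privacy budget $(\varepsilon/T,\delta/T)$ to a map of $\ell_2$-sensitivity $\Delta$, adding independent $N\bigl(0,\,2\Delta^2\log(2T/\delta)/(\varepsilon/T)^2\bigr)$ noise to each coordinate renders iteration $t$ $(\varepsilon/T,\delta/T)$-differentially private; substituting $\Delta^2 = 4(\eta^0)^2(R+c_1)^2\sigma_{\bm x}^2\,d/n^2$, this prescribed per-coordinate variance equals $(\eta^0)^2\cdot 2B^2\cdot d\log(2T/\delta)/\bigl(n^2(\varepsilon/T)^2\bigr)$ with $B = 2(R+c_1)\sigma_{\bm x}$, i.e.\ exactly the variance of $\bm w_t$ used in Algorithm~\ref{algo: private glm}. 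Taking instead $B = 4(R+c_1)\sigma_{\bm x}$ only enlarges this variance, and since a larger isotropic Gaussian is the prescribed one convolved with an independent Gaussian, privacy is retained by post-processing (Fact~\ref{fc: post-processing}). Therefore each of the $T$ iterations is $(\varepsilon/T,\delta/T)$-differentially private as a function of $\bm Z$ given the previous output, and the composition theorem (Fact~\ref{fc: composition theorems}) gives that the full algorithm is $(T\cdot\varepsilon/T,\,T\cdot\delta/T)=(\varepsilon,\delta)$-differentially private.

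The one point I expect to need care is the adaptivity: $\bbeta^t$ is itself data-dependent, so iteration $t$ is not a fixed function of $\bm Z$, and a naively stated per-step privacy claim would be circular. This is resolved precisely by the fact that the sensitivity bound $\Delta$ is uniform in $\bbeta^t$, which lets the $k$-fold adaptive composition theorem apply verbatim; the remainder is the routine triangle-inequality sensitivity estimate together with bookkeeping of the Gaussian-mechanism constants (and the harmless over-provisioning of noise from the factor-of-two choice of $B$).
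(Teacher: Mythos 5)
Your proposal is correct and follows essentially the same route as the paper's proof: bound the $\ell_2$-sensitivity of each noiseless update via the triangle inequality together with (D1) and (G1) and the truncation $\Pi_R$, apply the Gaussian mechanism per iteration at budget $(\varepsilon/T,\delta/T)$, and conclude by the composition theorem. Your additional remarks --- that the sensitivity bound is uniform in $\bbeta^t$ (so adaptive composition applies) and that the factor-of-two over-provisioning in $B$ only adds noise and hence preserves privacy --- are correct refinements of points the paper leaves implicit.
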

Lemma \ref{lm: non-sparse glm privacy} is proved in Section \ref{sec: proof of lm: non-sparse glm privacy}. Although the privacy guarantee holds for any number of iterations $T$, choosing $T$ properly has significant implications for the accuracy of Algorithm \ref{algo: private glm}, as a larger value of $T$ introduces greater noises into the algorithm in order to achieve privacy.

Existing results on noisy gradient descent typically call for $O(n)$ \cite{bassily2019private} or $O(n^2)$ \cite{bassily2014private} iterations for minimizing generic convex functions. For the GLM problem, we shall show that $O(\log n)$ iterations suffice, thanks to the restricted strong convexity and restricted smoothness of generalized linear models.

\begin{fact}[\cite{loh2015regularized}, Proposition 1 paraphrased]\label{lm: glm rsc and rsm}
	If assumptions (D1) and (D2) hold, there is a constant $\alpha > 0$ that depends on $\sigma_{\bm x}, C, \psi$ and satisfies
	\begin{align}
	& \langle \nabla \L_n(\bbeta_1) - \nabla \L_n(\bbeta_2), \bbeta_1 - \bbeta_2 \rangle \geq \begin{cases}
	\alpha\|\bbeta_1 - \bbeta_2\|_2^2  - \frac{c^2\sigma^2_{\bm x}}{2\alpha}\frac{\log d}{n}\|\bbeta_1 - \bbeta_2\|_1^2 & \text{if } \|\bbeta_1 - \bbeta_2\|_2 \leq 3, \\
	3\alpha \|\bbeta_1 - \bbeta_2\|_2  - \sqrt{2}c\sigma_{\bm x}\sqrt{\frac{\log d}{n}}\|\bbeta_1 - \bbeta_2\|_1& \text{if } \|\bbeta_1 - \bbeta_2\|_2 > 3, \\
	\end{cases}  \label{eq: glm rsc} 
	\end{align}
	with probability at least $1 - c_3\exp(-c_4n)$. If we further assume (G2), there is a constant $\gamma \geq \alpha > 0$ that depends on $\sigma_{\bm x}, M, c_2$ and satisfies
	\begin{align}
	& \langle \nabla \L_n(\bbeta_1) - \nabla \L_n(\bbeta_2), \bbeta_1 - \bbeta_2 \rangle \leq \gamma\|\bbeta_1 - \bbeta_2\|_2^2 + \frac{4\gamma}{3}\frac{\log d}{n}\|\bbeta_1 - \bbeta_2\|_1^2. \label{eq: glm rsm}
	\end{align}
	with probability at least $1 - c_3\exp(-c_4n)$.
\end{fact}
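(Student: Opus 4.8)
The plan is to reduce the whole statement to the control of a single empirical quadratic form. Put $\bm v := \bbeta_1-\bbeta_2$, $a_i := \bm x_i^\top\bbeta_2$ and $b_i := \bm x_i^\top\bm v$; the fundamental theorem of calculus applied to $\psi'$ gives
\begin{align*}
\langle \nabla\L_n(\bbeta_1)-\nabla\L_n(\bbeta_2),\,\bbeta_1-\bbeta_2\rangle \;=\; \frac1n\sum_{i=1}^n w_i\,(\bm x_i^\top\bm v)^2, \qquad w_i := \int_0^1 \psi''(a_i + t b_i)\,\d t \;\ge\; 0,
\end{align*}
where $w_i\ge 0$ because $\psi$ is convex. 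So \eqref{eq: glm rsc} and \eqref{eq: glm rsm} are exactly the statements that $\frac1n\sum_i w_i(\bm x_i^\top\bm v)^2$ is sandwiched between constant multiples of $\|\bm v\|_2^2$, up to an $O\!\big(\tfrac{\log d}{n}\|\bm v\|_1^2\big)$ slack in each direction.

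\textbf{The smoothness bound \eqref{eq: glm rsm}.} Here (G2) gives $w_i \le \|\psi''\|_\infty < c_2$, so it suffices to bound $\frac1n\sum_i(\bm x_i^\top\bm v)^2$ from above, uniformly in $\bm v$. I would invoke the standard ``$\ell_1$-restricted'' operator-norm deviation bound for empirical second-moment matrices under the moment condition (D2) (Raskutti--Wainwright--Yu / Rudelson--Zhou; see \cite{wainwright2019high}): with probability at least $1-c_3 e^{-c_4 n}$, $\frac1n\sum_i(\bm x_i^\top\bm v)^2 \le \tfrac32\lambda_{\max}(\Sigma_{\bm x})\|\bm v\|_2^2 + c\,\tfrac{\log d}{n}\|\bm v\|_1^2$ for all $\bm v$. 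Taking $\gamma$ to absorb $c_2$, $\lambda_{\max}(\Sigma_{\bm x})$ and $c$ (and $\gamma\ge\alpha$ without loss) yields \eqref{eq: glm rsm}. This direction is essentially routine.

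\textbf{The strong convexity bound \eqref{eq: glm rsc}: the crux.} The obstacle is that $\psi''$ need not be bounded away from $0$ (for logistic regression $\psi''(u)=e^u/(1+e^u)^2\to 0$), so $w_i$ cannot be lower bounded by a universal constant — this is exactly why the statement is \emph{local}, to be read with $\bbeta_1,\bbeta_2$ confined to a ball of constant radius as in \cite{loh2015regularized} (without such a restriction RSC cannot hold for logistic-type GLMs). Since $\psi$ is the cumulant function of a non-degenerate exponential family, $\psi''$ is continuous and strictly positive, so $\mu := \inf_{|u|\le T}\psi''(u) > 0$ for any fixed $T<\infty$. As $a_i+tb_i$ is a convex combination of $\bm x_i^\top\bbeta_1$ and $\bm x_i^\top\bbeta_2$, on the favorable event $\mathcal E_i$ that keeps these in $[-T,T]$ one has $w_i \ge \mu$; after reducing $\mathcal E_i$ to an event depending on $\bm v$ alone (using the localization to a constant-radius ball that the algorithmic analysis supplies to control $\bm x_i^\top\bbeta_2$), this gives the pointwise bound $w_i(\bm x_i^\top\bm v)^2 \ge \mu\,(\bm x_i^\top\bm v)^2\,\mathbbm{1}\{|\bm x_i^\top\bm v|\le T\}$. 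It then remains to establish a \emph{one-sided} uniform concentration bound for the truncated empirical process,
\begin{align*}
\frac1n\sum_{i=1}^n (\bm x_i^\top\bm v)^2\,\mathbbm{1}\{|\bm x_i^\top\bm v|\le T\} \;\ge\; \tfrac12\,\E\big[(\bm x^\top\bm v)^2\mathbbm{1}\{|\bm x^\top\bm v|\le T\}\big] \;-\; c\,\sigma_{\bm x}^2\sqrt{\tfrac{\log d}{n}}\,\|\bm v\|_1\|\bm v\|_2 ,
\end{align*}
valid for all $\bm v$ with probability $1-c_3 e^{-c_4 n}$. Because the summands are bounded by $T^2$, for a fixed $\bm v$ this is a bounded-differences/Hoeffding estimate; upgrading to uniformity over $\bm v$ via a net on the sphere plus a peeling (``slicing'') argument over dyadic bands of $\|\bm v\|_1/\|\bm v\|_2$ is what produces the $\sqrt{\log d/n}\,\|\bm v\|_1$ correction, and is the delicate step. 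Finally, a Paley--Zygmund-type lower bound using (D1)--(D2) shows that for a large enough absolute constant $T$, $\E[(\bm x^\top\bm v)^2\mathbbm{1}\{|\bm x^\top\bm v|\le T\}] \ge \tfrac12\lambda_{\min}(\Sigma_{\bm x})\|\bm v\|_2^2$; combining these and setting $\alpha$ to absorb $\mu$, $\lambda_{\min}(\Sigma_{\bm x})$ and the constants gives the first branch of \eqref{eq: glm rsc} for $\|\bm v\|_2\le 3$. For $\|\bm v\|_2>3$, I would pass to the linear branch by homogeneity: the map $H(t):=\langle\nabla\L_n(\bbeta_2+t\bm v)-\nabla\L_n(\bbeta_2),\bm v\rangle$ is convex with $H(0)=0$, so $t\mapsto H(t)/t$ is nondecreasing, and applying the quadratic branch to the rescaled increment of $\ell_2$-length $3$ transfers a linear-in-$\|\bm v\|_2$ lower bound to $H(1)$; the constant $3$ and the two-regime shape of \eqref{eq: glm rsc} are artifacts of the truncation level $T$.

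\textbf{Main obstacle.} The only genuinely hard step is \eqref{eq: glm rsc}: one must (i) neutralize the vanishing of $\psi''$ at infinity by truncation without losing a constant fraction of the quadratic form in expectation (the Paley--Zygmund step, where (D1)--(D2) enter), and (ii) promote a fixed-$\bm v$ one-sided deviation inequality to a bound uniform over the unbounded $\ell_1/\ell_2$ cone via peeling, which is the source of the $\frac{\log d}{n}\|\bm v\|_1^2$ term. Everything else — in particular the dependence of $\alpha,\gamma$ on $\sigma_{\bm x}, C$ and $\psi$ — is bookkeeping of constants.
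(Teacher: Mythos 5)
The paper does not actually prove this statement: it is imported as a ``Fact'' from Loh and Wainwright \cite{loh2015regularized} (their Proposition~1), so there is no in-paper proof to compare against. Your sketch is, in substance, a faithful reconstruction of the argument in that reference: the integral (Taylor) representation $\frac1n\sum_i w_i(\bm x_i^\top\bm v)^2$ with $w_i\ge 0$; the easy smoothness direction via $\|\psi''\|_\infty\le c_2$ plus a restricted upper bound on the empirical Gram form; and, for the RSC direction, truncation to a compact set where $\psi''$ is bounded below, an expectation lower bound for the truncated quadratic form, a one-sided uniform deviation bound obtained by a net plus peeling over the $\ell_1/\ell_2$ cone (the source of the $\frac{\log d}{n}\|\bm v\|_1^2$ correction), and rescaling by homogeneity to produce the linear branch when $\|\bm v\|_2>3$. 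You also correctly flag the two places where the paraphrase is loose: the bound can only hold locally (in \cite{loh2015regularized} the increment is taken from $\bbeta^*$, which is what controls $\bm x_i^\top\bbeta_2$), and the truncated-second-moment lower bound needs a dimension-free sub-Gaussian/small-ball property of $\bm x^\top\bm v$ that is stronger than (D1)--(D2) as literally written.

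One small slip worth fixing: in the transfer to the linear regime you claim $H(t)=\langle\nabla\L_n(\bbeta_2+t\bm v)-\nabla\L_n(\bbeta_2),\bm v\rangle$ is convex, hence $H(t)/t$ is nondecreasing. Convexity of $H$ amounts to $h'''\ge 0$ for $h(t)=\L_n(\bbeta_2+t\bm v)$, i.e.\ sign control on $\psi'''$, which is not available. Fortunately you do not need it: convexity of $\L_n$ already makes $H$ nondecreasing with $H(0)=0$, so for $s=3/\|\bm v\|_2\le 1$ one has $H(1)\ge H(s)$, and applying the quadratic branch to the rescaled increment $s\bm v$ (of $\ell_2$-length $3$) and dividing by $s$ supplies the factor $\|\bm v\|_2/3$ that yields the linear term; a final AM--GM step converts the resulting $\frac{\log d}{n}\|\bm v\|_1^2/\|\bm v\|_2$ error into the stated $\sqrt{\log d/n}\,\|\bm v\|_1$ form. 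With that correction your outline matches the proof in the cited source.
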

These weaker versions of strong convexity and smoothness, as it turns out, are sufficient for Algoirthm \ref{algo: private glm} to attain linear convergence, which is the same rate for minimizing strongly convex and smooth functions, and cannot be further improved in general \cite{nesterov2003introductory}. Therefore, $O(\log n)$ iterations would allow the algorithm to converge to an accuracy of $O(n^{-1})$ within $\hat\bbeta$, the true minimizer of $\L_n$, in terms of squared $\ell_2$ norm; as $\E\|\hat\bbeta - \bbeta^*\|^2_2$ is of order $d/n$, from a statistical perspective, there is little reason to run the algorithm further than $O(\log n)$ iterations.
\begin{Theorem}\label{thm: non-sparse glm upper bound}
	Let $\{(y_i, \bm x_i)\}_{i \in [n]}$ be an i.i.d. sample from the GLM \eqref{eq: glm definition}. Suppose assumptions (D1), (D2), (G1) and (G2) are true. Let the parameters of Algorithm $\ref{algo: private glm}$ be chosen as follows.
	\begin{itemize}
		\item Set step size $\eta^0 = 3/4\gamma$, where $\gamma$ is the smoothness constant defined in Fact \ref{lm: glm rsc and rsm}.
		\item Set $R = \min\left(\mathrm{ess}\sup |y_1|, c_1 + \sqrt{2c_2c(\sigma)\log n}\right) \lesssim \sqrt{c(\sigma) \log n}.$
		\item Noise scale $B$. Set $B = 4(R + c_1)\sigma_{\bm x}$.
		\item Number of iterations $T$. Let $T = (2\gamma/\alpha)\log(9n)$, where $\alpha, \gamma$ are the strong convexity and smoothness constants defined in in Fact \ref{lm: glm rsc and rsm}.
		\item Initialization $\bbeta^0$. Choose $\bbeta^0$ so that $\|\bbeta^0 - \hat\bbeta\|_2 \leq 3$, where $\hat\bbeta = \argmin \L_n(\bbeta; Z)$.
	\end{itemize}
	If $n \geq K \cdot \left(Rd\sqrt{\log(1/\delta)}\log n \log\log n/\varepsilon\right)$ for a sufficiently large constant $K$, the output of Algorithm \ref{algo: private glm} satisfies
	\begin{align}
	\|\bbeta^{(T)} - \bbeta^*\|^2_2 \lesssim c(\sigma)\left({\frac{d}{n}} + \frac{d^2 \log(1/\delta)\log^3 n}{n^2\varepsilon^2}\right), \label{eq: non-sparse glm upper bound}
	\end{align}
	with probability at least $1 - c_3\exp(-c_4n) - c_3\exp(-c_4d) - c_3\exp(-c_4\log n).$
\end{Theorem}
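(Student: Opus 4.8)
The plan is the familiar two-term decomposition $\|\bbeta^{(T)}-\bbeta^*\|_2\le\|\bbeta^{(T)}-\hat\bbeta\|_2+\|\hat\bbeta-\bbeta^*\|_2$ into an \emph{optimization error} and a \emph{statistical error}, with the restricted strong convexity/smoothness of Fact~\ref{lm: glm rsc and rsm} playing the role of ordinary strong convexity and smoothness. Here I take $\hat\bbeta$ to be the minimizer of the truncated-response objective $\L_n^R(\bbeta):=\frac1n\sum_{i=1}^n\big(\psi(\bm x_i^\top\bbeta)-\Pi_R(y_i)\,\bm x_i^\top\bbeta\big)$ that the noiseless part of Algorithm~\ref{algo: private glm} actually descends on (existing and unique on a neighborhood of $\bbeta^*$ on the event below); since $\nabla^2\L_n^R=\nabla^2\L_n$, this objective inherits the constants $\alpha,\gamma$ of Fact~\ref{lm: glm rsc and rsm}, and using it in place of $\argmin\L_n$ in the theorem's initialization condition is harmless because the truncation bias is $o(1)$ (see below). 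Since the sample-size condition forces $d\log d=o(n)$, on the ball $\{\|\bbeta_1-\bbeta_2\|_2\le 3\}$ the $\ell_1$-correction terms in \eqref{eq: glm rsc}--\eqref{eq: glm rsm} are $o(1)\|\bbeta_1-\bbeta_2\|_2^2$ via $\|\cdot\|_1^2\le d\|\cdot\|_2^2$; hence, on the event of probability $\ge 1-c_3\exp(-c_4 n)$ from Fact~\ref{lm: glm rsc and rsm}, the averaged Hessian $H_{\bbeta_1,\bbeta_2}:=\int_0^1\nabla^2\L_n\big(\bbeta_2+t(\bbeta_1-\bbeta_2)\big)\,\d t$ satisfies $(\alpha-o(1))\bm I\preceq H_{\bbeta_1,\bbeta_2}\preceq(\gamma+o(1))\bm I$ whenever $\|\bbeta_1-\bbeta_2\|_2\le 3$.

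\textbf{Optimization error.} Since $\nabla\L_n^R(\hat\bbeta)=0$, each step satisfies $\bbeta^{t+1}-\hat\bbeta=(\bm I-\eta^0 H_{\bbeta^t,\hat\bbeta})(\bbeta^t-\hat\bbeta)+\bm w_t$, so $\|\bbeta^{t+1}-\hat\bbeta\|_2\le\rho\|\bbeta^t-\hat\bbeta\|_2+\|\bm w_t\|_2$ with $\rho=\max\!\big(1-\eta^0(\alpha-o(1)),\,|1-\eta^0(\gamma+o(1))|\big)=1-\tfrac{3\alpha}{4\gamma}+o(1)<1$ for $\eta^0=3/(4\gamma)$, \emph{provided} $\|\bbeta^t-\hat\bbeta\|_2\le 3$. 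A $\chi^2$ tail bound and a union bound over the $T=(2\gamma/\alpha)\log(9n)$ iterations give $\max_{0\le t<T}\|\bm w_t\|_2\lesssim\sigma_w\sqrt{d+\log n}$ with probability $\ge 1-c_3\exp(-c_4\log n)$, where $\sigma_w^2=(\eta^0)^2\,2B^2 dT^2\log(2T/\delta)/(n^2\varepsilon^2)$; the sample-size condition makes this bound an arbitrarily small constant, so an induction keeps every iterate in $\{\|\bbeta^t-\hat\bbeta\|_2\le 3\}$. Unrolling, $\|\bbeta^{(T)}-\hat\bbeta\|_2\le\rho^T\|\bbeta^0-\hat\bbeta\|_2+(1-\rho)^{-1}\max_t\|\bm w_t\|_2$; here $\rho^T\le\exp\!\big(-(\tfrac{3\alpha}{4\gamma}-o(1))\tfrac{2\gamma}{\alpha}\log 9n\big)\le(9n)^{-1}$ for $n$ large, so $\rho^{2T}\|\bbeta^0-\hat\bbeta\|_2^2\lesssim n^{-2}$ is negligible, while $(1-\rho)^{-2}\max_t\|\bm w_t\|_2^2\lesssim(\gamma/\alpha)^2\sigma_w^2(d+\log n)$. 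Substituting $B=4(R+c_1)\sigma_{\bm x}$, $R\lesssim\sqrt{c(\sigma)\log n}$, $T\asymp\log n$ and simplifying the logarithms yields $\|\bbeta^{(T)}-\hat\bbeta\|_2^2\lesssim c(\sigma)\,d^2\log(1/\delta)\log^3 n/(n^2\varepsilon^2)$.

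\textbf{Statistical error.} It remains to show $\|\hat\bbeta-\bbeta^*\|_2^2\lesssim c(\sigma)d/n$. First, $\|\nabla\L_n^R(\bbeta^*)\|_2\lesssim\sqrt{c(\sigma)d/n}$ with probability $\ge 1-c_3\exp(-c_4 d)$: the non-truncated score $\nabla\L_n(\bbeta^*)=\frac1n\sum_i(\psi'(\bm x_i^\top\bbeta^*)-y_i)\bm x_i$ has mean zero, and $\psi'(\bm x_i^\top\bbeta^*)-y_i$ is sub-Gaussian with variance proxy $\lesssim c(\sigma)$ by (G1)--(G2) and the exponential-family form (its conditional cumulant generating function equals $c(\sigma)^{-1}[\psi(\theta+\lambda c(\sigma))-\psi(\theta)-\lambda c(\sigma)\psi'(\theta)]$); combined with the bound $\|\bm x_i\|_2\le\sigma_{\bm x}\sqrt d$ of (D1), a covering argument over $S^{d-1}$ gives the claim, and the truncation perturbs $\nabla\L_n(\bbeta^*)$ --- both in mean and in fluctuation --- by at most $\sqrt{c(\sigma)d}\cdot\mathrm{polylog}(n)/n=o(\sqrt{c(\sigma)d/n})$, since $|y_i|>R$ only on an event of probability $O(1/n)$ by the choice $R\asymp\sqrt{c(\sigma)\log n}$. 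Second, an integrated-RSC argument --- comparing $\L_n^R$ at $\hat\bbeta$ (resp.\ at $\bbeta^*+3(\hat\bbeta-\bbeta^*)/\|\hat\bbeta-\bbeta^*\|_2$) with its value at $\bbeta^*$ along a segment contained in the radius-$3$ ball, and using $\nabla\L_n^R(\hat\bbeta)=0$ --- shows first that $\|\hat\bbeta-\bbeta^*\|_2\le 3$ (otherwise $\|\nabla\L_n^R(\bbeta^*)\|_2\gtrsim\alpha$, contradicting the previous bound) and then that $\|\hat\bbeta-\bbeta^*\|_2\lesssim\alpha^{-1}\|\nabla\L_n^R(\bbeta^*)\|_2\lesssim\sqrt{c(\sigma)d/n}$. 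Combining via $\|\bbeta^{(T)}-\bbeta^*\|_2^2\le 2\|\bbeta^{(T)}-\hat\bbeta\|_2^2+2\|\hat\bbeta-\bbeta^*\|_2^2$ and intersecting the three events (Fact~\ref{lm: glm rsc and rsm}: $1-c_3\exp(-c_4 n)$; gradient concentration: $1-c_3\exp(-c_4 d)$; Gaussian-noise control: $1-c_3\exp(-c_4\log n)$) yields \eqref{eq: non-sparse glm upper bound}.

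\textbf{Main obstacle.} The delicate part is the optimization analysis: one must verify a self-bounding induction showing the \emph{noisy} iterates never leave the radius-$3$ ball where Fact~\ref{lm: glm rsc and rsm} is available --- this is exactly where the sample-size condition enters, forcing the per-iteration perturbation to be a small constant --- and then push the Gaussian noise through the contraction recursion so that it contributes precisely the privacy term $c(\sigma)d^2\log(1/\delta)\log^3 n/(n^2\varepsilon^2)$ while the deterministic error, under $\eta^0=3/(4\gamma)$ and $T=(2\gamma/\alpha)\log(9n)$, falls below the $O(1/n)$ statistical floor. The statistical-error piece is, by contrast, essentially classical GLM $M$-estimation theory; the one extra wrinkle there is checking that response truncation at level $R\asymp\sqrt{c(\sigma)\log n}$ does not inflate the rate, which follows from the sub-Gaussian tails of the exponential family.
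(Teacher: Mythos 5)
Your proposal is correct in its overall architecture and matches the paper's strategy at the top level: decompose into optimization error $\|\bbeta^{(T)}-\hat\bbeta\|_2$ and statistical error $\|\hat\bbeta-\bbeta^*\|_2$, run a contraction argument for the noisy iterates using Fact~\ref{lm: glm rsc and rsm} with the $\ell_1$-correction absorbed via $\|\cdot\|_1^2\le d\|\cdot\|_2^2$ and $d\log d = o(n)$, keep the iterates in the radius-$3$ ball by induction, and intersect the three high-probability events. The differences are in the technical routes, and one of them deserves care. First, your contraction step passes through the matrix inequality $(\alpha-o(1))\bm I\preceq H_{\bbeta^t,\hat\bbeta}\preceq(\gamma+o(1))\bm I$ for the segment-averaged Hessian and then bounds $\|\bm I-\eta^0 H\|_{\mathrm{op}}$. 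Fact~\ref{lm: glm rsc and rsm} as stated only controls the quadratic form of $H_{\bbeta_1,\bbeta_2}$ in the single direction $\bbeta_1-\bbeta_2$; upgrading this to a two-sided eigenvalue bound in all directions requires an extra argument (e.g., applying \eqref{eq: glm rsc}--\eqref{eq: glm rsm} to pairs $\bbeta,\bbeta+t\bm u$ and letting $t\to 0$ to get pointwise Hessian bounds along the segment), which you assert but do not supply. The paper avoids this entirely: it works with function values and the inner-product forms only, showing $-2\eta^0\langle\nabla\L_n(\bbeta^t),\bbeta^t-\hat\bbeta\rangle+(\eta^0)^2\|\nabla\L_n(\bbeta^t)\|_2^2\le-\tfrac{\alpha}{2\gamma}\|\bbeta^t-\hat\bbeta\|_2^2$ via the nonnegativity of $\L_n(\tilde\bbeta^{t+1})-\L_n(\hat\bbeta)$, then expanding $\|\bbeta^{t+1}-\hat\bbeta\|_2^2\le(1+\tfrac{\alpha}{4\gamma})\|\tilde\bbeta^{t+1}-\hat\bbeta\|_2^2+(1+\tfrac{4\gamma}{\alpha})\|\bm w_t\|_2^2$. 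Your route is salvageable but the paper's is safer given what Fact~\ref{lm: glm rsc and rsm} actually provides. Second, you handle truncation by analyzing the truncated objective $\L_n^R$ and bounding its bias; the paper instead conditions on the event $\mathcal E_2=\{\Pi_R(y_i)=y_i\ \forall i\}$ so that the algorithm's updates coincide with gradients of the untruncated $\L_n$ — your bias-based treatment is arguably more robust (it does not need every response to escape truncation) at the cost of a little extra bookkeeping for $\nabla\L_n^R(\bbeta^*)$. Third, you accumulate the noise via $(1-\rho)^{-1}\max_t\|\bm w_t\|_2$ with a union bound, whereas the paper bounds the geometric sum $\sum_k(1-\tfrac{\alpha}{4\gamma})^{T-k-1}\|\bm w_k\|_2^2$ with a sub-exponential tail bound for weighted $\chi^2_d$ variables; both give the same order since $(1-\rho)^{-1}\asymp\gamma/\alpha$ is a constant. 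Finally, you sketch the $c(\sigma)d/n$ bound for $\|\hat\bbeta-\bbeta^*\|_2^2$ in more detail than the paper, which simply invokes the standard non-private rate.
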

Theorem \ref{thm: non-sparse glm upper bound} is proved in Section \ref{sec: proof of thm: non-sparse glm upper bound}. Some further comments may help clarify the theorem. For the choice of various algorithm tuning parameters, we note that the step size, number of iterations and initialization are chosen to assure convergence; in particular the initialization condition, as in \cite{loh2015regularized}, is standard in the literature and can be extended to $\|\bbeta^0 - \hat\bbeta\|_2 \leq 3\max(1, \|\bbeta^*\|_2)$. 

The choice of truncation level $R$ is to ensure privacy while keeping as many data intact as possible; when the distribution of $y$ has bounded support, for example in the logistic model, it can be chosen to be an $O(1)$ constant and therby saving an extra factor of $O(\log n)$ in the second term of \eqref{eq: non-sparse glm upper bound}. The choice of $B$ which depends on $R$ then ensures the privacy of Algorithm \ref{algo: private glm} as seen in Lemma \ref{lm: non-sparse glm privacy}.

Finally, the scaling of $n$ versus $d, \varepsilon$ and $\delta$ in Theorem \ref{thm: non-sparse glm upper bound} is nearly optimal, as our lower bound result, Theorem \ref{thm: low-dim glm lb}, shall imply that no estimator can achieve low $\ell_2$ error unless the assumed scaling holds, and that the statistical accuracy of Algorithm \ref{algo: private glm} cannot be further improved except possibly for factors of $\log n$.

\subsection{The High-dimensional Sparse Setting}\label{sec: sparse glm upper bound}
In this section, we construct differentially private algorithms for estimating GLM parameters when the dimension $d$ dominates the sample size $n$. In this setting, even without privacy requirements, directly minimizing the negative log-likelihood function $\L_n(\bbeta)$ no longer achieves any meaningful statistical accuracy, because the objective function $\L_n$ can have infinitely many minimizers due to a rank-deficient Hessian matrix $\nabla^2 \L_n(\bbeta)
= \frac{1}{n}\sum_{i=1}^n \psi''(\bm x_i^\top \bbeta) \bm x_i \bm x_i^\top$.

The problem is nevertheless solvable when the true parameter vector $\bbeta^*$ is $s^*$-sparse with $s^* = o(d)$, that is when at most $s^*$ out of $d$ coordinates of $\bbeta^*$ are non-zero. For estimating a sparse $\bbeta^*$, the primary challenge lies in (approximately) solving the non-convex optimization problem $\hat\bbeta = \argmin_{\bbeta: \|\bbeta\|_0 \leq s^*} \L_n(\bbeta; \bm Z)$. Some popular non-private approaches include convex relaxation via $\ell_1$ regularization of $\L_n$ \cite{negahban2009unified, agarwal2010fast}, or projected gradient descent onto the non-convex feasible set $\{\bbeta: \|\bbeta\|_0 \leq s^*\}$, also known as iterative hard thresholding \cite{blumensath2009iterative, jain2014iterative}:

\vspace{2mm}

\begin{algorithm}[h]
	\caption{Iterative Hard Thresholding (IHT)}\label{algo: iht}
	\SetAlgoLined
	\SetKwInOut{Input}{Input}
	\SetKwInOut{Output}{Output}
	\SetKwFunction{NoisyHT}{NoisyHT}
	\Input{Objective function $f(\bth)$, sparsity $s$, step size $\eta$, number of iterations $T$.}
	Initialize $\bth^0$ with $\|\bth^0\|_0 \leq s$, set $t = 0$\;
	\For{$t$ in $0$ \KwTo $T-1$}{
		$\bth^{t+1} = P_s\left(\bth^t - \eta\nabla f(\bth^t)\right)$, where $P_s(\bm v) = \argmin_{\bm z: \|\bm z\|_0 = s}\|\bm v - \bm z\|_2^2$\; 
	}
	\Output{$\bth^{(T)}$.}
\end{algorithm}
\vspace{2mm} 
In each iteration, the algorithm updates the solution via gradient descent, keeps its largest $s$ coordinates in magnitude, and sets the other coordinates to $0$. 

For privately fitting high-dimensional sparse GLMs, we shall construct a noisy version of Algorithm \ref{algo: iht}, and show in Section \ref{sec: noisy iht for sparse glm} that it again enjoys a linear rate of convergence, as a consequence of Fact \ref{lm: glm rsc and rsm} and sparsity of $\bbeta^*$. As a first step towards this goal, we consider in Section \ref{sec: general iht algo} a noisy, differentially private version of the projection operator $P_s$, as well as a noisy iterative hard thresholding algorithm applicable to any objective function that satisifies restricted strong convexity and restricted smoothness. 

\subsubsection{The Noisy Iterative Hard Thresholding Algorithm}\label{sec: general iht algo}
At the core of our algoirthm is a noisy, differentially private algorithm that identifies the top-$s$ largest coordinates of a given vector with good accuracy. The following ``Peeling'' algorithm \cite{dwork2018differentially} serves this purpose, with fresh Laplace noises added to the underlying vector and one coordinate ``peeled'' from the vector in each iteration.

\vspace{2mm}

\begin{algorithm}[h]	\caption{Noisy Hard Thresholding (NoisyHT)}
\label{algo: noisy hard thresholding}
	\SetAlgoLined
	\SetKwInOut{Input}{Input}
	\SetKwInOut{Output}{Output}
	\Input{vector-valued function $\bm v = \bm v(\bm Z) \in \R^d$, data $\bm Z$, sparsity $s$, privacy parameters $\varepsilon, \delta$, noise scale $\lambda$.}
	Initialize $S = \emptyset$\;
	\For{$i$ in $1$ \KwTo $s$}{
		Generate $\bm w_i \in \R^d$ with $w_{i1}, w_{i2}, \cdots, w_{id} \stackrel{\text{i.i.d.}}{\sim} \text{Laplace}\left(\lambda \cdot \frac{2\sqrt{3s\log(1/\delta)}}{\varepsilon}\right)$\;
		Append $j^* = \argmax_{j \in [d] \setminus S} |v_j| + w_{ij}$ to $S$\;
	}
	Set $\tilde P_s(\bm v) = \bm v_S$\;
	Generate $\tilde {\bm w}$ with $\tilde w_{1}, \cdots, \tilde w_{d} \stackrel{\text{i.i.d.}}{\sim} \text{Laplace}\left(\lambda \cdot \frac{2\sqrt{3s\log(1/\delta)}}{\varepsilon}\right)$\;
	\Output{$\tilde P_s(\bm v) + \tilde {\bm w}_S$.}
\end{algorithm}

The algorithm is guaranteed to be $(\varepsilon, \delta)$-differentially private when the vector-valued function $\bm v(\bm Z)$ is not sensitive to replacing any single datum.
\begin{Lemma}[\cite{dwork2018differentially, cai2019cost}]\label{lm: noisy hard thresholding privacy}
	If for every pair of adjacent data sets $\bm Z, \bm Z'$ we have $\|\bm v(\bm Z)- \bm v(\bm Z')\|_\infty < \lambda$, then NoisyHT is an $(\varepsilon, \delta)$-differentially private algorithm.
\end{Lemma}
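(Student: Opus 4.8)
The plan is to realize \texttt{NoisyHT} as an adaptive composition of elementary differentially private mechanisms and then push the privacy budget through the advanced composition theorem (Fact~\ref{fc: composition theorems}). The algorithm has two phases: (i) $s$ ``peeling'' rounds, where round $i$ reveals only the index $j^* = \argmax_{j\in[d]\setminus S}\big(|v_j|+w_{ij}\big)$, using the data $\bm Z$ together with the indices $S$ released in previous rounds; and (ii) a final perturbation that outputs the selected block $\bm v_S$ corrupted by a fresh independent Laplace vector $\tilde{\bm w}_S$, using only $\bm Z$ and $S$. Since every round is a randomized function of $\bm Z$ and the outputs of earlier rounds, the whole procedure is an adaptive composition in the sense of Fact~\ref{fc: composition theorems}, and it suffices to bound the privacy loss of each round and then compose.

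First I would record the sensitivity reduction that makes the per-round analysis work. For adjacent $\bm Z,\bm Z'$ the hypothesis gives $\|\bm v(\bm Z)-\bm v(\bm Z')\|_\infty<\lambda$, so by the reverse triangle inequality $\big|\,|v_j(\bm Z)|-|v_j(\bm Z')|\,\big|\le|v_j(\bm Z)-v_j(\bm Z')|<\lambda$ for every coordinate $j$; thus each scalar score $|v_j|$ has sensitivity at most $\lambda$. For one peeling round I would then run the standard ``report-noisy-max'' argument: condition on the realized noises $\{w_{ij}\}_{j\ne j^*}$; the index $j^*$ is selected precisely when $w_{ij^*}$ exceeds a threshold determined by the other (perturbed) scores, and that threshold moves by at most a constant multiple of $\lambda$ when passing from $\bm Z$ to $\bm Z'$, because $|v_{j^*}|$ and each competitor $|v_j|$ each move by less than $\lambda$. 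The tail-ratio property of the Laplace distribution with scale $b := \lambda\cdot 2\sqrt{3s\log(1/\delta)}/\varepsilon$ then yields, after integrating out the remaining noise, that each peeling round is $(\varepsilon_0,0)$-differentially private with $\varepsilon_0\asymp \lambda/b = \varepsilon\big/\big(2\sqrt{3s\log(1/\delta)}\big)$.

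The step I expect to be the main obstacle is the final perturbation round. Taken as a single Laplace mechanism, releasing $\bm v_S$ has $\ell_1$-sensitivity as large as $s\lambda$, which by the Laplace mechanism alone would force noise scaling \emph{linearly} in $s$ and destroy the bound; the naive $\ell_1$ accounting must therefore be avoided. The resolution is to treat the coordinatewise perturbation $\bm v_S\mapsto\bm v_S+\tilde{\bm w}_S$ as itself the adaptive composition of $s$ scalar Laplace mechanisms, the one for coordinate $j\in S$ perturbing a quantity of sensitivity $<\lambda$ by $\mathrm{Laplace}(b)$ noise and hence being $(\lambda/b,0)$-differentially private --- equivalently, one observes that $\bm v_S$ has $\ell_2$-sensitivity at most $\sqrt{s}\,\lambda$ and invokes the concentration (advanced-composition) analysis of the vector Laplace mechanism. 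Either way the effective dependence becomes $\sqrt{s}$ rather than $s$, matching the peeling phase.

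Assembling the pieces, \texttt{NoisyHT} is an adaptive composition of $\asymp s$ mechanisms, each $(\varepsilon_0,0)$-differentially private with $\varepsilon_0\asymp \varepsilon\big/\sqrt{s\log(1/\delta)}$. Feeding this into the advanced composition theorem (Fact~\ref{fc: composition theorems}) with the free parameter taken to be $\delta'=\delta$ bounds the overall privacy guarantee by
\begin{align*}
\Big(\sqrt{2k\log(1/\delta)}\;\varepsilon_0 \;+\; k\,\varepsilon_0\big(e^{\varepsilon_0}-1\big),\ \delta\Big),\qquad k\asymp s,
\end{align*}
and substituting $\varepsilon_0\asymp \varepsilon\big/\sqrt{s\log(1/\delta)}$ makes the leading term a constant factor times $\varepsilon$ while the quadratic remainder is lower order whenever $\varepsilon\lesssim 1$; the numerical constants $2$ and $3$ in the prescribed Laplace scale are exactly those chosen so that this arithmetic closes to $(\varepsilon,\delta)$. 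This is the argument underlying the cited statements in \cite{dwork2018differentially} and \cite{cai2019cost}; what remains is the routine verification of these constants.
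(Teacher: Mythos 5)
The paper never proves this lemma itself --- it is imported from \cite{dwork2018differentially, cai2019cost} --- so the relevant comparison is with the argument in those sources, and your reconstruction is essentially that argument: view NoisyHT as an adaptive composition of $s$ report-noisy-max selections and $s$ scalar Laplace releases, each $O(\lambda/b)$-differentially private for the prescribed scale $b = 2\lambda\sqrt{3s\log(1/\delta)}/\varepsilon$, and close with advanced composition. You also correctly identify the one non-routine point, namely that the final release of $\bm v_S$ must not be charged at its $\ell_1$-sensitivity $s\lambda$ as a single Laplace mechanism but instead decomposed (for the already-released, fixed $S$) into $s$ scalar Laplace mechanisms that enter the composition; note only that your ``equivalent'' $\ell_2$-sensitivity phrasing is not a valid alternative, since Laplace-mechanism privacy is governed by $\ell_1$, not $\ell_2$, sensitivity, so the composition-of-scalars route is the one that actually works. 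One caution on the constants you defer as ``routine'': with the standard factor-$2$ report-noisy-max bound and the homogeneous advanced composition of Fact~\ref{fc: composition theorems} applied to all $2s$ sub-mechanisms at the worst-case level $\varepsilon_0 = 2\lambda/b$, the leading term evaluates to $(2/\sqrt{3})\,\varepsilon > \varepsilon$, so the arithmetic does not close as stated; it does close (with room for the second-order term) if one uses the heterogeneous form of advanced composition, keeping the selection rounds at $2\lambda/b$ and the release rounds at $\lambda/b$, which gives a leading term of $\sqrt{5/6}\,\varepsilon$.
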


The accuracy of Algorithm \ref{algo: noisy hard thresholding} is quantified by the next lemma.

\begin{Lemma}\label{lm: noisy hard thresholding overall accuracy}
	Let $\tilde P_s$ be defined as in Algorithm $\ref{algo: noisy hard thresholding}$. For any index set $I$, any $\bm v \in \R^I$ and $\hat{\bm v}$ such that $\|\hat{\bm v}\|_0 \leq \hat s \leq s$, we have that for every $c > 0$, 
	\begin{align*}
	\|\tilde P_s(\bm v) - \bm v\|_2^2 \leq (1+1/c) \frac{|I|-s}{|I|-\hat s} \|\hat{\bm v} - \bm v\|_2^2 + 4(1 + c)\sum_{i \in [s]} \|\bm w_i\|^2_\infty.
	\end{align*}
\end{Lemma}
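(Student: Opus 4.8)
The statement is a deterministic inequality for a fixed realization of the peeling noise $\bm w_1,\dots,\bm w_s$, so the plan is purely combinatorial. Write $u_j=|v_j|$, let $v_{(1)}\ge v_{(2)}\ge\cdots$ be the magnitudes of $\bm v$ in decreasing order, and let $S$ be the index set of size $s$ produced by the peeling loop, so that $\|\tilde P_s(\bm v)-\bm v\|_2^2=\|\bm v_{S^c}\|_2^2=\sum_{j\notin S}u_j^2$. I would first dispatch the $\hat{\bm v}$ side: with $\hat S=\supp(\hat{\bm v})$, so $|\hat S|\le\hat s$, one has $\|\bm v-\hat{\bm v}\|_2^2\ge\|\bm v_{\hat S^c}\|_2^2=\|\bm v\|_2^2-\|\bm v_{\hat S}\|_2^2\ge\sum_{\ell>\hat s}v_{(\ell)}^2$, since $\|\bm v_{\hat S}\|_2^2$ cannot exceed the sum of the $\hat s$ largest squared magnitudes. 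So it suffices to bound $\|\bm v_{S^c}\|_2^2$ by $(1+1/c)\tfrac{|I|-s}{|I|-\hat s}\sum_{\ell>\hat s}v_{(\ell)}^2+4(1+c)\sum_i\|\bm w_i\|_\infty^2$.

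The engine is a comparison of $S$ with the \emph{exact} top-$s$ support $\bar S$, i.e. the support of $P_s(\bm v)$. For the noiseless comparison one notes that $\|\bm v_{\bar S^c}\|_2^2=\sum_{\ell>s}v_{(\ell)}^2$ has $|I|-s$ terms each $\le v_{(s)}^2$, while $\sum_{\hat s<\ell\le s}v_{(\ell)}^2$ has $s-\hat s$ terms each $\ge v_{(s)}^2$; a one-line rearrangement then gives $\|\bm v_{\bar S^c}\|_2^2\le\tfrac{|I|-s}{|I|-\hat s}\sum_{\ell>\hat s}v_{(\ell)}^2$ (the case $s=\hat s$ being trivial). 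This is where the ratio $\tfrac{|I|-s}{|I|-\hat s}$ enters.

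It remains to compare the noisy $S$ with $\bar S$. Set $R=\bar S\setminus S$ and $Q=S\setminus\bar S$; since $|S|=|\bar S|=s$ we have $|R|=|Q|$, and after cancelling the common indices $S\cap\bar S$,
\[
\|\bm v_{S^c}\|_2^2-\|\bm v_{\bar S^c}\|_2^2=\sum_{j\in R}u_j^2-\sum_{k\in Q}u_k^2 .
\]
The key point is that each $k\in Q\subseteq S$ is peeled at some step $i_k\in[s]$ at which every $j\in R$ (never selected) is still present, so the $\argmax$ rule forces $u_j+w_{i_k,j}\le u_k+w_{i_k,k}$, hence $u_j\le u_k+2\|\bm w_{i_k}\|_\infty$ for all $j\in R$ and $k\in Q$. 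Picking any bijection $\pi:R\to Q$, applying $(a+b)^2\le(1+1/c)a^2+(1+c)b^2$ with $a=u_{\pi(j)}$, $b=2\|\bm w_{i_{\pi(j)}}\|_\infty$, and summing over $j\in R$ gives $\sum_{j\in R}u_j^2\le(1+1/c)\sum_{k\in Q}u_k^2+4(1+c)\sum_{k\in Q}\|\bm w_{i_k}\|_\infty^2$. Since the $i_k$ are distinct steps in $[s]$, $\sum_{k\in Q}\|\bm w_{i_k}\|_\infty^2\le\sum_{i\in[s]}\|\bm w_i\|_\infty^2$; and since $Q\subseteq\bar S^c$, $\sum_{k\in Q}u_k^2\le\|\bm v_{\bar S^c}\|_2^2$. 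Combining these, $\|\bm v_{S^c}\|_2^2\le(1+1/c)\|\bm v_{\bar S^c}\|_2^2+4(1+c)\sum_i\|\bm w_i\|_\infty^2$, and substituting the noiseless comparison of the previous paragraph finishes the proof.

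The step I expect to be delicate is the noise accounting just above: bounding each missed coordinate's magnitude by a single witness would cost a spurious factor $|I|-s$ in front of $\sum_i\|\bm w_i\|_\infty^2$. What rescues it is precisely that $|R|=|Q|$ (so a bijection exists) together with the fact that distinct peeled coordinates are removed at distinct steps (so the $\|\bm w_i\|_\infty^2$ add rather than multiply). The remaining bookkeeping — ties in the $\argmax$, the degenerate cases $s=\hat s$, $\hat s=0$, $\bm v=\bm 0$, or $|\hat S|<\hat s$ — is routine.
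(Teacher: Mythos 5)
Your proposal is correct and follows essentially the same route as the paper's proof: comparing the peeled set $S$ with the exact top-$s$ set via a bijection between $\bar S\setminus S$ and $S\setminus\bar S$, using the $\argmax$ selection rule at the step each swapped-in coordinate is peeled together with $(a+b)^2\le(1+1/c)a^2+(1+c)b^2$ (this is exactly the paper's auxiliary Lemma on subset accuracy), and then an averaging/rearrangement comparison of the tail sums to produce the factor $\tfrac{|I|-s}{|I|-\hat s}$. The only differences are notational (order statistics versus index-set partitions $S^c\cap T^c$ and $S^c\cap T$).
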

Lemma \ref{lm: noisy hard thresholding overall accuracy} is proved in Section \ref{sec: proof of noisy hard thresholding properties}. In comparison, the exact, non-private projection operator $P_s$ satisfies (\cite{jain2014iterative}, Lemma 1)
\begin{align*}
	\|P_s(\bm v) - \bm v\|_2^2 \leq \frac{|I|-s}{|I|-\hat s} \|\hat{\bm v} - \bm v\|_2^2.
\end{align*}
Algorithm \ref{algo: noisy hard thresholding}, therefore, is as accurate as its non-private counterpart up to a constant multiplicative factor and some additive noise. Taking the private top-$s$ projection algorithm, we have the following noisy iterative hard thresholding algorithm.

\vspace{2mm}

\begin{algorithm}[h]	\caption{Noisy Iterative Hard Thresholding (NoisyIHT)}
\label{algo: noisy iterative hard thresholding}
	\SetAlgoLined
	\SetKwInOut{Input}{Input}
	\SetKwInOut{Output}{Output}
	\SetKwFunction{NoisyHT}{NoisyHT}
	\Input{Objective function $\L_n(\bth, \bm Z) = n^{-1}\sum_{i=1}^n l(\bth, \bm z_i)$, data set $\bm Z$, sparsity level $s$, step size $\eta^0$, privacy parameters $\varepsilon, \delta$, noise scale $B$, number of iterations $T$.}
	Initialize $\bth^0$ with $\|\bth^0\|_0 \leq s$, set $t = 0$\;
	\For{$t$ in $0$ \KwTo $T-1$}{
		$\bth^{t+1} = \NoisyHT\left(\bth^t - \eta^0\nabla \L_n(\bth^t; \bm Z), \bm Z, s, \varepsilon/T, \delta/T, (\eta^0/n) B\right)$\; 
	}
	\Output{$\bth^{(T)}$.}
\end{algorithm}

\vspace{2mm}

Compared to the non-private Algorithm \ref{algo: iht}, we simply replaced the exact projection $P_s$ with the noisy projection given by Algorithm \ref{algo: noisy hard thresholding}. The privacy guarantee of Algorithm \ref{algo: noisy iterative hard thresholding} is then inherited from that of Algorithm \ref{algo: noisy hard thresholding}.

\begin{Lemma}\label{lm: noisy iterative hard thresholding privacy}
	If for every pair of adjacent data $\bm z, \bm z'$ and every $\bth \in \Theta$ we have $\| \nabla l(\bth; \bm z)- \nabla l(\bth; \bm z')\|_\infty < B$, then NoisyIHT is an $(\varepsilon, \delta)$-differentially private algorithm.
\end{Lemma}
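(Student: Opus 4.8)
The plan is to derive the privacy of Algorithm \ref{algo: noisy iterative hard thresholding} from the composition theorem (Fact \ref{fc: composition theorems}) together with the privacy guarantee of NoisyHT (Lemma \ref{lm: noisy hard thresholding privacy}). The algorithm performs $T$ iterations, and each iteration is a single call to \texttt{NoisyHT} with privacy parameters $\varepsilon/T, \delta/T$. If I can show each such call is $(\varepsilon/T, \delta/T)$-differentially private as a function of the data $\bm Z$, then, since iteration $t+1$ only accesses $\bm Z$ through this call (the previous iterate $\bth^t$ is itself the output of earlier private computations), the composition theorem gives that the whole algorithm is $(\varepsilon, \delta)$-differentially private. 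I should be slightly careful here: the clean statement of Fact \ref{fc: composition theorems} uses identical parameters $(\varepsilon,\delta)$ across the components, so I would invoke it with the rescaled parameters $(\varepsilon/T, \delta/T)$ in the role of $(\varepsilon,\delta)$ and $k = T$, yielding $(\varepsilon,\delta)$ overall; alternatively I can cite the basic composition theorem directly. The adaptive nature of composition is exactly what is needed, since $\bth^{t+1}$ depends on $\bth^t$.

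The key remaining step is to verify the sensitivity hypothesis of Lemma \ref{lm: noisy hard thresholding privacy} for the vector fed into \texttt{NoisyHT} at iteration $t$, namely $\bm v(\bm Z) = \bth^t - \eta^0 \nabla \L_n(\bth^t; \bm Z)$. Here I must treat $\bth^t$ as fixed (it is the output of the previous step, to which the current step applies post-processing plus a fresh private mechanism — this is precisely the adaptive composition setup). For two adjacent data sets $\bm Z, \bm Z'$ differing only in the $i$-th observation,
\begin{align*}
\|\bm v(\bm Z) - \bm v(\bm Z')\|_\infty
= \eta^0 \left\| \nabla \L_n(\bth^t; \bm Z) - \nabla \L_n(\bth^t; \bm Z') \right\|_\infty
= \frac{\eta^0}{n} \left\| \nabla l(\bth^t; \bm z_i) - \nabla l(\bth^t; \bm z_i') \right\|_\infty,
\end{align*}
because $\L_n = n^{-1}\sum_i l(\cdot, \bm z_i)$ and all other summands cancel. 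By the hypothesis of the lemma, $\| \nabla l(\bth; \bm z) - \nabla l(\bth; \bm z')\|_\infty < B$ for every $\bth$ and every adjacent pair, so the above is bounded by $(\eta^0/n) B$. Since \texttt{NoisyHT} is invoked with noise scale exactly $\lambda = (\eta^0/n) B$, Lemma \ref{lm: noisy hard thresholding privacy} applies and each iteration is $(\varepsilon/T, \delta/T)$-differentially private.

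I do not expect a serious obstacle here; the argument is essentially bookkeeping. The one point requiring a little care is articulating why the dependence of $\bth^t$ on $\bm Z$ does not spoil the per-step sensitivity bound — the resolution is that adaptive composition allows each component mechanism $M_t: \mathcal X^n \times \mathcal R_{t-1} \to \mathcal R_t$ to take the previous output as an auxiliary input, and the privacy requirement on $M_t$ is a uniform (worst-case over the auxiliary input) bound; our sensitivity computation above is indeed uniform over $\bth^t$, precisely because the lemma's hypothesis holds for every $\bth \in \Theta$. With that observed, composing the $T$ steps via Fact \ref{fc: composition theorems} completes the proof.
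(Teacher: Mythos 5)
Your proposal is correct and takes essentially the same route as the paper's proof: bound the per-iteration $\ell_\infty$ sensitivity of the vector passed to \texttt{NoisyHT} by $(\eta^0/n)B$, invoke Lemma \ref{lm: noisy hard thresholding privacy} to make each iteration $(\varepsilon/T,\delta/T)$-differentially private, and compose the $T$ iterations via Fact \ref{fc: composition theorems}. Your additional remark that the sensitivity bound must hold uniformly over the auxiliary input $\bth^t$ (which is exactly what the hypothesis ``for every $\bth\in\Theta$'' provides) makes explicit a point the paper's terse proof leaves implicit, but it is the same argument.
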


The lemma is proved in Section \ref{sec: proof of lm: noisy iterative hard thresholding privacy}. Similar to the noisy gradient descent (Algorithm \ref{algo: private glm}), the privacy guarantee of Algorithm \ref{algo: noisy iterative hard thresholding} is valid for any choice of $T$, however a fast rate of convergence would allow us to select a small $T$ and thereby introducing less noise into the algorithm. To our delight, restricted strong convexity and restricted smoothness again lead to a linear rate of convergence even in the high-dimensional sparse setting.

\begin{Theorem}\label{thm: noisy iterative hard theresholding convergence}
	Let $\hat \bth = \argmin_{\|\bth\|_0 \leq s^*} \L_n(\bth; \bm Z)$. For iteration number $t \geq 0$, suppose
	\begin{align}
	& \langle \nabla \L_n(\bth^t) - \nabla \L_n(\hat \bth), \bth^t - \hat \bth \rangle \geq \alpha \|\bth^t - \hat \bth\|_2^2 \label{eq: rsc general} \\
	& \langle \nabla \L_n(\bth^{t+1}) - \nabla \L_n(\hat \bth), \bth^{t+1} - \hat \bth \rangle \leq \gamma \|\bth^{t+1} - \hat \bth\|_2^2. \label{eq: rsm general}
	\end{align}
	for constants $0 < \alpha < \gamma$. Let $\bm w_1, \bm w_2, \cdots, \bm w_s$ be the noise vectors added to $\bth^t - \eta^0\nabla \L_n(\bth^t; \bm Z)$ when the support of $\bth^{t+1}$ is iteratively selected, $S^{t+1}$ be the support of $\bth^{t+1}$, and $\tilde {\bm w}$ be the noise vector added to the selected $s$-sparse vector. Then, for $\eta_0 = {2}/{3\gamma}$, there exists an absolute constant $c_0$ so that, choosing $s \geq c_0(\gamma/\alpha)^2 s^*$ guarantees
	\begin{align*}
	\L_n(\bth^{t+1}) - \L_n(\hat \bth) &\leq \left(1- \rho \cdot \frac{\alpha}{\gamma} - \frac{2s^*}{s + s^*}\right)\left(\L_n(\bth^t) - \L_n(\hat \bth)\right) + C_\gamma\left(\sum_{i \in [s]} \|\bm w_i\|^2_\infty + \|\tilde {\bm w}_{S^{t+1}}\|_2^2\right),
	\end{align*}
	where $0 < \rho < 1$ is an absolute constant, and $C_\gamma > 0$ is a constant depending on $\gamma$. 
\end{Theorem}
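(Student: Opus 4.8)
The plan is to mimic the standard one-step contraction analysis for iterative hard thresholding (IHT), as in \cite{jain2014iterative}, but carefully tracking the extra error introduced by the noisy projection $\tilde P_s$ from Lemma \ref{lm: noisy hard thresholding overall accuracy}. Write $\bm g^t = \bth^t - \eta^0 \nabla\L_n(\bth^t;\bm Z)$ for the gradient-step iterate, so that $\bth^{t+1} = \tilde P_s(\bm g^t) + \tilde{\bm w}_{S^{t+1}}$. The first step is to apply Lemma \ref{lm: noisy hard thresholding overall accuracy} with the comparison vector $\hat{\bm v}$ chosen to be (the restriction to the relevant index set of) a suitable $s^*$-sparse vector built from $\hat\bth$ and $\bth^t$ — concretely, take $\hat{\bm v}$ supported on $\supp(\hat\bth)\cup\supp(\bth^t)$, which has size at most $s^*+s$, giving $\|\tilde P_s(\bm g^t)-\bm g^t\|_2^2 \le (1+1/c)\frac{|I|-s}{|I|-\hat s}\|\hat{\bm v}-\bm g^t\|_2^2 + 4(1+c)\sum_{i\in[s]}\|\bm w_i\|_\infty^2$, and bound the ratio $\frac{|I|-s}{|I|-\hat s} \le 1$ after noting $|I| = d$, $\hat s \le s^* + s$ and $s \gg s^*$, so that in fact $\frac{|I|-s}{|I|-\hat s}\le 1 - \tfrac{s - s^*}{d - s^*}$ is not quite what we want; rather we want the bound in terms of $\frac{2s^*}{s+s^*}$-type quantities, which comes from comparing against the best $s^*$-sparse approximation error directly as in the IHT literature. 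I would follow Jain–Tewari–Kar's argument: relate $\|\bth^{t+1}-\bm g^t\|_2^2$ to $\|\hat\bth - \bm g^t\|_2^2$ with the multiplicative factor $\frac{d-s}{d-s^*} = 1 - \frac{s-s^*}{d-s^*}$, and then translate this geometric near-projection inequality into a descent inequality on the objective $\L_n$.

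The second step is to convert the distance decrease into a function-value decrease using restricted smoothness \eqref{eq: rsm general} and restricted strong convexity \eqref{eq: rsc general}. The standard chain is: by $\gamma$-restricted smoothness, $\L_n(\bth^{t+1}) - \L_n(\hat\bth) \le \langle\nabla\L_n(\hat\bth), \bth^{t+1}-\hat\bth\rangle + \tfrac{\gamma}{2}\|\bth^{t+1}-\hat\bth\|_2^2$ (or the inner-product form given in the hypotheses), and then one uses the defining property of $\bm g^t$ — that it is a gradient step with step size $\eta^0 = 2/3\gamma$ — together with the near-optimality of $\bth^{t+1}$ as an $s$-sparse approximation of $\bm g^t$, to obtain $\L_n(\bth^{t+1}) - \L_n(\bth^t) \le -c\cdot\tfrac{1}{\gamma}\|\nabla_{S}\L_n(\bth^t)\|_2^2 + (\text{noise terms})$ for the appropriate support set $S$, and finally invoke $\alpha$-restricted strong convexity to lower-bound the squared gradient norm by $\alpha(\L_n(\bth^t) - \L_n(\hat\bth))$ (via the Polyak–Łojasiewicz-type inequality that RSC implies on the sparse set). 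Collecting terms yields the contraction factor $1 - \rho\alpha/\gamma - \frac{2s^*}{s+s^*}$, where the $\rho$ absorbs the universal constants from the step-size choice and the $(1+1/c),(1+c)$ factors in Lemma \ref{lm: noisy hard thresholding overall accuracy} (choosing $c$ a fixed constant), and where $s \ge c_0(\gamma/\alpha)^2 s^*$ ensures $\frac{2s^*}{s+s^*}$ is small enough that the overall factor is strictly less than $1$. The residual noise terms $\sum_{i\in[s]}\|\bm w_i\|_\infty^2$ from projection and $\|\tilde{\bm w}_{S^{t+1}}\|_2^2$ from the final perturbation get collected into $C_\gamma(\cdots)$, with $C_\gamma$ coming from the $\gamma/2$ smoothness prefactor and the step-size normalization.

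The main obstacle I anticipate is the bookkeeping in step two: correctly propagating the $s$-sparse projection inequality through the gradient-step identity to produce a clean descent on $\L_n$ rather than on $\|\bth^t - \hat\bth\|_2$. In particular, the cross terms between the projection error, the gradient, and the noise must be handled so that the noise contributes only additively (not multiplicatively with the iterate norm), which is why the noise vectors are introduced as named quantities in the statement — one needs Young's inequality at the right places with the free parameter $c$ to split products like $\langle \tilde{\bm w}_{S^{t+1}}, \bm g^t - \hat\bth\rangle$. A secondary subtlety is that the RSC/RSM hypotheses \eqref{eq: rsc general}–\eqref{eq: rsm general} are stated only at the iterates $\bth^t, \bth^{t+1}, \hat\bth$ (not uniformly), so the argument must be arranged to only ever invoke them at those points; this is compatible with the IHT analysis but requires care that no intermediate point (e.g. a convex combination) is secretly needed. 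Once these are handled, the bound follows by assembling the pieces and choosing $c$ and $c_0$ to make the constants work out.
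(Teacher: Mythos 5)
Your overall plan is the same as the paper's: expand $\L_n(\bth^{t+1})-\L_n(\bth^t)$ via restricted smoothness, control the hard-thresholding step through the near-projection property of Lemma \ref{lm: noisy hard thresholding overall accuracy} (with Young's inequality splitting off the Laplace noise additively), and close the loop by converting the restricted gradient energy back into $\L_n(\bth^t)-\L_n(\hat\bth)$ via the PL-type consequence of restricted strong convexity (Lemma 6 of Jain--Tewari--Kar, which the paper invokes verbatim). Your remarks about needing the free parameter $c$ in Young's inequality and about only ever invoking \eqref{eq: rsc general}--\eqref{eq: rsm general} at the iterates are both on target.

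There is, however, one concrete gap that would prevent the proof from closing as written: the multiplicative factor in the near-projection step. You propose to compare $\|\bth^{t+1}-\bm g^t\|_2^2$ with $\|\hat\bth-\bm g^t\|_2^2$ over the full index set $[d]$, obtaining the ratio $\frac{d-s}{d-s^*}$. In the high-dimensional regime $d\gg s$ this ratio is $1-o(1)$, so it yields no contraction and the factor $\frac{2s^*}{s+s^*}$ never appears; you notice this ("is not quite what we want") but then commit to the global ratio anyway. The resolution, which is the entire content of Lemma \ref{lm: first two terms in master expansion 2} in the paper, is to first restrict everything to the small index set $I^t=S^{t+1}\cup S^t\cup S^*$ (legitimate because $\bth^t,\bth^{t+1},\hat\bth$ are all supported there), peel off the gradient energy on $I^t\setminus(S^t\cup S^*)$ using Lemma \ref{lm: noisy hard thresholding subset accuracy}, and only then apply Lemma \ref{lm: noisy hard thresholding overall accuracy} on a set of cardinality at most $2s^*+s$, so that the ratio $\frac{|I|-s}{|I|-\hat s}$ becomes at most $\frac{2s^*}{s+s^*}$. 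Without this localization the argument degenerates; with it, the rest of your outline (choice of $\eta^0=2/3\gamma$, choice of $c$, and $s\geq c_0(\gamma/\alpha)^2 s^*$ to make $\frac{3s^*}{s+s^*}$ dominated by $\frac{\alpha^2}{\gamma(\gamma-\eta\alpha)}$) assembles into the stated contraction exactly as in the paper.
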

Theorem \ref{thm: noisy iterative hard theresholding convergence} is proved in Section \ref{sec: proof of thm: noisy iterative hard thresholding convergence}. While conditions \eqref{eq: rsc general} and \eqref{eq: rsm general} are similar to the ordinary strong convexity and smoothness conditions in appearance, they are in fact much weaker because $\hat\bth$, $\bth^t$ are both $s$-sparse. It is unclear yet, however, whether these weaker conditions are satisfied by the GLM log-likelihood function, and whether the linear convergence in terms of $\L_n$ implies any positive result for parameter estimation accuracy $\|\bth^{(T)} - \hat\bth\|^2_2$. In the next section, we resolve these issues for high-dimensional sparse GLMs and obtain a parameter estimation accuracy result.  

\subsubsection{Noisy Iterative Hard Thresholding for High-Dimensional Sparse GLMs}\label{sec: noisy iht for sparse glm}
Assuming that the true GLM parameter vector $\bbeta^*$ satisfies $\|\bbeta^*\|_0 \leq s^*$, we now specialize the results of Section \ref{sec: general iht algo} to the GLM negative log-likelihood function
\begin{align*}
\L_n(\bbeta; \bm Z) = \frac{1}{n}\sum_{i=1}^n \left(\psi(\bm x_i^\top \bbeta) - g(y_i)\bm x_i^\top \bbeta\right).
\end{align*}
\begin{algorithm}[H]\label{algo: private sparse glm}
	\SetAlgoLined
	\SetKwInOut{Input}{Input}
	\SetKwInOut{Output}{Output}
	\SetKwFunction{NoisyHT}{NoisyHT}
	\Input{$\L_n(\bbeta, \bm Z)$, data set $\bm Z$, sparsity level $s$, step size $\eta^0$, privacy parameters $\varepsilon, \delta$, noise scale $B$, number of iterations $T$, truncation parameter $R$.}
	Initialize $\bbeta^0$ with $\|\bbeta^0\|_0 \leq s$, set $t = 0$\;
	\For{$t$ in $0$ \KwTo $T-1$}{
		Compute $\bbeta^{t + 0.5} = \bbeta^{(T)}- (\eta_0/n)\sum_{i=1}^n  (\psi'(\bm x_i^\top \bbeta^t)-\Pi_{R}(y_i))\bm x_i$\;
		$\bbeta^{t+1} = \NoisyHT\left(\bbeta^{t + 0.5}, \bm Z, s, \varepsilon/T, \delta/T, \eta^0 B/n\right)$\; 
	}
	\Output{$\bbeta^{(T)}$.}
	\caption{Differentially Private Sparse Generalized Linear Regression}
\end{algorithm}
\vspace{2mm}
Some assumptions about the data set $\{(y_i, \bm x_i)\}_{i \in [n]}$ and its distribution will be helpful for analyzing the accuracy and privacy guarantees of Algorithm \ref{algo: private sparse glm}. The necessary assumptions for the high-dimensional sparse case are identical to those for the low-dimensional case, except with (D1) replaced by (D1'), as follows.
\begin{itemize}
	\item [(D1')] Bounded design: there is a constant $\sigma_{\bm x}< \infty$ such that $\|\bm x\|_{\infty} < \sigma_{\bm x}$ almost surely.
\end{itemize}
Because Algorithm \ref{algo: private sparse glm} is a special case of the general Algorithm \ref{algo: noisy iterative hard thresholding}, the privacy guarantee of Algorithm \ref{algo: private sparse glm} reduces to specializing Lemma \ref{lm: noisy iterative hard thresholding privacy} to GLMs, as follows.
\begin{Lemma}\label{lm: glm privacy}
	If assumptions (D1') and (G1) are true, then choosing $B = 4(R + c_1)\sigma_{\bm x}$ guarantees that Algorithm \ref{algo: private sparse glm} is $(\varepsilon, \delta)$-differentially private.
\end{Lemma}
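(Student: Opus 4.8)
The plan is to recognize Algorithm \ref{algo: private sparse glm} as a special case of the generic Algorithm \ref{algo: noisy iterative hard thresholding} and then invoke Lemma \ref{lm: noisy iterative hard thresholding privacy}. Concretely, take the per-sample loss to be $l(\bbeta; \bm z_i) = \psi(\bm x_i^\top \bbeta) - \Pi_R(y_i)\,\bm x_i^\top \bbeta$, so that $\nabla l(\bbeta; \bm z_i) = (\psi'(\bm x_i^\top \bbeta) - \Pi_R(y_i))\bm x_i$ and hence $\frac1n\sum_{i=1}^n \nabla l(\bbeta; \bm z_i) = \frac1n\sum_{i=1}^n (\psi'(\bm x_i^\top \bbeta) - \Pi_R(y_i))\bm x_i$, which is exactly the gradient step executed inside Algorithm \ref{algo: private sparse glm}. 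It therefore suffices to verify the hypothesis of Lemma \ref{lm: noisy iterative hard thresholding privacy}, namely the uniform $\ell_\infty$ sensitivity bound $\|\nabla l(\bbeta; \bm z) - \nabla l(\bbeta; \bm z')\|_\infty < B$ for every $\bbeta$ and every pair of adjacent data $\bm z, \bm z'$.

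To establish this, I would first bound the gradient at a single datum $\bm z = (y, \bm x)$. Since $\nabla l(\bbeta; \bm z) = (\psi'(\bm x^\top \bbeta) - \Pi_R(y))\bm x$, we have $\|\nabla l(\bbeta; \bm z)\|_\infty \le |\psi'(\bm x^\top \bbeta) - \Pi_R(y)|\cdot\|\bm x\|_\infty$. The scalar factor is controlled by combining assumption (G1), which gives $|\psi'(\bm x^\top \bbeta)| \le \|\psi'\|_\infty < c_1$, with the elementary fact that projection onto $[-R, R]$ satisfies $|\Pi_R(y)| \le R$ for all $y$; this yields $|\psi'(\bm x^\top \bbeta) - \Pi_R(y)| < c_1 + R$. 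The design factor is controlled directly by assumption (D1'): $\|\bm x\|_\infty < \sigma_{\bm x}$. Combining, $\|\nabla l(\bbeta; \bm z)\|_\infty < (c_1 + R)\sigma_{\bm x}$, uniformly in $\bbeta$ and $\bm z$.

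A triangle inequality then gives, for any $\bbeta$ and adjacent $\bm z, \bm z'$,
\[
\|\nabla l(\bbeta; \bm z) - \nabla l(\bbeta; \bm z')\|_\infty \le \|\nabla l(\bbeta; \bm z)\|_\infty + \|\nabla l(\bbeta; \bm z')\|_\infty < 2(c_1 + R)\sigma_{\bm x} \le 4(c_1 + R)\sigma_{\bm x} = B,
\]
which is precisely the condition demanded by Lemma \ref{lm: noisy iterative hard thresholding privacy}. Invoking that lemma (whose own proof composes the NoisyHT privacy guarantee of Lemma \ref{lm: noisy hard thresholding privacy} across the $T$ iterations with privacy budget $(\varepsilon/T, \delta/T)$ each, together with post-processing) then concludes that Algorithm \ref{algo: private sparse glm} is $(\varepsilon, \delta)$-differentially private.

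There is no genuine obstacle here: the statement is a mechanical specialization of Lemma \ref{lm: noisy iterative hard thresholding privacy}. The only two points deserving care are, first, that the privacy analysis must be carried out with the \emph{truncated} per-sample function $l$ rather than the original GLM log-likelihood term, since only the truncated residual $\psi'(\bm x^\top\bbeta) - \Pi_R(y)$ is bounded (the raw $g(y)$ may be unbounded); and second, that the governing norm in NoisyHT/NoisyIHT is $\ell_\infty$, so the correct design hypothesis is (D1') — a bound on $\|\bm x\|_\infty$ — rather than the $\ell_2$-type bound (D1) used in the low-dimensional analysis.
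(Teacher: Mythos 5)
Your proposal is correct and follows essentially the same route as the paper: bound the per-sample gradient $(\psi'(\bm x^\top\bbeta)-\Pi_R(y))\bm x$ in $\ell_\infty$ via (G1) and (D1'), apply the triangle inequality over the two differing data to get sensitivity at most $2(R+c_1)\sigma_{\bm x}\le B$, and conclude by Lemma \ref{lm: noisy iterative hard thresholding privacy}. Your version is, if anything, slightly more explicit than the paper's about why the truncated residual (rather than the raw $g(y)$) must be used and why the $\ell_\infty$ design bound (D1') is the right hypothesis.
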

The lemma is proved in Section \ref{sec: proof of lm: glm privacy}.

For the parameter estimation accuracy of Algorithm \ref{algo: private sparse glm}, Fact \ref{lm: glm rsc and rsm} combined with the sparsity of $\hat\bbeta$, $\bbeta^*$ and $\bbeta^t$ for every $t$ are sufficient for conditions \eqref{eq: rsc general} and \eqref{eq: rsm general} in Theorem \ref{thm: noisy iterative hard theresholding convergence} to hold. Invoking Theorem \ref{thm: noisy iterative hard theresholding convergence} in a proof by induction then leads to an upper bound for $\|\bbeta^{(T)} - \bbeta^*\|^2_2$. Below we state the main result; the detailed proof is in Section \ref{sec: proof of thm: glm upper bound}.

\begin{Theorem}\label{thm: glm upper bound}
	Let $\{(y_i, \bm x_i)\}_{i \in [n]}$ be an i.i.d. sample from the GLM \eqref{eq: glm definition} with the true parameter vector $\|\bbeta^*\|_0 \leq s^*$. Suppose assumptions (D1'), (D2), (G1) and (G2) are true. Let the parameters of Algorithm $\ref{algo: private sparse glm}$ be chosen as follows.
	\begin{itemize}
		\item Set sparsity level $s = 4c_0(\gamma/\alpha)^2 s^*$ and step size $\eta^0 = 1/(2\gamma)$, where the constant $c_0$ is defined in Theorem \ref{thm: noisy iterative hard theresholding convergence} and constants $\alpha$, $\gamma$ are defined in Proposition \ref{lm: glm rsc and rsm}.
		\item Set $R = \min\left(\mathrm{ess}\sup |y_1|, c_1 + \sqrt{2c_2c(\sigma)\log n}\right) \lesssim \sqrt{c(\sigma) \log n}.$
		\item Noise scale $B$. Set $B = 4(R + c_1)\sigma_{\bm x}$.
		\item Number of iterations $T$. Let $T = (2\gamma/\rho\alpha)\log(6\gamma n)$, where $\rho$ is an absolute constant defined in Theorem 1.1.
		\item Initialization $\bbeta^0$. Choose $\bbeta^0$ so that  $\|\bbeta^0\|_0 \leq s$ and $\|\bbeta^0 - \hat\bbeta\|_2 \leq 3$, where $\hat\bbeta$ $= \argmin_{\|\bbeta\|_0 \leq s^*} \L_n(\bbeta; Z)$.
	\end{itemize}
	If $n \geq K \cdot \left(Rs^*\log d \sqrt{\log(1/\delta)}\log n/\varepsilon\right)$ for a sufficiently large constant $K$, it holds with probability at least $1 - c_3\exp(-c_4\log(d/s^*\log n)) - c_3\exp(-c_4n) - c_3\exp(-c_4\log n)$ that $\bbeta^{(T)}$, the output of Algorithm \ref{algo: private sparse glm} satisfies
	\begin{align}
	\|\bbeta^{(T)} - \bbeta^*\|^2_2 \lesssim c(\sigma)\left(\frac{s^*\log d}{n} + \frac{(s^*\log d)^2 \log(1/\delta)\log^3 n}{n^2\varepsilon^2}\right). \label{eq: glm upper bound}
	\end{align}
\end{Theorem}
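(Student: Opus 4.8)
The plan is to deduce Theorem~\ref{thm: glm upper bound} from the abstract convergence guarantee in Theorem~\ref{thm: noisy iterative hard theresholding convergence} by: (i) verifying that the GLM log-likelihood satisfies the restricted strong convexity and smoothness conditions \eqref{eq: rsc general}--\eqref{eq: rsm general} along the trajectory of Algorithm~\ref{algo: private sparse glm}; (ii) running the one-step recursion inside an induction that keeps the iterates in the region where Fact~\ref{lm: glm rsc and rsm} is effective; and (iii) translating the resulting objective-value bound into an $\ell_2$ estimation bound by comparing against $\bbeta^*$ rather than $\hat\bbeta$.

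\textbf{Step 1: restricted strong convexity/smoothness for the iterates.} All iterates produced by \texttt{NoisyHT} are $s$-sparse and $\hat\bbeta$ is $s^*$-sparse with $s^*\le s$, so every $\bbeta^t-\hat\bbeta$ is supported on at most $s+s^*$ coordinates and $\|\bbeta^t-\hat\bbeta\|_1\le\sqrt{s+s^*}\,\|\bbeta^t-\hat\bbeta\|_2$. Substituting this into \eqref{eq: glm rsc} (in the branch $\|\bbeta^t-\hat\bbeta\|_2\le 3$, to be maintained below), the $\ell_1$ slack term is at most $\frac{c^2\sigma_{\bm x}^2}{2\alpha}\frac{(s+s^*)\log d}{n}\|\bbeta^t-\hat\bbeta\|_2^2$, which is dominated by $\frac{\alpha}{2}\|\bbeta^t-\hat\bbeta\|_2^2$ once $n\gtrsim (s+s^*)\log d$ (implied by the sample-size hypothesis). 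Hence \eqref{eq: rsc general} holds with constant $\alpha/2$; applying the same reduction to \eqref{eq: glm rsm} gives \eqref{eq: rsm general} with constant $2\gamma$. With $s=4c_0(\gamma/\alpha)^2s^*$ the requirement $s\ge c_0(\gamma/\alpha)^2 s^*$ of Theorem~\ref{thm: noisy iterative hard theresholding convergence} is met, and $\frac{2s^*}{s+s^*}\le\frac{1}{2c_0(\gamma/\alpha)^2}$ is small enough that the contraction factor $\kappa:=1-\rho\alpha/\gamma-\frac{2s^*}{s+s^*}$ stays in $(0,1)$, bounded away from $1$ by an amount of order $\alpha/\gamma$.

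\textbf{Step 2: induction and noise control.} I would show by induction that $\|\bbeta^t-\hat\bbeta\|_2\le 3$ for all $t\le T$; the base case is the initialization hypothesis. Given the bound at step $t$, Step~1 licenses Theorem~\ref{thm: noisy iterative hard theresholding convergence}, yielding
\[
\L_n(\bbeta^{t+1})-\L_n(\hat\bbeta)\le\kappa\bigl(\L_n(\bbeta^{t})-\L_n(\hat\bbeta)\bigr)+C_\gamma\,\xi_t,\qquad \xi_t:=\sum_{i\in[s]}\|\bm w_i\|_\infty^2+\|\tilde{\bm w}_{S^{t+1}}\|_2^2.
\]
Each Laplace coordinate in iteration $t$ has scale $\mu=\tfrac{2\eta^0 B\,T\sqrt{3s\log(T/\delta)}}{n\varepsilon}$; a Laplace tail bound and a union bound over the $O(sT)$ noise vectors and $d$ coordinates give, with probability at least $1-c_3\exp(-c_4\log(d/(s^*\log n)))$, that $\|\bm w_i\|_\infty\lesssim\mu\log d$ and $\|\tilde{\bm w}_{S^{t+1}}\|_2^2\lesssim s\mu^2\log^2 d$ for every iteration, so $\xi_t\lesssim s\mu^2\log^2 d$. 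Plugging in $s\asymp s^*$, $\eta^0\asymp 1/\gamma$, $B\asymp R\lesssim\sqrt{c(\sigma)\log n}$, $T\asymp\log n$ gives $C_\gamma\xi_t\lesssim c(\sigma)\frac{(s^*\log d)^2\log(1/\delta)\log^3 n}{n^2\varepsilon^2}=:\Xi$. Unrolling, $\L_n(\bbeta^{t+1})-\L_n(\hat\bbeta)\le\kappa^{t+1}\bigl(\L_n(\bbeta^0)-\L_n(\hat\bbeta)\bigr)+\tfrac{C_\gamma}{1-\kappa}\max_t\xi_t$; since \eqref{eq: glm rsm} and the initialization give $\L_n(\bbeta^0)-\L_n(\hat\bbeta)\lesssim\gamma$ (the cross term $\langle\nabla\L_n(\hat\bbeta),\bbeta^0-\hat\bbeta\rangle$ being of lower order by the gradient deviation bound below), and $T=(2\gamma/\rho\alpha)\log(6\gamma n)$ forces $\kappa^T\lesssim 1/(\gamma n)$, we get $\L_n(\bbeta^{t+1})-\L_n(\hat\bbeta)\lesssim 1/n+\Xi$. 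The hypothesis $n\ge KRs^*\log d\sqrt{\log(1/\delta)}\log n/\varepsilon$ makes $\Xi\lesssim 1$, and combined with Step~3's conversion this keeps $\|\bbeta^{t+1}-\hat\bbeta\|_2$ below a small absolute constant, hence $\le 3$, closing the induction.

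\textbf{Step 3: from objective value to estimation error.} Since $\bbeta^*$ is feasible for the $s^*$-sparse minimization, $\L_n(\hat\bbeta)\le\L_n(\bbeta^*)$, so $\L_n(\bbeta^{(T)})-\L_n(\bbeta^*)\le\L_n(\bbeta^{(T)})-\L_n(\hat\bbeta)$. Applying restricted strong convexity between $\bbeta^{(T)}$ and $\bbeta^*$ (both $O(s)$-sparse), $\frac{\alpha}{2}\|\bbeta^{(T)}-\bbeta^*\|_2^2\le\L_n(\bbeta^{(T)})-\L_n(\hat\bbeta)+\|\nabla\L_n(\bbeta^*)\|_\infty\sqrt{s+s^*}\,\|\bbeta^{(T)}-\bbeta^*\|_2$. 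A standard concentration argument — each coordinate of $\nabla\L_n(\bbeta^*)=\frac1n\sum_i(\psi'(\bm x_i^\top\bbeta^*)-g(y_i))\bm x_i$ is an average of $n$ i.i.d.\ mean-zero, sub-exponential (by (D1'), (G1)) summands of variance $O(c(\sigma))$ — gives $\|\nabla\L_n(\bbeta^*)\|_\infty\lesssim\sqrt{c(\sigma)\log d/n}$ with probability at least $1-c_3\exp(-c_4\log n)$. Young's inequality absorbs the cross term, leaving $\|\bbeta^{(T)}-\bbeta^*\|_2^2\lesssim\frac1\alpha\bigl(\L_n(\bbeta^{(T)})-\L_n(\hat\bbeta)\bigr)+\frac{s^*c(\sigma)\log d}{\alpha n}\lesssim c(\sigma)\bigl(\frac{s^*\log d}{n}+\frac{(s^*\log d)^2\log(1/\delta)\log^3 n}{n^2\varepsilon^2}\bigr)$, which is the claimed bound; the failure probability is the union of those from Fact~\ref{lm: glm rsc and rsm}, the Laplace bound, and the gradient deviation bound, matching the statement.

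\textbf{Main obstacle.} I expect Step~2 to be the crux: one must simultaneously keep the iterates inside the radius-$3$ ball where Fact~\ref{lm: glm rsc and rsm} delivers genuine strong convexity, control all $O(sT)$ Laplace noise vectors uniformly with the prescribed (somewhat delicate) probability, and choose $T$ large enough to drive $\kappa^T$ below the statistical floor yet small enough that the injected noise (which scales like $T^2$) does not overwhelm the target rate. The sample-size condition is exactly what reconciles these competing demands, and checking that it suffices to close the induction is the technical heart of the argument; the tracking of the constants $\alpha,\gamma,\rho,c_0,C_\gamma$ and of the logarithmic factors through Theorem~\ref{thm: noisy iterative hard theresholding convergence} is routine but must be carried out carefully to land precisely on $\log^3 n$.
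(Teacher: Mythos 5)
Your proposal follows the paper's own proof essentially step for step: the same sparsity-based reduction of Fact~\ref{lm: glm rsc and rsm} to conditions \eqref{eq: rsc general}--\eqref{eq: rsm general}, the same induction keeping the iterates in the radius-$3$ ball so that Theorem~\ref{thm: noisy iterative hard theresholding convergence} can be applied and unrolled geometrically, and the same passage from the objective gap to the $\ell_2$ error via restricted strong convexity at $\bbeta^*$ combined with the bound $\|\nabla\L_n(\bbeta^*)\|_\infty\lesssim\sqrt{\log d/n}$. The one item you omit is the event $\{\Pi_R(y_i)=y_i\ \text{for all } i\in[n]\}$, which is needed to identify the algorithm's truncated gradient with $\nabla\L_n$ before any of the above applies; the paper controls it with probability $1-c_3\exp(-c_4\log n)$ via the sub-exponential moment generating function bound implied by (G2) and the choice of $R$, and this is exactly the $c_3\exp(-c_4\log n)$ term in the stated failure probability.
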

Theorem \ref{thm: glm upper bound} is proved in Section \ref{sec: proof of thm: glm upper bound}. Similar to the low-dimensional GLM algorithm, the step size, number of iterations and initialization are chosen to ensure convergence; the initialization condition, as in \cite{loh2015regularized}, is standard in the literature and can be extended to $\|\bbeta^0 - \hat\bbeta\|_2 \leq 3\max(1, \|\bbeta^*\|_2)$. 

The choice of truncation level $R$ is to ensure privacy while keeping as many data intact as possible; when the distribution of $y$ has bounded support, for example in the logistic model, it can be chosen to be an $O(1)$ constant and therby saving an extra factor of $O(\log n)$ in the second term of \eqref{eq: non-sparse glm upper bound}. The scaling of $n$ versus $d, s^*, \varepsilon$ and $\delta$ in Theorem \ref{thm: glm upper bound} is nearly optimal, as the corresponding lower bound, Theorem \ref{thm: high-dim glm lb}, shall show that no estimator can achieve low $\ell_2$ error unless the assumed scaling holds, and that the statistical accuracy of Algorithm \ref{algo: private sparse glm} cannot be further improved except possibly for factors of $\log n$.

%!TEX root = Privacy-GLM-JRSSB.tex
%%%%%%%%%%%%%%%%%%%%%%%%%%%%%%%%%%%%%%%%%%%%%%%%%%%%%%%%%%%%%%%%%%%
\section{Privacy-constrained Minimax Lower Bounds}\label{sec: glm lower bounds}

Section \ref{sec: glm upper bounds} proposed differentially private algorithms for estimating GLM parameters and obtained convergence rates for these algorithms. We shall show in this section that the convergence rates cannot be improved by any other $(\varepsilon, \delta)$-differentially private estimator beyond possibly factors of $\log n$, via privacy-constrained lower bounds of the form
\begin{align}\label{eq: lb general form}
\inf_{M \in \mathcal M_{\varepsilon, \delta}} \sup_{\bbeta \in \Theta} \E\|M(\bm y, \bm X) - \bbeta\|^2_2 \gtrsim r(n, d, \Theta, \sigma, \varepsilon, \delta),
\end{align}
where $\mathcal M_{\varepsilon, \delta}$ is the collection of all $(\varepsilon, \delta)$-differentially private estimators, $\Theta \subseteq \R^d$ is a parameter space to which the true value of $\bbeta$ is assumed to belong, and the expectation is taken over $\bm y, \bm X$ and the randomness of $M$. 

We shall provide precise forms of the lower bound $r(n, d, \Theta, \sigma, \varepsilon, \delta)$ for both the low-dimensional and high-dimensional sparse GLMs, via a broad generalization of the ``tracing attack'' argument \cite{bun2014fingerprinting, dwork2015robust, dwork2017exposed, steinke2017between} for privacy-constrained minimax lower bounds.

A tracing attack is an algorithm that takes a single candidate datum as input and attempts to infer whether this candidate belongs to a given data set or not, by comparing the candidate with some summary statistics computed from the data set. Statisticians can think of a tracing attack as a hypothesis test which rejects the null hypothesis that the candidate is out of the data set for large values of some test statistic. The hypothesis testing formulation naturally motivates some desiderata for a tracing attack: 
\begin{itemize}
	\item Soundness (type I error control): if the candidate does not belong to the data set, the tracing attack is likely to takes small values. 
	\item Completeness (type II error control): if the candidate does belong, the tracing attack is likely to take large values.
\end{itemize}
For example, \cite{dwork2015robust, kamath2018privately, cai2019cost} showed that, if the random sample $\bm X$  and the candidate $\bm z$ are drawn from a Gaussian distribution with mean $\bmu$ , tracing attacks of the form $\langle M(\bm X) - \bmu, \bm z - \bmu \rangle$ is sound and complete provided that $M(\bm X)$ is an accurate estimator of $\bmu$. This accuracy requirement in turn connects tracing attacks with risk lower bounds for differentially private algorithms: if an estimator $M(\bm X)$ is differentially private, it cannot possibly be too close to the estimand, or the existence of tracing attacks leads to a contradiction with the guarantees of differential privacy.

Designing sound and complete tracing attacks, therefore, is crucial to the sharpness of privacy-constrained minimax lower bounds. Besides the Gaussian mean tracing attack mentioned above, there are some successful tracing attacks proposed for specific problems, such as top-$k$ selection \cite{steinke2017tight} or linear regression \cite{cai2019cost}, but a general recipe for the design and analysis of tracing attacks has not been available. In Section \ref{sec: general lb}, we construct a tracing attack applicable to general parametric families of distributions, and describe its utility for privacy-constrained minimax lower bounds. This general approach is then specialized to low-dimensional and high-dimensional sparse GLMs, in Sections \ref{sec: non-sparse glm lower bound} and \ref{sec: sparse glm lower bound} respectively, to establish lower bound results that match the upper bound results in Section \ref{sec: glm upper bounds} up to factors of $\log n$.

\subsection{The Score Attack}\label{sec: general lb}
Given a parametric family of distributions $\{f_\bth(\bm x): \bth \in \Theta\}$ with $\Theta \subseteq \R^d$, the score statistics, or simply the score, is given by $S_\bth(\bm x) := \nabla_\bth \log f_\bth(\bm x)$. If $\bm x \sim f_\bth$, we have $\E S_\bth(\bm x) = \bm 0$ and $\Var S_\bth(\bm x) = \mathcal I(\bth)$, where $\mathcal I(\bth)$ is the Fisher information matrix of $f_\bth$.

Using the score statistic, we define the score attack as 
\begin{align}\label{eq: score attack}
	\A_\bth(\bm z, M(\bm X)) := \langle M(\bm X) - \bth, S_\bth(\bm z)   \rangle.
\end{align}
The score attack conjectures that $\bm z$ belongs to $\bm X$ for large values of $\A_\bth(\bm z, M(\bm X))$. In particular, if $f_\bth(\bm x)$ is the density of $N(\bth, \bm I)$, the score attack coincides with the tracing attacks for Gaussian means studied in \citep{dwork2015robust, kamath2018privately, cai2019cost}.

As argued earlier, an effective tracing attack should ideally be ``sound'' (low type I error) and ``complete'' (low Type II error). This is indeed the case for our score attack. 
\begin{Theorem}\label{thm: score attack general}
Let $\bm X = \{\bm x_1, \bm x_2, \cdots, \bm x_n\}$ be an i.i.d. sample drawn from $f_\bth$. For each $i \in [n]$, let $\bm X'_{i}$ denote the data set obtained from $\bm X$ by replacing $\bm x_i$ with an independent copy $\bm x'_i \sim f_\bth$.
\begin{enumerate}
	\item Soundness: for each $i \in [n]$, 
	\begin{align}\label{eq: soundness general}
	\E \A_\bth(\bm x_i, M(\bm X'_i)) = 0; ~ \E |\A_\bth(\bm x_i, M(\bm X'_i))| \leq \sqrt{\E\|M(\bm X) - \bth\|_2^2}\sqrt{\lambda_{\max}(\mathcal I(\bth))}.
	\end{align}
	\item Completeness: if for every $j \in [d]$, $\log f_\bth(\bm X)$ is continuously differentiable with respect to $\theta_j$ and $|\frac{\partial}{\partial\theta_j}\log f_\bth(\bm X)| < g_j(X)$ such that $\E |g_j(\bm X) M(\bm X)_j | < \infty$, we have 
	\begin{align}\label{eq: completeness general}
		\sum_{i \in [n]} \E \A_\bth (\bm x_i, M(\bm X)) = \sum_{j \in [d]} \frac{\partial}{\partial \theta_j} \E M(\bm X)_j.
	\end{align}
\end{enumerate} 
\end{Theorem}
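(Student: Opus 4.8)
The plan is to treat the two claims separately: soundness follows from the elementary mean and variance identities for the score, while completeness reduces to differentiation under the integral sign. For soundness, the crucial structural fact is that $\bm X'_i$ is a function of $\{\bm x_k\}_{k\neq i}$ together with the fresh copy $\bm x'_i$, hence is independent of $\bm x_i$, while $\bm x_i \sim f_\bth$ supplies the textbook identities $\E S_\bth(\bm x_i) = \bm 0$ and $\E[S_\bth(\bm x_i)S_\bth(\bm x_i)^\top] = \mathcal I(\bth)$. Conditioning on $\bm X'_i$ and on the internal randomness of $M$, and writing $v := M(\bm X'_i) - \bth$ (a function of these, hence independent of $\bm x_i$), I would get $\E[\A_\bth(\bm x_i, M(\bm X'_i)) \mid v] = \langle v, \E S_\bth(\bm x_i)\rangle = 0$, and then the outer expectation gives $\E\A_\bth(\bm x_i, M(\bm X'_i)) = 0$. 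For the absolute-value bound, conditionally on $v$, Jensen gives $\E[\,|v^\top S_\bth(\bm x_i)|\mid v\,] \le \sqrt{v^\top \mathcal I(\bth)v} \le \sqrt{\lambda_{\max}(\mathcal I(\bth))}\,\|v\|_2$; taking expectations and applying Cauchy--Schwarz yields $\E|\A_\bth| \le \sqrt{\lambda_{\max}(\mathcal I(\bth))}\,\E\|v\|_2 \le \sqrt{\lambda_{\max}(\mathcal I(\bth))}\sqrt{\E\|v\|_2^2}$, and $\E\|v\|_2^2 = \E\|M(\bm X) - \bth\|_2^2$ because $\bm X'_i$ and $\bm X$ have the same law.

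For completeness, I would expand the attack coordinatewise and exploit the product structure of the likelihood: $\sum_{i}\A_\bth(\bm x_i, M(\bm X)) = \sum_j (M(\bm X)_j - \theta_j)\sum_i \partial_{\theta_j}\log f_\bth(\bm x_i) = \sum_j (M(\bm X)_j - \theta_j)\,\partial_{\theta_j}\log f_\bth(\bm X)$, where $f_\bth(\bm X) := \prod_i f_\bth(\bm x_i)$ is the joint density. Taking the expectation over the internal randomness of $M$ first (justified by Fubini under the stated integrability), set $m_j(\bm X) := \E_M M(\bm X)_j$; then the quantity of interest equals $\sum_j \int (m_j(\bm X) - \theta_j)\,\partial_{\theta_j} f_\bth(\bm X)\, d\bm X$ after substituting $\partial_{\theta_j}\log f_\bth = (\partial_{\theta_j} f_\bth)/f_\bth$. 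Splitting each summand, the piece $\theta_j\int \partial_{\theta_j} f_\bth(\bm X)\, d\bm X$ vanishes since $\int f_\bth(\bm X)\, d\bm X \equiv 1$, while $\int m_j(\bm X)\,\partial_{\theta_j} f_\bth(\bm X)\, d\bm X = \partial_{\theta_j}\!\int m_j(\bm X) f_\bth(\bm X)\, d\bm X = \partial_{\theta_j}\E M(\bm X)_j$ because $m_j$ is a fixed function of $\bm X$ not depending on $\bth$. Summing over $j$ gives \eqref{eq: completeness general}.

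The step I expect to demand the most care --- and precisely the one the regularity hypotheses are there to license --- is interchanging $\partial_{\theta_j}$ with $\int \cdot\, d\bm X$ in the last display (and, trivially, in $\partial_{\theta_j}\int f_\bth = 0$). Continuous differentiability of $\log f_\bth(\bm X)$ in $\theta_j$ makes $\theta_j \mapsto m_j(\bm X) f_\bth(\bm X)$ differentiable with derivative $m_j(\bm X)\,(\partial_{\theta_j}\log f_\bth(\bm X))\, f_\bth(\bm X)$, and the envelope $|\partial_{\theta_j}\log f_\bth(\bm X)| < g_j(\bm X)$ together with $|m_j(\bm X)| \le \E_M|M(\bm X)_j|$ bounds this derivative by $\E_M|M(\bm X)_j|\,g_j(\bm X) f_\bth(\bm X)$, whose integral is at most $\E|g_j(\bm X) M(\bm X)_j| < \infty$ --- a valid $\bth$-free dominating function on a neighborhood of any fixed $\bth$ (one reads the hypothesis as holding locally uniformly). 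The dominated-convergence form of Leibniz's rule then applies. Everything else --- the coordinatewise expansion, the $\prod_i$-to-$\sum_i$ passage for $\partial_{\theta_j}\log f_\bth(\bm X)$, Fubini for taking $\E_M$ first, and the two Jensen/Cauchy--Schwarz steps in the soundness part --- is routine.
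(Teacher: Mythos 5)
Your proof is correct and follows essentially the same route as the paper's: independence of $\bm x_i$ from $M(\bm X'_i)$ plus the score identities $\E S_\bth = \bm 0$, $\Var S_\bth = \mathcal I(\bth)$ for soundness, and the product structure of the likelihood together with differentiation under the integral sign for completeness. Your version is somewhat more explicit than the paper's about the conditioning in the soundness step and about the dominating function justifying the Leibniz interchange (which the paper dispatches with a one-line appeal to regularity), but these are elaborations, not a different argument.
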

Theorem \ref{thm: score attack general} is proved in Section \ref{sec: proof of thm: score attack general}. The special form of ``completeness'' for Gaussian and Beta-Binomial families have been discovered as ``fingerprinting lemma'' in the literature \citep{tardos2008optimal, bun2014fingerprinting, steinke2017tight, kamath2018privately}.

It may not be clear yet how the soundness and completeness properties would imply lower bounds for $\E\|M(\bm X) - \bth\|_2^2$. For the specific attacks designed for Gaussian mean estimation \citep{kamath2018privately} and top-$k$ selection \citep{steinke2017tight}, it has been observed that, if $M$ is an $(\varepsilon, \delta)$-differentially private algorithm, one can prove inequalities of the form $\E \A_\bth (\bm x_i, M(\bm X)) \leq \E \A_\bth (\bm x_i, M(\bm X'_i)) + O(\varepsilon) \E |\A_\bth (\bm x_i, M(\bm X'_i))|$. Suppose such relations hold for the score attack as well, the soundness property \eqref{eq: soundness general} would then imply
\begin{align*}
	\sum_{i \in [n]} \E \A_\bth (\bm x_i, M(\bm X)) \leq \sqrt{\E\|M(\bm X) - \bth\|_2^2} \cdot n\sqrt{\lambda_{\max}(\mathcal I(\bth))} O(\varepsilon).
\end{align*}
We give precise statement of such an inequality in Section \ref{sec: attack upper bound general}.

On the other hand, if we can also bound $\sum_{i \in [n]} \E \A_\bth (\bm x_i, M(\bm X))$ from below by some positive quantity, a lower bound for $\E\|M(\bm X) - \bth\|_2^2$ is immediately implied. Completeness may help us in this regard: when $\E M(\bm X)_j$ is close to $\theta_j$, it is reasonable to expect that $\frac{\partial}{\partial \theta_j} \E M(\bm X)_j$ is bounded away from zero. Indeed several versions of this argument, often termed ``strong distribution'', exist in the literature \citep{dwork2015robust, steinke2017between} and have led to lower bounds for Gaussian mean estimation and top-$k$ selection. In Section \ref{sec: attack lower bound general}, we consider a systematic approach to lower bounding $\frac{\partial}{\partial \theta_j} \E M(\bm X)_j$ via Stein's Lemma \cite{stein1972bound, stein2004use}. The technical results in Sections \ref{sec: attack upper bound general} and \ref{sec: attack lower bound general} combined with Theorem \ref{thm: score attack general} would enable us to later prove concrete minimax lower bounds for GLMs.

\subsubsection{Score Attacks and Differential Privacy}\label{sec: attack upper bound general}
In Theorem \ref{thm: score attack general}, we have found that, when the data set $\bm X'_i$ does not include $\bm x_i$, the score attack is unlikely to take large values:
\begin{align*}
\E \A_\bth(\bm x_i, M(\bm X'_i)) = 0; ~ \E |\A_\bth(\bm x_i, M(\bm X'_i))| \leq \sqrt{\E\|M(\bm X) - \bth\|_2^2}\sqrt{\lambda_{\max}(\mathcal I(\bth))}.
\end{align*}
If $M$ is differentially private, the distribution of $M(\bm X'_i)$ is close to that of $M(\bm X)$; as a result, the inequalities above can be related to the case where the data set $\bm X$ does include the candidate $\bm x_i$.
\begin{Lemma}\label{lm: score attack upper bound}
	If $M$ is an $(\varepsilon, \delta)$-differentially private algorithm with $0 < \varepsilon < 1$ and $\delta \geq 0$,  then for every $T > 0$, 
	\begin{align}\label{eq: score attack upper bound}
		\E \A_\bth(\bm x_i, M(\bm X)) \leq 2\varepsilon \sqrt{\E\|M(\bm X) - \bth\|_2^2}\sqrt{\lambda_{\max}(\mathcal I(\bth))} + 2\delta T + \int_T^\infty \Pro\left(|\A_\bth(\bm x_i, M(\bm X))| > t \right).
	\end{align}
\end{Lemma}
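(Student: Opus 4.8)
The plan is to combine the soundness half of Theorem~\ref{thm: score attack general} --- which gives $\E\,\A_\bth(\bm x_i, M(\bm X'_i)) = 0$ and $\E\,|\A_\bth(\bm x_i, M(\bm X'_i))| \le \sqrt{\E\|M(\bm X)-\bth\|_2^2}\,\sqrt{\lambda_{\max}(\mathcal I(\bth))}$ --- with the defining property of $(\varepsilon,\delta)$-differential privacy, which controls how much the law of the output changes when $\bm x_i$ is swapped for its independent copy $\bm x'_i$, i.e.\ between the adjacent data sets $\bm X$ and $\bm X'_i$. Since $\A_\bth(\bm x_i,\cdot)$ is in general unbounded, the role of the free parameter $T$ is to let us first truncate the attack, pay a tail term for the truncation, and then apply the privacy bound only to a bounded quantity.

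\emph{Step 1 (truncation).} Using the elementary identity $a = \mathrm{clip}_{[-T,T]}(a) + (a-T)^+ - (-a-T)^+$ and discarding the last nonnegative term,
\begin{align*}
\A_\bth(\bm x_i, M(\bm X)) \le \mathrm{clip}_{[-T,T]}\big(\A_\bth(\bm x_i, M(\bm X))\big) + \big(\A_\bth(\bm x_i, M(\bm X)) - T\big)^+ .
\end{align*}
Taking expectations and using $\E\,(\A_\bth(\bm x_i, M(\bm X)) - T)^+ = \int_T^\infty \Pro(\A_\bth(\bm x_i, M(\bm X)) > t)\,dt \le \int_T^\infty \Pro(|\A_\bth(\bm x_i, M(\bm X))| > t)\,dt$, which is the last term of \eqref{eq: score attack upper bound}, it remains to show $\E\,\mathrm{clip}_{[-T,T]}(\A_\bth(\bm x_i, M(\bm X))) \le 2\varepsilon\sqrt{\E\|M(\bm X)-\bth\|_2^2}\,\sqrt{\lambda_{\max}(\mathcal I(\bth))} + 2\delta T$.

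\emph{Step 2 (privacy transfer and soundness).} Condition on $\bm x_1,\dots,\bm x_n$ and $\bm x'_i$; then $\bm X$ and $\bm X'_i$ are a fixed pair of adjacent data sets, and $u \mapsto f(u):=\mathrm{clip}_{[-T,T]}(\langle u-\bth, S_\bth(\bm x_i)\rangle)$ takes values in $[-T,T]$. Writing the expectation of a $[-T,T]$-valued random variable through the layer-cake representation and applying both privacy inequalities $\Pro(M(\bm X)\in S)\le e^\varepsilon\Pro(M(\bm X'_i)\in S)+\delta$ and $\Pro(M(\bm X'_i)\in S)\le e^\varepsilon\Pro(M(\bm X)\in S)+\delta$ to the events $\{f>t\}$ and $\{f<-t\}$ for $t\in(0,T]$ yields, conditionally on the data,
\begin{align*}
\E f(M(\bm X)) - \E f(M(\bm X'_i)) \le (e^\varepsilon-1)\,\E\, f(M(\bm X'_i))^+ + (1-e^{-\varepsilon})\,\E\, f(M(\bm X'_i))^- + 2\delta T .
\end{align*}
Since $0<\varepsilon<1$ gives $e^\varepsilon-1\le 2\varepsilon$ and $1-e^{-\varepsilon}\le\varepsilon$, and since $|f|\le|\A_\bth(\bm x_i,\cdot)|$, the first two terms are at most $2\varepsilon\,\E\,|\A_\bth(\bm x_i, M(\bm X'_i))|$. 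Moreover, clipping to $[-T,T]$ only increases the value on $\{\A_\bth(\bm x_i,M(\bm X'_i))<-T\}$ and otherwise does not increase it, so by soundness $\E f(M(\bm X'_i)) \le \E(-T-\A_\bth(\bm x_i, M(\bm X'_i)))^+ = \int_T^\infty\Pro(\A_\bth(\bm x_i, M(\bm X'_i))<-t)\,dt$, a left-tail term of the same type as, and absorbed into, the last term of \eqref{eq: score attack upper bound}. Taking expectations over the data and invoking the soundness bound $2\varepsilon\,\E\,|\A_\bth(\bm x_i, M(\bm X'_i))| \le 2\varepsilon\sqrt{\E\|M(\bm X)-\bth\|_2^2}\,\sqrt{\lambda_{\max}(\mathcal I(\bth))}$, then chaining with Step 1, gives \eqref{eq: score attack upper bound}.

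\emph{Main obstacle.} The delicate point is the truncation bookkeeping: one has to choose the clipping so that (i) the part of the attack on $\bm X$ exceeding level $T$ reproduces exactly the right-tail integral in the statement, (ii) the part of the clipped attack on $\bm X'_i$ --- where the attack is centered --- is nonpositive up to a tail correction that can be absorbed into the stated term, and (iii) the quantity actually transferred through differential privacy stays dominated by $|\A_\bth(\bm x_i,\cdot)|$, so that soundness applies with the clean constant $2\varepsilon$ rather than an $e^{\varepsilon}$-type factor. Obtaining the advertised constants $2\varepsilon$ and $2\delta T$ hinges on using \emph{both} directions of the privacy inequality together with $e^\varepsilon-1\le 2\varepsilon$ and $1-e^{-\varepsilon}\le\varepsilon$ for $\varepsilon\in(0,1)$, and on keeping careful track of the positive and negative parts in the layer-cake bound.
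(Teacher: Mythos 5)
Your proposal is correct and follows essentially the same route as the paper's proof: a layer-cake decomposition of the attack truncated at level $T$, both directions of the $(\varepsilon,\delta)$-privacy inequality applied to the positive and negative tail events on $[0,T]$, the elementary bounds $e^\varepsilon-1\le 2\varepsilon$ and $1-e^{-\varepsilon}\le\varepsilon$, and finally the soundness property; your clipping function is just a repackaging of the paper's split of the two tail integrals at $T$. The one wrinkle you flag — that the residual left-tail correction involves $\A_\bth(\bm x_i, M(\bm X'_i))$ rather than $\A_\bth(\bm x_i, M(\bm X))$ as written in \eqref{eq: score attack upper bound} — is present (silently) in the paper's own argument as well and is immaterial in the applications, where the tail bound used applies identically to both.
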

Lemma \ref{lm: score attack upper bound} is proved in Section \ref{sec: proof of lm: score attack upper bound}. The quantity on the right side of \eqref{eq: score attack upper bound} is determined by the statistical model $f_\bth(\bm x)$ and the choice of $T$. In Sections \ref{sec: non-sparse glm lower bound} and \ref{sec: sparse glm lower bound}, we work out its specific forms for low-dimensional and high-dimensional sparse GLMs.

\subsubsection{Score Attacks and Stein's Lemma}\label{sec: attack lower bound general}
Let us denote $\E_{X|\bth} M(\bm X)$ by $g(\bth)$, then $g$ is a map from $\Theta$ to $ \Theta$, and we are interested in bounding $\frac{\partial}{\partial \theta_j} g_j(\bth)$ from below. Stein's Lemma \cite{stein1972bound, stein2004use}, as stated below, suggests some promising directions.

\begin{Lemma}[Stein's Lemma]\label{lm: stein's lemma}
	Let $Z$ be distributed according to some density $p(z)$ that is continuously differentiable with respect to $z$ and let $h: \R \to \R$ be a differentiable function such that $\E |h'(Z)| < \infty$. We have
	\begin{align*}
		\E h'(Z) = \E\left[\frac{-h(Z)p'(Z)}{p(Z)}\right].
	\end{align*}
In particular, if $p(z) = (2\pi)^{-1/2}e^{-z^2/2}$, we have $\E h'(Z) = \E Z h(Z)$.
\end{Lemma}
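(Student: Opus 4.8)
The plan is to prove the identity by integration by parts. First I would write out the right-hand side as an integral against the density:
\begin{align*}
\E\left[\frac{-h(Z)p'(Z)}{p(Z)}\right] = \int_{-\infty}^{\infty} \frac{-h(z)p'(z)}{p(z)} \cdot p(z)\,\d z = -\int_{-\infty}^{\infty} h(z)p'(z)\,\d z,
\end{align*}
so the claim reduces to showing $\int h(z)p'(z)\,\d z = -\int h'(z)p(z)\,\d z$, which is exactly integration by parts with $u = h(z)$ and $\d v = p'(z)\,\d z$. The antiderivative of $p'(z)$ is $p(z)$ itself, giving the boundary term $[h(z)p(z)]_{-\infty}^{\infty}$ plus $-\int h'(z)p(z)\,\d z$.

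The substantive step is to argue that the boundary term $h(z)p(z)$ vanishes as $z \to \pm\infty$. This is where I would invoke the hypothesis $\E|h'(Z)| < \infty$: since $h(z) = h(0) + \int_0^z h'(t)\,\d t$ and $\int |h'(t)|p(t)\,\d t < \infty$, one can show that $p(z)$ decays fast enough relative to the growth of $h$ for the product to tend to zero along a sequence (and, with a little more care using monotonicity of the tail integrals, actually in the limit). Concretely, $\liminf_{z\to\infty} |h(z)| p(z) = 0$ must hold, for otherwise $|h(z)|p(z) \geq c > 0$ on a tail, forcing $\int |h(z)| p(z)\,\d z = \infty$, which contradicts integrability obtained from $|h(z)| \leq |h(0)| + \int_0^{|z|}|h'(t)|\,\d t$ combined with Fubini; then one upgrades $\liminf$ to $\lim$ since $\int_z^\infty h'(t)p(t)\,\d t \to 0$. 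The same argument handles $z \to -\infty$.

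The main obstacle is precisely this tail-control argument: stating and justifying the decay of the boundary term cleanly under only the stated moment hypothesis, rather than assuming, say, polynomial growth of $h$. I would handle it by the contradiction argument sketched above. Finally, for the Gaussian specialization $p(z) = (2\pi)^{-1/2}e^{-z^2/2}$, I would simply compute $p'(z) = -z\,p(z)$, so that $-p'(z)/p(z) = z$ and the general identity becomes $\E h'(Z) = \E[Z h(Z)]$, completing the proof.
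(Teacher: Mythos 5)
The paper does not actually prove this lemma; it is quoted as a classical result with citations to Stein (1972, 2004), so there is no in-paper argument to compare against. Your integration-by-parts route is the standard proof, and the reduction to showing that the boundary term $h(z)p(z)$ vanishes is exactly the right way to organize it; the Gaussian specialization via $p'(z) = -z\,p(z)$ is also correct.

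One step of your tail-control argument does not go through for a completely general density under only the stated hypothesis. You claim $\int |h(z)|p(z)\,\d z < \infty$ follows from $|h(z)| \leq |h(0)| + \int_0^{|z|} |h'(t)|\,\d t$ plus Fubini, but Fubini yields $\int_0^\infty |h'(t)|\,\Pro(|Z|>t)\,\d t$, and $\Pro(|Z|>t)$ is not controlled by $p(t)$ for arbitrary $p$, so this is not implied by $\E|h'(Z)|<\infty$ alone. Likewise, your upgrade from $\liminf$ to $\lim$ requires $\int |h(t)p'(t)|\,\d t < \infty$ (since $(hp)' = h'p + hp'$), which is precisely the integrability of the right-hand side of the identity and is again not guaranteed by the hypotheses as stated. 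This is really a looseness in the lemma's statement rather than a defect of your approach: the paper itself defers to unspecified ``regularity conditions'' when it invokes the lemma (in Lemma \ref{lm: score attack stein's lemma}), and in every actual application the prior is Gaussian. For the Gaussian case your argument closes cleanly, e.g.\ via the Mills-ratio bound $\Pro(Z>t) \leq p(t)/t$, or more directly by writing $\E[Zh(Z)] = \int h(z)(-p'(z))\,\d z$, substituting $h(z) = h(0) + \int_0^z h'(t)\,\d t$, and using $\int_t^\infty z p(z)\,\d z = p(t)$ to justify Fubini from $\E|h'(Z)|<\infty$ alone. I would either restrict the written proof to the Gaussian case or add the boundary/integrability conditions explicitly as hypotheses.
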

Stein's Lemma implies that, by imposing appropriate prior distributions on $\bth$, one can obtain a lower bound for $\frac{\partial}{\partial\theta_j} g_j(\bth)$ on average over the prior distribution of $\bth$, as follows.
\begin{Lemma}\label{lm: score attack stein's lemma}
	Let $\bth$ be distributed according to a density $\bm \pi$ with marginal densities $\{\pi_j\}_{j \in [d]}$. If for every $j \in [d]$, $\pi_j, g_j$ satisfy the regularity conditions in Lemma \ref{lm: stein's lemma}, we have
	\begin{align}\label{eq: score attack stein's lemma}
		\E_{\bm \pi} \left(\sum_{j \in [d]} \frac{\partial}{\partial \theta_j} g_j(\bth)\right) \geq \E_{\bm \pi} \left(\sum_{j \in [d]} \frac{-\theta_j \pi'_j(\theta_j)}{\pi_j(\theta_j)}\right) - \sqrt{\E_{\bm \pi}\|g(\bth) - \bth\|^2_2 \cdot \E_{\bm \pi} \left[\sum_{j \in [d]}\left(\frac{\pi'_j(\theta_j)}{\pi_j(\theta_j)}\right)^2\right]}
	\end{align}
\end{Lemma}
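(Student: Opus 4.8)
The plan is to start from the coordinate-wise identity supplied by Stein's Lemma (Lemma \ref{lm: stein's lemma}) and sum it over $j \in [d]$. For each fixed $j$, apply Stein's Lemma with $Z = \theta_j$ distributed according to the marginal density $\pi_j$ and with $h = g_j$, treated as a function of $\theta_j$ with the other coordinates integrated out; the regularity conditions are exactly those assumed. This yields
\begin{align*}
\E_{\bm \pi}\left[\frac{\partial}{\partial \theta_j} g_j(\bth)\right] = \E_{\bm \pi}\left[\frac{-g_j(\bth)\,\pi'_j(\theta_j)}{\pi_j(\theta_j)}\right].
\end{align*}
The subtlety here is that $g_j$ depends on all coordinates of $\bth$, not just $\theta_j$, so I would first condition on $\bth_{-j}$, apply the one-dimensional Stein identity to the conditional expectation over $\theta_j$ (which requires that $\theta_j$ and the remaining coordinates be independent under $\bm\pi$, or at least that the conditional density of $\theta_j$ given $\bth_{-j}$ equals $\pi_j$ — a point I'd want the hypotheses to guarantee), and then take the outer expectation over $\bth_{-j}$.

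Next I would sum the identity over $j$ and rewrite the right-hand side to separate out the ``signal'' term. Write $g_j(\bth) = \theta_j + (g_j(\bth) - \theta_j)$, so that
\begin{align*}
\E_{\bm \pi}\left[\sum_{j \in [d]} \frac{\partial}{\partial \theta_j} g_j(\bth)\right] = \E_{\bm \pi}\left[\sum_{j \in [d]} \frac{-\theta_j\,\pi'_j(\theta_j)}{\pi_j(\theta_j)}\right] + \E_{\bm \pi}\left[\sum_{j \in [d]} \frac{-(g_j(\bth) - \theta_j)\,\pi'_j(\theta_j)}{\pi_j(\theta_j)}\right].
\end{align*}
The first term on the right is the main term that appears in the claimed bound. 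For the second term, I would apply the Cauchy--Schwarz inequality, treating $\sum_j (g_j(\bth)-\theta_j) \cdot (-\pi'_j(\theta_j)/\pi_j(\theta_j))$ as an inner product of two $d$-vectors and then applying Cauchy--Schwarz again (or Jensen) across the expectation:
\begin{align*}
\left|\E_{\bm \pi}\left[\sum_{j} \frac{-(g_j(\bth) - \theta_j)\,\pi'_j(\theta_j)}{\pi_j(\theta_j)}\right]\right| \leq \E_{\bm \pi}\left[\|g(\bth) - \bth\|_2 \cdot \left(\sum_j \Big(\frac{\pi'_j(\theta_j)}{\pi_j(\theta_j)}\Big)^2\right)^{1/2}\right] \leq \sqrt{\E_{\bm \pi}\|g(\bth)-\bth\|_2^2} \cdot \sqrt{\E_{\bm \pi}\sum_j \Big(\frac{\pi'_j(\theta_j)}{\pi_j(\theta_j)}\Big)^2}.
\end{align*}
Subtracting this bound (using that lower-bounding the left side means subtracting an upper bound on the magnitude of the error term) gives exactly inequality \eqref{eq: score attack stein's lemma}.

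The main obstacle I anticipate is the justification of applying Stein's Lemma coordinate-wise when $g_j$ is a multivariate function: one needs either product-form priors $\bm\pi = \prod_j \pi_j$ or an argument that the conditional structure still permits the one-dimensional identity, and one needs the integrability condition $\E|\partial_{\theta_j} g_j| < \infty$ (which should follow from the dominated-differentiation hypothesis in Theorem \ref{thm: score attack general}, i.e. the envelope $g_j$ with $\E|g_j(\bm X)M(\bm X)_j| < \infty$, allowing differentiation under the integral sign so that $\partial_{\theta_j}\E M(\bm X)_j = \partial_{\theta_j} g_j(\bth)$ is well-defined and finite). Everything else is a routine combination of Stein's identity, the decomposition $g_j = \theta_j + (g_j - \theta_j)$, and two applications of Cauchy--Schwarz.
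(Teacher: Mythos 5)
Your proposal is correct and follows essentially the same route as the paper's proof: a coordinate-wise application of Stein's Lemma (the paper conditions on $\theta_j$ and applies the one-dimensional identity to $\E[g_j(\bth)\mid\theta_j]$, which raises the same exchange-of-differentiation/product-prior caveat you flag), the decomposition $g_j(\bth)=\theta_j+(g_j(\bth)-\theta_j)$, and Cauchy--Schwarz on the remainder. If anything, your ordering of the two Cauchy--Schwarz steps (vector inner product inside the expectation first, then across the expectation) recovers the stated right-hand side exactly, whereas the paper bounds each coordinate separately and then sums, which literally yields the slightly weaker error term $\sqrt{\E_{\bm\pi}\|g(\bth)-\bth\|_2^2}\,\sum_{j\in[d]}\bigl(\E_{\pi_j}(\pi'_j(\theta_j)/\pi_j(\theta_j))^2\bigr)^{1/2}$.
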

Lemma \ref{lm: score attack stein's lemma} is proved in Section \ref{sec: proof of lm: score attack stein's lemma}. Often we may assume without the loss of generality that $\E_{\bm \pi}\|g(\bth) - \bth\|^2_2 \leq \E_{\bm \pi} \E_{\bm X|\bth} \|M(\bm X) - \bth\|_2^2 < C$ for some constant $C$ when the sample size $n$ is sufficiently large, the right side is completely determined by the choice of $\pi$, as the following example illustrates:
\begin{example}\label{ex: Gaussian stein's lemma}
	Let $\bm \pi$ be the density of $N(\bm 0, \bm I)$, then \eqref{eq: score attack stein's lemma} reduces to
	\begin{align*}
		\E_{\bm \pi} \left(\sum_{j \in [d]} \frac{\partial}{\partial \theta_j} g_j(\bth)\right) \geq \sum_{j \in [d]} \E_{\pi_j} \theta_j^2 - \sqrt{C} \sqrt{\sum_{j \in [d]} \E_{\pi_j} \theta_j^2} = d - \sqrt{C d} \gtrsim d. 
	\end{align*}
\end{example}

In view of the completeness property \eqref{eq: completeness general}, Lemma \ref{lm: score attack stein's lemma} suggests an \textit{average} lower bound for $\sum_{i \in [n]} \E \A_\bth (\bm x_i, M(\bm X))$ over some prior distribution $\bm \pi(\bth)$, with the specific form of this average lower bound entirely determined by the choice of $\bm \pi$.

\subsubsection{From Score Attacks to Lower Bounds}\label{sec: attack to lower bounds}
Let us combine Theorem \ref{thm: score attack general} with Lemmas \ref{lm: score attack upper bound} and \ref{lm: score attack stein's lemma} to understand how the score attack leads to privacy-constrained minimax lower bounds.

Let $\bm \pi$ be a prior distribution supported over the parameter space $\Theta$ with marginal densities $\{\pi_j\}_{j \in [d]}$, and assume without the loss of generality that $\E_{\bm X|\bth} \|M(\bm X) - \bth\|_2^2 < C$ for every $\bth \in \Theta$. The completeness part of Theorem \ref{thm: score attack general} and Lemma \ref{lm: score attack stein's lemma} imply that
\begin{align*}
	\sum_{i \in [n]} \E_{\bm \pi}\E_{\bm X|\bth} \A_\bth (\bm x_i, M(\bm X)) \geq \E_{\bm \pi} \left(\sum_{j \in [d]} \frac{-\theta_j \pi'_j(\theta_j)}{\pi_j(\theta_j)}\right) - \sqrt{C}\sqrt{  \E_{\bm \pi} \left[\sum_{j \in [d]}\left(\frac{\pi'_j(\theta_j)}{\pi_j(\theta_j)}\right)^2\right]}
\end{align*}
Since Lemma \ref{lm: score attack upper bound} holds for every $\bth$, it follows from the Lemma that
\begin{align*}
		&\sum_{i \in [n]} \E_{\bm \pi}\E_{\bm X|\bth} \A_\bth (\bm x_i, M(\bm X)) \\
		&\leq 2n\varepsilon \sqrt{\E_{\bm \pi} \E_{\bm X|\bth}\|M(\bm X) - \bth\|_2^2}\sqrt{\lambda_{\max}(\mathcal I(\bth))} + 2n\delta T + \sum_{i \in [n]}\int_T^\infty \Pro\left(|\A_\bth(\bm x_i, M(\bm X))| > t \right).
\end{align*}
These two inequalities are true for every $(\varepsilon, \delta)$-differentially private $M$, and they therefore suggest a lower bound for $\inf_{M \in \mathcal M_{\varepsilon, \delta}} \E_{\bm \pi} \E_{\bm X|\bth}\|M(\bm X) - \bth\|_2^2$, which in turn lower bounds $\inf_{M \in \mathcal M_{\varepsilon, \delta}} \sup_{\bth \in \Theta}$ $\E_{\bm X|\bth}\|M(\bm X) - \bth\|_2^2$ since the maximum risk is greater than the average risk regardless of the prior distribution.

Following this strategy, we shall obtain the privacy-constrained minimax lower bounds for GLM problems, by choosing an appropriate prior distribution $\bm \pi$ and working out the specific forms of the two inequalities \eqref{eq: score attack upper bound} and \eqref{eq: score attack stein's lemma} in the context of GLMs.

\subsection{The Classical  Low-dimensional Setting}\label{sec: non-sparse glm lower bound}
We first consider the low-dimensional $d = o(n)$ setting.  For the generalized linear model
\begin{align*}
f_{\bbeta} (y|\bm x) = h(y, \sigma)\exp\left(\frac{\bm x^\top \bbeta y - \psi(\bm x^\top \bbeta)}{c(\sigma)}\right); \bm x \sim f_{\bm x}, 
\end{align*}
and a candidate datum $(\tilde y, \tilde{\bm x})$, the score attack, as defined by \eqref{eq: score attack}, takes the form
\begin{align}\label{eq: low-dim glm attack}
\A_{\bbeta} ((\tilde y, \tilde{\bm x}), M(\bm y, \bm X)) = \frac{1}{c(\sigma)} \big\langle M(\bm y, \bm X) - \bbeta, [\tilde y - \psi'(\tilde{\bm x}^\top\bbeta)] \tilde {\bm x} \big\rangle.
\end{align}
For the prior distribution of $\bbeta$, we choose $\bm \pi(\bbeta)$ to be the density of $N(\bm 0, \bm I)$. The strategy outlined in Section \ref{sec: general lb} implies the following lower bound result.
\begin{Theorem}\label{thm: low-dim glm lb}
	Consider i.i.d. observations $(y_1, \bm x_1), \cdots, (y_n, \bm x_n) \in \R \times \R^d$, where $\bm x \sim f_{\bm x}$ such that $\E(\bm x \bm x^\top)$ is diagonal with $0 < \lambda_{\max}(\E(\bm x \bm x^\top)) < C < \infty$, $\|\bm x\|_2 \lesssim \sqrt{d}$ almost surely, and $y$ given $\bm x$ follows the conditional distribution
	\begin{align*}
	f_{\bbeta}(y|\bm x) = h(y, \sigma)\exp\left(\frac{\bm x^\top\bbeta y - \psi(x^\top\bbeta)}{c(\sigma)}\right).
	\end{align*}
	If $0 < \|\psi^{''}\|_\infty < c_2 < \infty$, $0 < \varepsilon < 1$, $0 < \delta < n^{-(1+\gamma)}$ for some $\gamma > 0$, then for sufficiently large $n$ and every $(\varepsilon, \delta)$-differentially private $M$ such that $\|M(\bm y, \bm X) - \bbeta\|^2_2 \lesssim d$ and $\E\|M(\bm y, \bm X) - \bbeta\|^2_2 = o(1)$,
	\begin{align}\label{eq: low-dim glm lb}
	\sup_{\bbeta \in \R^d} \E\|M(\bm y, \bm X) - \bbeta\|^2_2 \gtrsim c(\sigma)\frac{d^2}{n^2\varepsilon^2}.
	\end{align}
\end{Theorem}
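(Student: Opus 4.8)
The plan is to instantiate the score-attack machinery of Section~\ref{sec: general lb} with the Gaussian prior $\bm\pi = N(\bm 0,\bm I)$ on $\bbeta\in\R^d$, and to sandwich the aggregate attack $\sum_{i\in[n]}\E_{\bm\pi}\E_{\bm Z|\bbeta}\A_{\bbeta}((y_i,\bm x_i),M(\bm y,\bm X))$, with $\A_{\bbeta}$ the score attack \eqref{eq: low-dim glm attack}, between a completeness-based lower bound of order $d$ and a privacy-based upper bound of order $n\varepsilon\sqrt{\E\|M-\bbeta\|_2^2/c(\sigma)}$. Isolating $\E\|M-\bbeta\|_2^2$ and passing from average to worst-case risk then yields \eqref{eq: low-dim glm lb}.

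For the lower bound on the aggregate attack, I would first verify the regularity hypotheses of the completeness part of Theorem~\ref{thm: score attack general}: since $\partial_{\beta_j}\log f_{\bbeta}(y|\bm x)=c(\sigma)^{-1}(y-\psi'(\bm x^\top\bbeta))x_j$, the dominating functions $g_j$ may be taken proportional to $|y-\psi'(\bm x^\top\bbeta)|\,|x_j|$, whose product with the (almost surely bounded) estimator $M$ is integrable because $\Var(y|\bm x)=c(\sigma)\psi''(\bm x^\top\bbeta)\le c(\sigma)c_2$ and $\psi'$ grows at most linearly. Completeness then gives $\sum_i\E_{\bm Z|\bbeta}\A_{\bbeta}((y_i,\bm x_i),M)=\sum_j\partial_{\beta_j}\E M(\bm y,\bm X)_j$; averaging over $\bm\pi$ and invoking Lemma~\ref{lm: score attack stein's lemma} (as in Example~\ref{ex: Gaussian stein's lemma}, with standard-normal marginals, so $-\theta_j\pi_j'/\pi_j=(\pi_j'/\pi_j)^2=\theta_j^2$ and $\E_{\pi_j}\theta_j^2=1$) produces
\begin{align*}
\sum_{i\in[n]}\E_{\bm\pi}\E_{\bm Z|\bbeta}\A_{\bbeta}((y_i,\bm x_i),M)\ \ge\ d-\sqrt{d\cdot\E_{\bm\pi}\|\E_{\bm Z|\bbeta}M-\bbeta\|_2^2}\ \ge\ \tfrac{1}{2}\,d
\end{align*}
for $n$ large, since $\E_{\bm\pi}\|\E_{\bm Z|\bbeta}M-\bbeta\|_2^2\le\E_{\bm\pi}\E_{\bm Z|\bbeta}\|M-\bbeta\|_2^2=o(1)$ by hypothesis.

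For the upper bound on the aggregate attack, I would apply Lemma~\ref{lm: score attack upper bound} to each candidate $(y_i,\bm x_i)$ and average over $\bm\pi$, controlling three quantities. (i) The Fisher information: $\mathcal I(\bbeta)=c(\sigma)^{-1}\E[\psi''(\bm x^\top\bbeta)\bm x\bm x^\top]\preceq c(\sigma)^{-1}c_2\,\E(\bm x\bm x^\top)$, so $\lambda_{\max}(\mathcal I(\bbeta))\le c_2C/c(\sigma)$ uniformly in $\bbeta$. (ii) The tail $\Pro(|\A_{\bbeta}|>t)$: using $\|M-\bbeta\|_2\lesssim\sqrt d$ and $\|\bm x_i\|_2\lesssim\sqrt d$ gives $|\A_{\bbeta}((y_i,\bm x_i),M)|\lesssim c(\sigma)^{-1}d\,|y_i-\psi'(\bm x_i^\top\bbeta)|$, and the exponential-family form makes $y_i-\psi'(\bm x_i^\top\bbeta)$ conditionally sub-Gaussian with proxy $\asymp c(\sigma)c_2$ (its conditional cumulant generating function equals $c(\sigma)^{-1}[\psi(\bm x^\top\bbeta+sc(\sigma))-\psi(\bm x^\top\bbeta)]-s\psi'(\bm x^\top\bbeta)\le\tfrac12 c(\sigma)c_2 s^2$), whence $\Pro(|\A_{\bbeta}|>t)\lesssim\exp(-c(\sigma)t^2/(c_2 d^2))$ up to constants. (iii) Choosing $T\asymp d\sqrt{c_2\log n/c(\sigma)}$ renders $n\int_T^\infty\Pro(|\A_{\bbeta}|>t)\,dt=o(d)$, while $2n\delta T\lesssim n^{-\gamma}d\sqrt{\log n/c(\sigma)}=o(d)$ by the assumption $\delta<n^{-(1+\gamma)}$. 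Summing over $i$ and using Jensen's inequality ($\E_{\bm\pi}\sqrt{\,\cdot\,}\le\sqrt{\E_{\bm\pi}\,\cdot\,}$) yields
\begin{align*}
\sum_{i\in[n]}\E_{\bm\pi}\E_{\bm Z|\bbeta}\A_{\bbeta}((y_i,\bm x_i),M)\ \lesssim\ n\varepsilon\,\sqrt{\frac{c_2C}{c(\sigma)}}\;\sqrt{\E_{\bm\pi}\E_{\bm Z|\bbeta}\|M-\bbeta\|_2^2}\ +\ o(d).
\end{align*}

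Combining the two bounds gives $d\lesssim n\varepsilon\,c(\sigma)^{-1/2}\sqrt{\E_{\bm\pi}\E_{\bm Z|\bbeta}\|M-\bbeta\|_2^2}$ for $n$ large, hence $\E_{\bm\pi}\E_{\bm Z|\bbeta}\|M-\bbeta\|_2^2\gtrsim c(\sigma)d^2/(n^2\varepsilon^2)$; since the worst-case risk dominates the Bayes risk under any prior, $\sup_{\bbeta\in\R^d}\E\|M(\bm y,\bm X)-\bbeta\|_2^2\gtrsim c(\sigma)d^2/(n^2\varepsilon^2)$, which is \eqref{eq: low-dim glm lb}. I expect the main obstacle to be parts (ii)--(iii): deriving the conditional sub-Gaussianity of the GLM residual from the exponential-family structure and then calibrating $T$ so that the heavy-tail remainder $\int_T^\infty\Pro(|\A_{\bbeta}|>t)\,dt$ and the $\delta T$ term are simultaneously negligible against the completeness lower bound $d$; the Fisher-information bound, the Stein computation, and the average-to-worst-case reduction are routine given the tools assembled in Section~\ref{sec: general lb}.
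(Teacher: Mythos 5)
Your proposal is correct and follows essentially the same route as the paper's proof: the same score attack \eqref{eq: low-dim glm attack} with the $N(\bm 0,\bm I)$ prior, the same Fisher-information bound $\lambda_{\max}(\mathcal I(\bbeta))\le c_2C/c(\sigma)$, the same sub-Gaussian tail bound for the GLM residual via the exponential-family cumulant generating function, and the same completeness-plus-Stein lower bound of order $d$. The only cosmetic difference is your truncation level $T\asymp d\sqrt{\log n/c(\sigma)}$ versus the paper's $T\asymp d\sqrt{\log(1/\delta)/c(\sigma)}$; both render the $\delta T$ and tail-integral remainders $o(d)$ under $\delta<n^{-(1+\gamma)}$.
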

Theorem \ref{thm: low-dim glm lb} is proved in Section \ref{sec: proof of thm: low-dim glm lb}. The $(\varepsilon, \delta)$-differentially private estimators $\mathcal M_{\varepsilon, \delta}$ are also subject to the non-private minimax risks lower bound for GLMs, $\inf_{M}\sup_{\bbeta} \E\|M(\bm y, \bm X)- \bbeta\|_2^2 \gtrsim c(\sigma) d/n$. It then follows from \eqref{eq: low-dim glm lb} that
\begin{align*}
	\inf_{M \in \mathcal M_{\varepsilon, \delta}}\sup_{\bbeta} \E\|M(\bm y, \bm X)- \bbeta\|_2^2 \gtrsim c(\sigma) \left(\frac{d}{n} + \frac{d^2}{n^2\varepsilon^2}\right).
\end{align*}
The lower bound matches the statistical accuracy of noisy gradient descent, Theorem \ref{thm: non-sparse glm upper bound}, up to factors of $\log n$ under the usual setting of $\delta = n^{-\alpha}$ for some constant $\alpha > 1$. Besides showing the optimality of noisy gradient descent, this comparison also suggests that the cost of privacy, as measured by the squared $\ell_2$-norm, in GLM parameter estimation is of the order $d^2/n^2\varepsilon^2$.

\subsection{The High-Dimensional Sparse Setting}\label{sec: sparse glm lower bound}
We now consider the setting where $d$, the dimension of $\Theta$, dominates the sample size $n$, but each $\bbeta \in \Theta$ is assumed to be $s^*$-sparse, that is $\|\bbeta\|_0 \leq s^*$. As seen in the following theorem, the sparsity assumption leads to a lower bound that depends primarily on the sparsity, or the ``intrinsic dimension'' of $\bbeta$, and only logarithmically on the ambient dimension $d$.

For high-dimensional sparse GLMs, we consider a modification of the classical GLM score attack \eqref{eq: low-dim glm attack}, the sparse GLM score attack:
\begin{align}\label{eq: high-dim glm attack}
\A_{\bbeta,s^*}((\tilde y, \tilde{\bm x}), M(\bm y, \bm X)) = \frac{1}{c(\sigma)} \big\langle (M(\bm y, \bm X) - \bbeta)_{\supp(\bbeta)}, [\tilde y - \psi'(\tilde{\bm x}^\top \bbeta)] \tilde {\bm x} \big\rangle.
\end{align}

For the prior $\bm \pi$, we have to choose some distribution supported over the set $\{\bbeta: \bbeta \in \R^d, \|\bbeta\|_0 \leq s^*\}$. Specifically, we consider $\bbeta$ generated as follows: let $\tilde \beta_1, \tilde \beta_2, \cdots, \tilde \beta_d$ be drawn i.i.d. from $N(0, 1)$, let $I_{s^*}$ be be the index set of $\tilde \bbeta$ with top $s^*$ greatest absolute values so that $|I_{s^*}| = s^*$ by definition, and define $\beta_j = \tilde \beta_j \1(j \in I_{s^*})$.

The score attack strategy then leads to the following lower bound result.
\begin{Theorem}\label{thm: high-dim glm lb}
	Consider $n$ i.i.d. observations $(y_1, \bm x_1), \cdots, (y_n, \bm x_n)$, where $\bm x \sim f_{\bm x}$ such that $\E(\bm x \bm x^\top)$ is diagonal with $0 < \lambda_{\max}(\E(\bm x \bm x^\top)) < C < \infty$, $\|\bm x\|_\infty < c < \infty$ almost surely, and $y$ given $\bm x$ follows the conditional distribution
	\begin{align*}
	f_{\bbeta}(y|\bm x) = h(y, \sigma)\exp\left(\frac{\bm x^\top\bbeta y - \psi(\bm x^\top\bbeta)}{c(\sigma)}\right).
	\end{align*}
	If $0 < \|\psi^{''}\|_\infty = c_2 < \infty$, $0 < \varepsilon < 1$, $0 < \delta < n^{-(1+\gamma)}$ for some $\gamma > 0$, $s = o\left(d^{1-\gamma}\right)$ for some $\gamma > 0$, then for sufficiently large $n$ and every $(\varepsilon, \delta)$-differentially private $M$ such that $\|M(\bm y, \bm X) - \bbeta\|^2_2 \lesssim s^*$ and $\E\|M(\bm y, \bm X) - \bbeta\|^2_2 = o(1)$,
	\begin{align}\label{eq: high-dim glm lb}
	 \sup_{\bbeta \in \R^d, \|\bbeta\|_0 \leq s^*} \E\|M(\bm y, \bm X) - \bbeta\|^2_2 \gtrsim c(\sigma)\frac{(s^*\log d)^2}{n^2\varepsilon^2}.
	\end{align}
\end{Theorem}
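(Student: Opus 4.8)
The plan is to mimic the argument for the low-dimensional bound (Theorem \ref{thm: low-dim glm lb}) but restricted to the support coordinates, using the sparse score attack \eqref{eq: high-dim glm attack} together with the general machinery of Theorem \ref{thm: score attack general} and Lemmas \ref{lm: score attack upper bound} and \ref{lm: score attack stein's lemma}. First I would set up the prior $\bm\pi$ as described: draw $\tilde\beta_1,\dots,\tilde\beta_d \stackrel{\text{iid}}{\sim} N(0,1)$, let $I_{s^*}$ be the indices of the $s^*$ largest $|\tilde\beta_j|$, and set $\beta_j = \tilde\beta_j\1(j\in I_{s^*})$. Conditioning on $I_{s^*} = S$, the coordinates $\{\beta_j : j\in S\}$ are iid truncated normals (truncated to have absolute value at least the $(s^*{+}1)$-th order statistic of $d$ standard normals), while coordinates outside $S$ are zero. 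The key point is that, since we may take $s^* = o(d^{1-\gamma})$, this truncation threshold is $o(1)$ (in fact the $s^*$-th largest of $d$ iid $|N(0,1)|$ concentrates around $\sqrt{2\log(d/s^*)}$, but it is the \emph{smallest} retained value that matters and that is comparable to a fixed constant — I need to be a little careful here and instead condition so that the retained marginals behave like $N(0,1)$ conditioned on $|\cdot|$ exceeding a bounded threshold). The upshot I want is that $\E_{\pi_j}\theta_j^2 \asymp 1$ for $j\in S$ and $\E_{\bm\pi}[\sum_{j\in S}(\pi_j'(\theta_j)/\pi_j(\theta_j))^2] \lesssim s^*$, so that Lemma \ref{lm: score attack stein's lemma}, applied with $g_j(\bth) = \E M(\bm y,\bm X)_j$ for $j\in S$ only, yields $\sum_{i\in[n]}\E_{\bm\pi}\E_{\bm Z|\bbeta}\A_{\bbeta,s^*}(\cdot) \gtrsim s^*$ once $\E\|M - \bbeta\|_2^2 = o(1)$.

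Next I would bound the same quantity from above via Lemma \ref{lm: score attack upper bound}. The relevant Fisher information here is that of the support block: since $S_\bbeta(y|\bm x) = c(\sigma)^{-1}[y - \psi'(\bm x^\top\bbeta)]\bm x$ restricted to $\supp(\bbeta)$, and $\Var(y|\bm x) = c(\sigma)\psi''(\bm x^\top\bbeta)$, one gets $\lambda_{\max}(\mathcal I(\bbeta)_{S,S}) \lesssim c(\sigma)^{-1}\|\psi''\|_\infty \lambda_{\max}(\E\bm x\bm x^\top) \asymp c(\sigma)^{-1}$. So the main term in \eqref{eq: score attack upper bound} contributes $2n\varepsilon\sqrt{\E\|M-\bbeta\|_2^2}\cdot O(c(\sigma)^{-1/2})$. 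For the tail term $\int_T^\infty \Pro(|\A_{\bbeta,s^*}| > t)\,dt$, I would use the bounded-design assumption $\|\bm x\|_\infty < c$ to control $\|[\tilde y - \psi'(\tilde{\bm x}^\top\bbeta)]\tilde{\bm x}\|_2$ on the restricted support (roughly $\sqrt{s^*}$ times a sub-exponential factor from $\tilde y$, using that $y$ given $\bm x$ is from an exponential family with bounded $\psi''$, hence sub-exponential with parameter $\asymp \sqrt{c(\sigma)}$), combine with a bound on $\|(M-\bbeta)_S\|_2 \lesssim \sqrt{s^*}$ (the hypothesis $\|M(\bm y,\bm X) - \bbeta\|_2^2 \lesssim s^*$), and pick $T \asymp s^*\sqrt{c(\sigma)}\log n$ or so, so that the tail integral is negligible (polynomially small in $n$) and the $2n\delta T$ term is $o(1)$ because $\delta < n^{-(1+\gamma)}$.

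Combining the two sides: $s^* \lesssim \sum_i \E\A_{\bbeta,s^*} \lesssim n\varepsilon c(\sigma)^{-1/2}\sqrt{\E\|M-\bbeta\|_2^2} + o(1)$, which rearranges to $\E_{\bm\pi}\E_{\bm Z|\bbeta}\|M(\bm y,\bm X) - \bbeta\|_2^2 \gtrsim c(\sigma)(s^*)^2/(n\varepsilon)^2$. Wait — I need the extra $\log d$ factor. This comes from choosing the prior so that the retained marginal variances are actually $\asymp \log(d/s^*) \asymp \log d$ (valid since $s^* = o(d^{1-\gamma})$): then $\E_{\pi_j}\theta_j^2 \asymp \log d$ while the score ratio $\pi_j'(\theta_j)/\pi_j(\theta_j) \asymp \theta_j$ still has second moment $\asymp \log d$, so the Stein lower bound becomes $\sum_{j\in S}\E_{\pi_j}\theta_j^2 - \sqrt{C}\sqrt{\sum_{j\in S}\E_{\pi_j}\theta_j^2} \asymp s^*\log d$. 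Then the same rearrangement gives $\gtrsim c(\sigma)(s^*\log d)^2/(n\varepsilon)^2$, and since the maximum risk dominates the Bayes risk this proves \eqref{eq: high-dim glm lb}. I would need to double-check that rescaling the prior does not break the normalization $\E\|M - \bbeta\|_2^2 = o(1)$ assumption — here I think the right move is to note the assumption is on the estimator's performance at each fixed $\bbeta$ and then argue the Bayes risk is still $o(1)$, or alternatively scale only by a slowly growing factor so that the boundedness assumptions are preserved while $\log d$ appears.

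The main obstacle I anticipate is the interplay between the \emph{scale} of the prior and the normalizing assumptions. Getting the extra $\log d$ requires the prior to have marginal variance growing like $\log d$ on the support; but several of the steps (the hypothesis $\E\|M-\bbeta\|_2^2 = o(1)$, the tail bound calibration, the application of Stein's lemma which needs $\E_{\bm\pi}\|g(\bth)-\bth\|_2^2 \leq C$) are cleanest when everything is $O(1)$. Reconciling these — most likely by a careful two-part prior where the support coordinates are drawn from $N(0,\log d)$ truncated to a bounded-relative-to-$\sqrt{\log d}$ range so the ratio $\pi_j'/\pi_j$ stays controlled, and tracking how each $c(\sigma)$ and $\log d$ factor propagates through Lemmas \ref{lm: score attack upper bound} and \ref{lm: score attack stein's lemma} — is where the real work lies. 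A secondary technical point is verifying the regularity conditions of Theorem \ref{thm: score attack general} (completeness) and Lemma \ref{lm: stein's lemma} for the truncated/order-statistic prior, since the marginal of $\beta_j$ conditioned on the event $\{j\in I_{s^*}\}$ has a nonsmooth density at the truncation boundary; I would handle this either by smoothing the prior slightly or by conditioning on $I_{s^*}$ and checking the (now genuinely Gaussian-truncated) conditional marginals satisfy the hypotheses.
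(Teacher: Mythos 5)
Your proposal is essentially the paper's own argument: the same sparse score attack \eqref{eq: high-dim glm attack}, the same top-$s^*$ Gaussian order-statistic prior, the upper bound via Lemma \ref{lm: score attack upper bound} with the sub-exponential tail of $y$ and the crude bound $|\langle \tilde{\bm x}, (M-\bbeta)_{\supp(\bbeta)}\rangle| \lesssim s^*$, and the lower bound via Lemma \ref{lm: score attack stein's lemma}. The one point where you waver --- how to get the extra $\log d$ and whether ``rescaling'' the prior breaks the normalization hypotheses --- dissolves on closer inspection: no rescaling is needed, because the prior as you already defined it (keep the $s^*$ largest of $d$ i.i.d.\ standard normals) automatically satisfies $\E_{\bm\pi}\sum_{j}\beta_j^2\1(j\in I_{s^*}) \gtrsim s^*\log(d/s^*) \asymp s^*\log d$; this is exactly the content of the paper's Lemma \ref{lm: gaussian top s} on the second moments of the top Gaussian order statistics, and it is the \emph{selection} effect, not the truncation threshold or the marginal scale, that produces the logarithm. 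Your secondary concern about the regularity of the selected marginals in Stein's lemma is a fair technical point (the paper applies Lemma \ref{lm: score attack stein's lemma} to this prior without further comment), and conditioning on $I_{s^*}$ as you suggest is a reasonable way to make that step rigorous.
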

Theorem \ref{thm: high-dim glm lb} is proved in Section \ref{sec: proof of thm: high-dim glm lb}. In conjunction with the non-private minimax lower bound $\inf_{M} \sup_{\bbeta \in \R^d, \|\bbeta\|_0 \leq s} \E\|M(\bm y, \bm X) - \bbeta\|^2_2 \gtrsim c(\sigma)s^*\log d/n$, \eqref{eq: high-dim glm lb} implies 
\begin{align*}
	\inf_{M \in \mathcal M_{\varepsilon, \delta}} \sup_{\bbeta \in \R^d, \|\bbeta\|_0 \leq s} \E\|M(\bm y, \bm X) - \bbeta\|^2_2 \gtrsim c(\sigma)\left(\frac{s^*\log d}{n} + \frac{(s^*\log d)^2}{n^2\varepsilon^2}\right).
\end{align*}
By comparing the privacy-constrained minimax lower bound with Theorem \ref{thm: glm upper bound}, we can see that the noisy iterative hard thresholding algorithm for sparse GLMs is optimal up to factors of $\log n$ under the usual setting of $\delta = n^{-\alpha}$, and that the cost of privacy, as measured by squared $\ell_2$ norm, in sparse GLM parameter estimation is of the order $(s^*\log d)^2/n^2\varepsilon^2$. 

%!TEX root = Privacy-GLM-JRSSB.tex
%%%%%%%%%%%%%%%%%%%%%%%%%%%%%%%%%%%%%%%%%%%%%%%%%%%%%%%%%%%%%%%%%%%
\section{Numerical Results}\label{sec: experiments}
In this section, we investigate the numerical performance of the proposed  privacy-preserving algorithms by conducting experiments with both simulated and real data sets. The numerical results also illustrate our theoretical findings on differentially private GLM parameter estimation.

\subsection{Simulated Data}\label{sec: simulated data}

For the low-dimensional GLM, our simulated data set is constructed as follows. For our desired choice of $d$ and $n$, we sample $\bbeta$ uniformly at random from the unit sphere in $\R^d$, draw coordinates of the design vector $\bm x_i$ independently from the uniform distribution over $(-1, 1)$ for each $i \in [n]$, and sample $y_i$ from the logistic regression model, that is $y_i$ following the Bernoulli distribution with success probability $\frac{1}{1+\exp(-\bm x_i^\top \bbeta)}$. Using the simulated data, we study the numerical performance of Algorithm \ref{algo: private glm} via three sets of experiments. In each experiment, the algorithm is initialized with $\bbeta = \bm 0 \in \R^d$, with step size $\eta^0 = 1$ for each iteration.
\begin{enumerate}
	\item [(a).] Fix $n = 40000, \varepsilon = 0.5$ and $\delta = (2n)^{-1}$, and compare the iterates of Algorithm \ref{algo: private glm} with the true $\bbeta$ for $d = 10, 20$, or $40$. As displayed in Figure 1(a), the log error $\log(\|\bbeta^t - \bbeta\|_2^2)$ is linear in $t$ when $d = 10$ but deteriorates as $d$ increases, confirming the theoretical result in Theorem \ref{thm: non-sparse glm upper bound}.
	\item [(b).] Fix $d = 20, \varepsilon = 0.5$ and $\delta = (2n)^{-1}$, and compare the iterates of Algorithm \ref{algo: private glm} with the true $\bbeta$ for $n = 20000, 40000$, or $80000$. As predicted by Theorem \ref{thm: non-sparse glm upper bound}, $\log(\|\bbeta^t - \bbeta\|_2^2)$ is linear in $t$ when $n = 80000$ but deteriorates as $n$ decreases.
	\item [(c).] Fix $d = 20, n = 40000$ and $\delta = (2n)^{-1}$, and compare the iterates of Algorithm \ref{algo: private glm} with the true $\bbeta$ for $\varepsilon = 0.2, 0.5, 0.8$, or $\infty$ (non-private). The decrease in $\log(\|\bbeta^t - \bbeta\|_2^2)$ as $\varepsilon$ increases is consistent with Theorem \ref{thm: non-sparse glm upper bound}.
\end{enumerate}
\begin{figure}[H]
		\subfloat[]{\includegraphics[width= 0.33\textwidth]{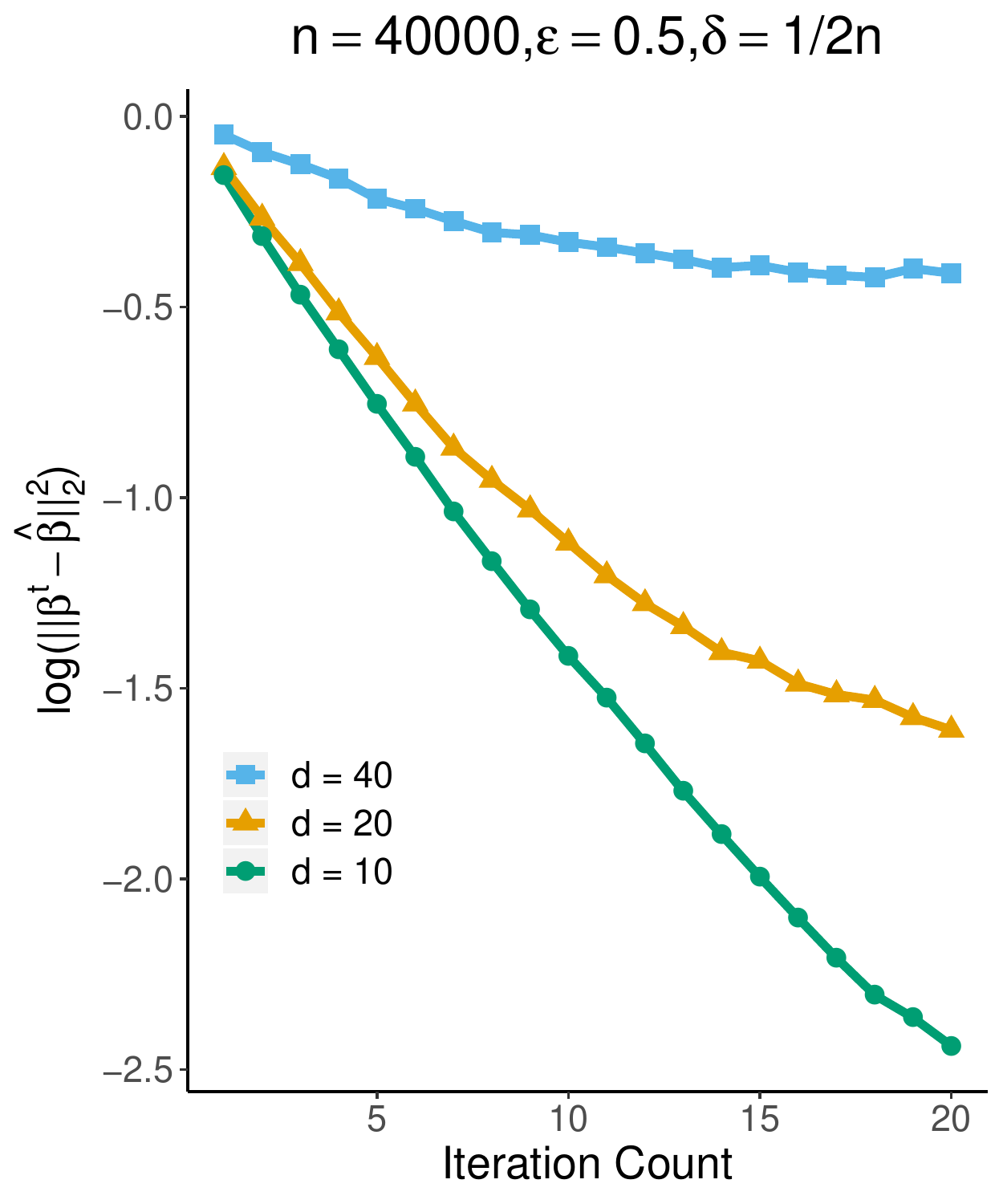}}
		\subfloat[]{\includegraphics[width= 0.33\textwidth]{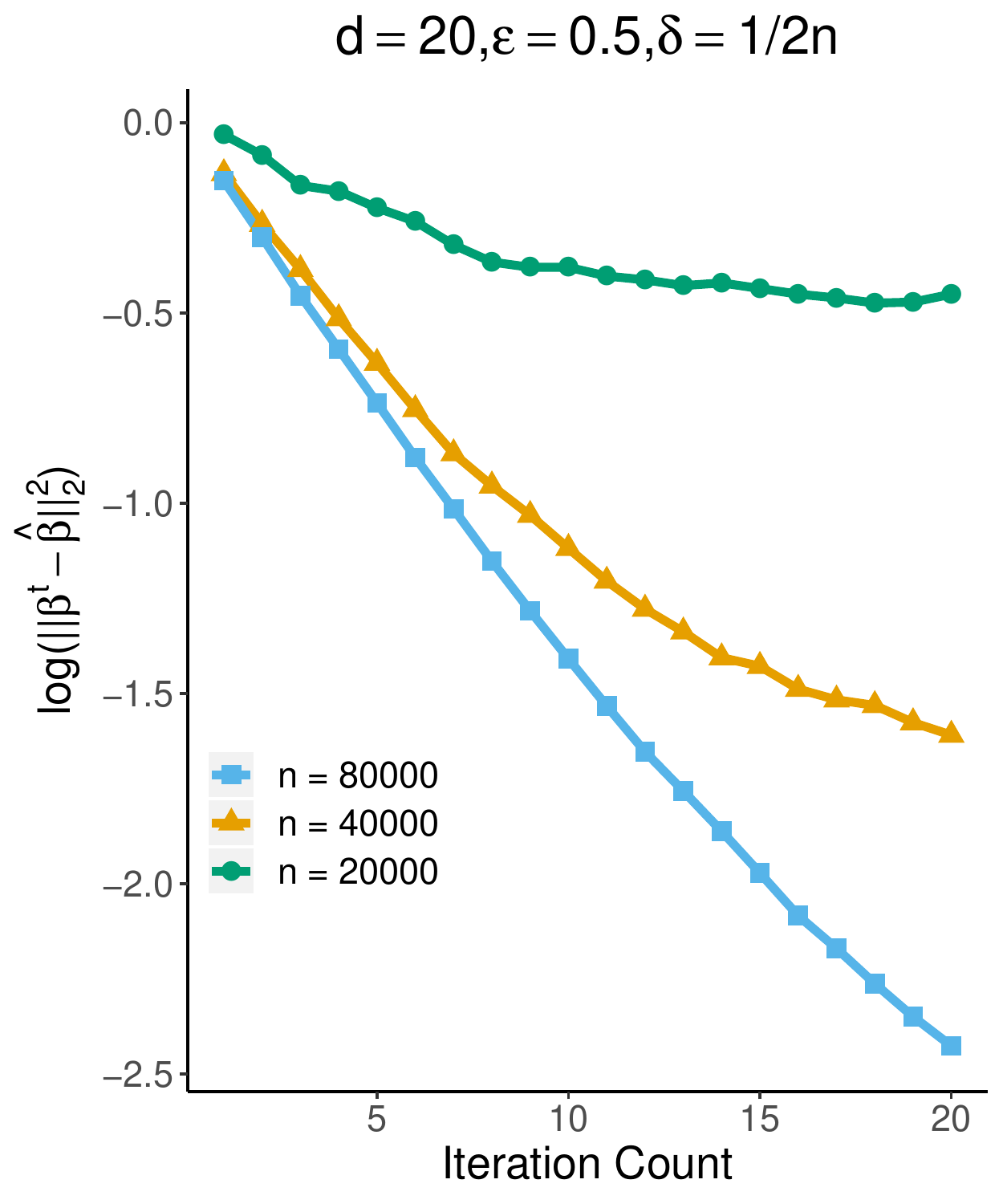}}
		\subfloat[]{\includegraphics[width= 0.33\textwidth]{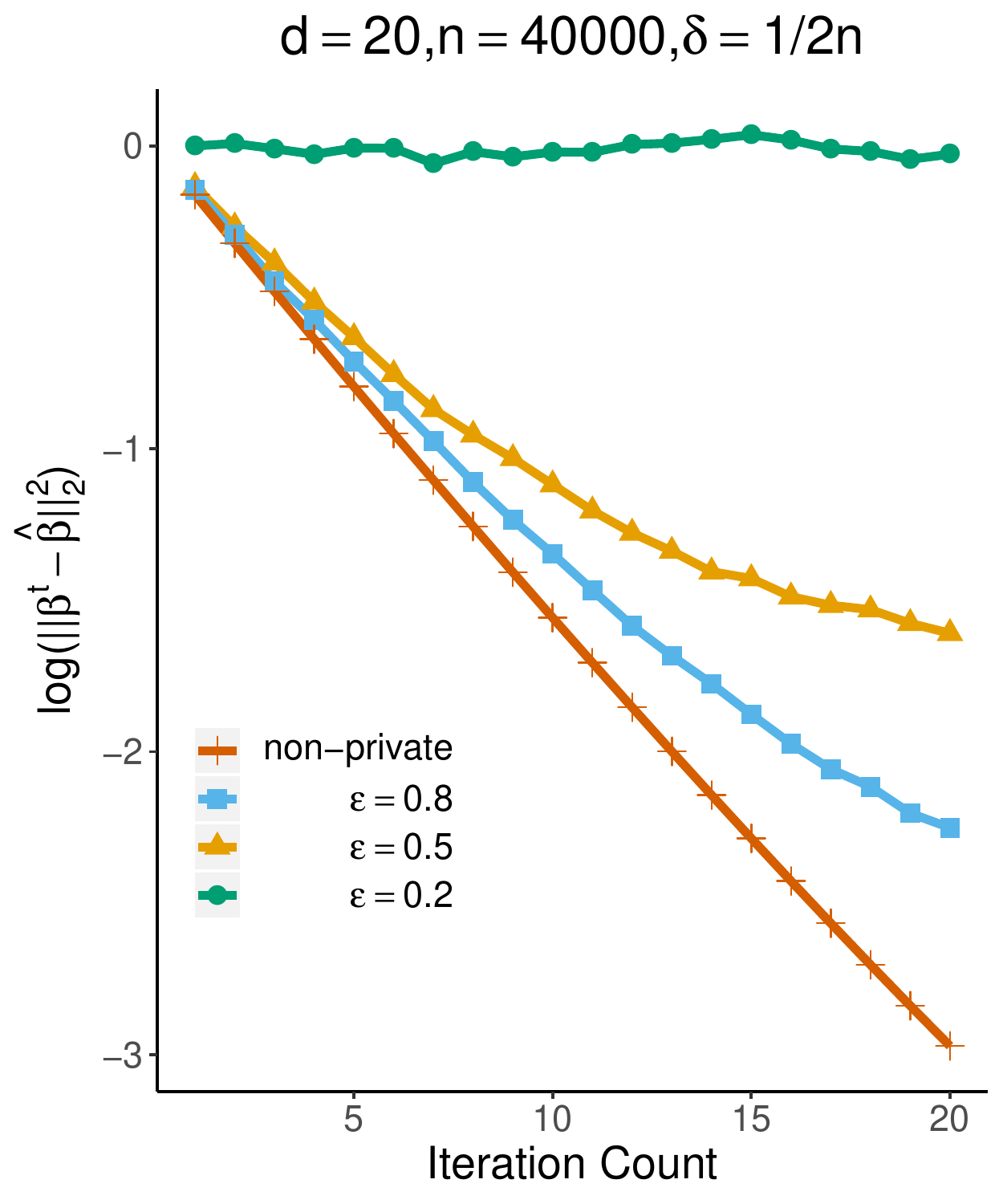}}
		\caption{Log-distance between the iterates of Algorithm \ref{algo: private glm} and the true parameter $\bbeta$ under various settings of $n, d, \varepsilon$ and $\delta$.}
\end{figure}

For the high-dimensional sparse GLM, the simulated data set is constructed in the identical way as the low-dimensional case, except that the $s$-sparse true parameter $\bbeta$ is obtained by concatenating a random draw from the unit sphere in $\R^s$ with $\bm 0 \in \R^{d-s}$. We have three sets of experiments to study the numerical performance of Algorithm \ref{algo: private sparse glm}. In each experiment, the algorithm is initialized with $\bbeta = \bm 0 \in \R^d$, with step size $\eta^0 = 1$ for each iteration and the sparsity level set at twice of the true sparsity.
\begin{enumerate}
	\item [(a).] Fix $d = 10000, n = 40000, \varepsilon = 0.5$ and $\delta = (2n)^{-1}$, and compare the iterates of Algorithm \ref{algo: private sparse glm} with the true $\bbeta$ for $s = 10, 20$, or $40$. As suggested by Theorem \ref{thm: glm upper bound}, the log error $\log(\|\bbeta^t - \bbeta\|_2^2)$ is linear in $t$ when $s = 10$ but deteriorates as $s$ increases.
	\item [(b).] Fix $d = 10000, s = 10, \varepsilon = 0.5$ and $\delta = (2n)^{-1}$, and compare the iterates of Algorithm \ref{algo: private sparse glm} with the true $\bbeta$ for $n = 20000, 40000$, or $80000$. $\log(\|\bbeta^t - \bbeta\|_2^2)$ is linear in $t$ when $n = 80000$ or $n = 40000$, but deteriorates as $n$ decreases.
	\item [(c).] Fix $d = 10000, n = 40000, s = 10$ and $\delta = (2n)^{-1}$, and compare the iterates of Algorithm \ref{algo: private sparse glm} with the true $\bbeta$ for $\varepsilon = 0.2, 0.5, 0.8$, or $\infty$ (non-private). The decrease in $\log(\|\bbeta^t - \bbeta\|_2^2)$ as $\varepsilon$ increases confirms Theorem \ref{thm: glm upper bound}.
\end{enumerate}
	\begin{figure}[H]
		\subfloat[]{\includegraphics[width= 0.33\textwidth]{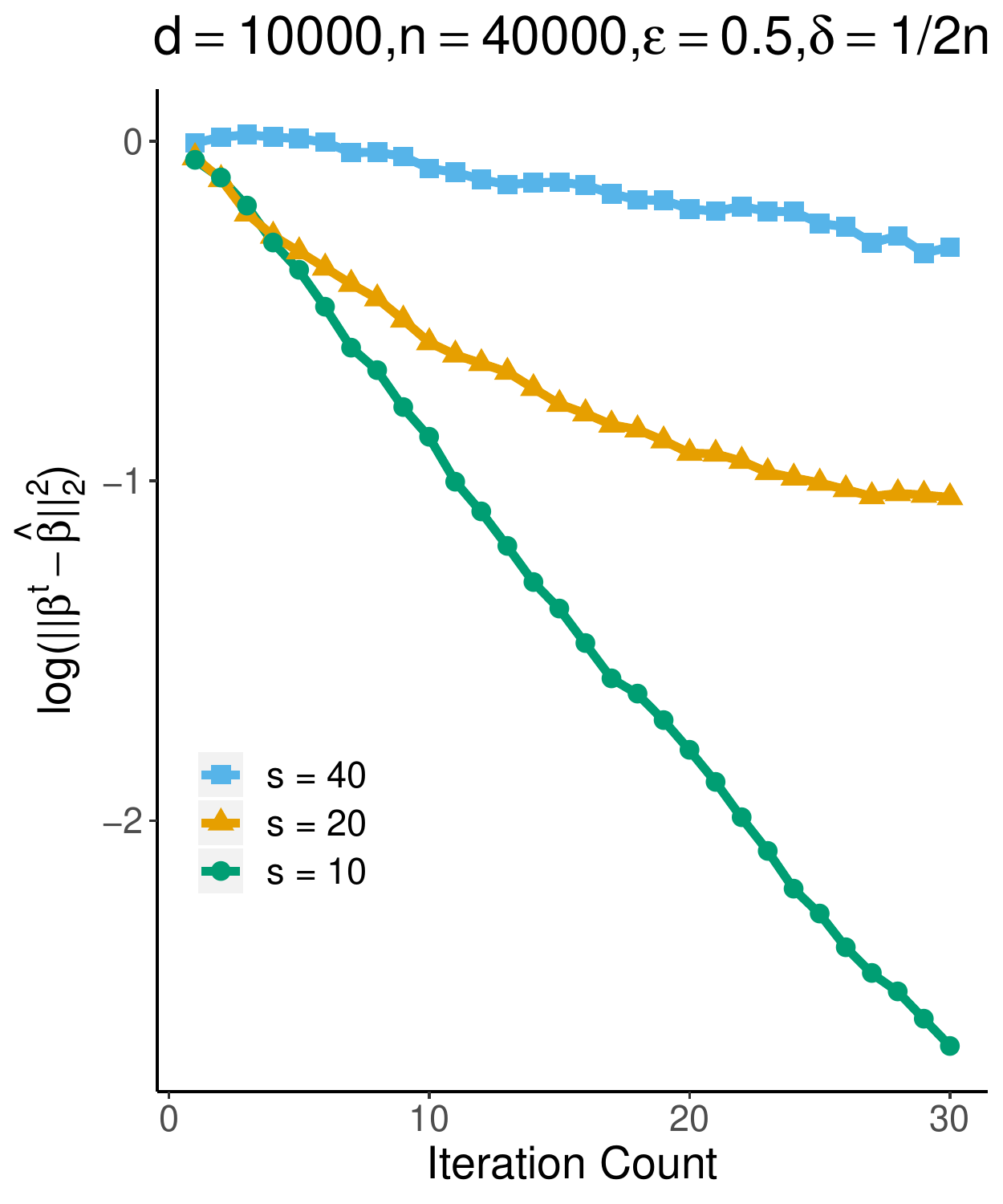}}
		\subfloat[]{\includegraphics[width= 0.33\textwidth]{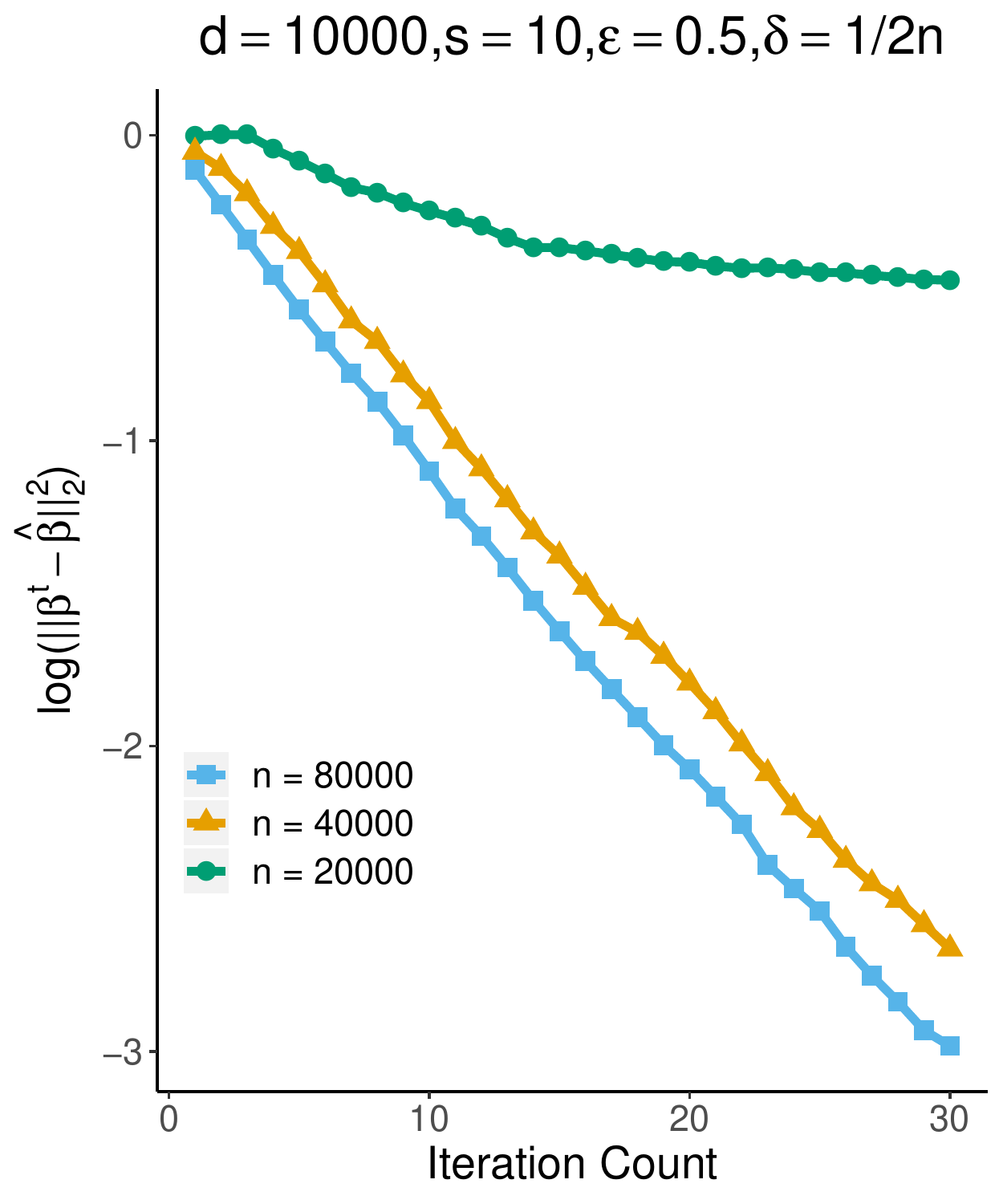}}
		\subfloat[]{\includegraphics[width= 0.33\textwidth]{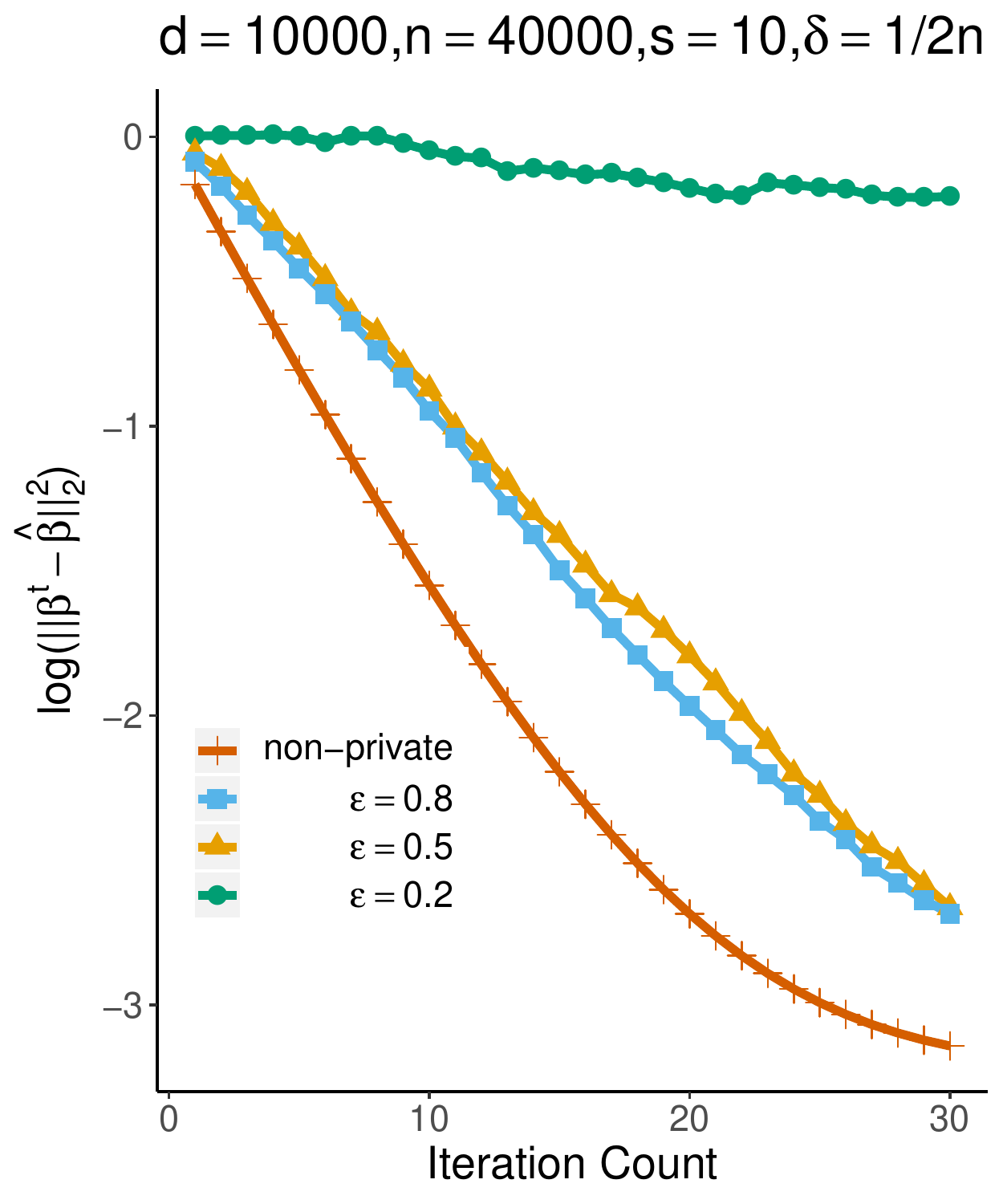}}
		\caption{Log-distance between the iterates of Algorithm \ref{algo: private sparse glm} and the true parameter $\bbeta$ under various settings of $n, d, s, \varepsilon$ and $\delta$.}
	\end{figure}

\subsection{Real Data}\label{sec: real data}
For the real data experiment, we consider the Swarm Behavior Data Set, collected by the Human Perception of Swarming project at the University of New South Wales (\url{https://unsw-swarm-survey.netlify.app/}) and made publicly available at the UCI Machine Learning Repository \cite{Dua:2019}. In this data set, each of $n = 24016$ instances contains $d = 2400$ attributes describing the behavior (velocity, direction, location, etc.) of 200 individuals in the system, and with each instance assigned a binary class label, ``flocking'' or ``not flocking''. A system of individual birds, insects, or people are said to be ``flocking'' if they are perceived moving as a group with the same velocity without colliding each other. 

In our experiment, we attempt to classify these instances into ``flocking'' or ``not flocking'' by our Algorithm \ref{algo: private sparse glm} for high-dimensional sparse GLMs. We randomly split the data set into two halves, train a sparse logistic regression model using one half, and predict the labels of the other half by this logistic model. For fitting the sparse logistic model on the training set, we run Algorithm \ref{algo: private sparse glm} for $50$ iterations with step size $\eta^0 = 0.5$ and initial value $\bbeta^0  = \bm 0 \in \R^{2401}$ (including the intercept). For various settings of $s, \varepsilon$ and $\delta$, the average misclassification rate (and its standard error) over repetitions of the experiment are displayed in the tables below. The results suggest that the classification accuracy indeed worsens as the privacy requirement becomes more stringent, but the loss of accuracy is mild compared to the non-private $\varepsilon = \infty$ case.
\begin{figure}[H]
	\subfloat[$\delta = 1/2n$]
	{\begin{tabular}{|c|c|c|c|}
			\hline
			& $s = 25$     & $s=50$       & $s=100$      \\ \hline
			$\varepsilon = 0.2$  & 0.33(.05) & 0.21(.05) & 0.13(.05) \\ \hline
			$\varepsilon = 0.5$ & 0.28(.05) & 0.20(.05) & 0.10(.02) \\ \hline
			$\varepsilon = \infty$         & 0.30(.05) & 0.21(.05) & 0.09(.03) \\ \hline
	\end{tabular}}
\quad
	\subfloat[$\delta = 1/n^2$]
	{\begin{tabular}{|c|c|c|c|}
			\hline
			& $s = 25$   & \multicolumn{1}{c|}{$s=50$} & $s=100$    \\ \hline
			$\varepsilon = 0.2$ & 0.32(.05) & 0.22(.05)                  & 0.13(.06) \\ \hline
			$\varepsilon = 0.5$ & 0.33(.04) & 0.19(.05)                  & 0.08(.02) \\ \hline
			$\varepsilon = \infty$         & 0.30(.05) & 0.21(.05)                  & 0.09(.02) \\ \hline
		\end{tabular}
	}
\caption{Mean and standard error of misclassification rates of Algorithm \ref{algo: private sparse glm} in the randomly drawn test subset of the Swarm Behavior Data Set.}
\end{figure}
%!TEX root = Privacy-GLM-JRSSB.tex
%%%%%%%%%%%%%%%%%%%%%%%%%%%%%%%%%%%%%%%%%%%%%%%%%%%%%%%%%%%%%%%%%%%
\section{Discussion}\label{sec: discussion}

In this paper, we studied the cost of differential privacy in estimating the parameters of the GLMs. We designed differentially private algorithms, based on projected gradient descent, that achieve fast, linear convergence to the optimal non-private solution, and analyzed their statistical accuracy with respect to the true parameters. The theoretical properties of our algorithms are demonstrated in numerical experiments with real and simulated data sets.

The accuracy of these algorithms are shown to be optimal up to logarithmic factors, via lower bounds of the privacy-constrained minimax risk. These lower bounds are established by the score attack framework, which generalizes prior works on tracing attacks for privacy-constrained minimax lower bounds. The upper bounds and lower bounds together have led to a clear characterization of the cost of privacy in estimating GLM parameters.

This paper suggests several promising directions of further research. On the algorithmic side, since our convergence analysis of differentially private algorithms can be applied to other $M$-estimation problems satisfying restricted strong convexity and restricted smoothness, it is of interest to study their performance in problems such as low-rank matrix recovery and regression. Our results on high-dimensional sparse GLMs also raise questions on the interplay between privacy and other structural assumptions, for example, group-structured sparsity, approximate sparsity, or low-rankness as mentioned above.

On the statistical optimality side, our score attack framework may lead to lower bounds for a much larger variety of statistical models than generalized linear models. It is also of significant value to prove sharper lower bounds that can potentially capture the remaining logarithmic gap between upper and lower bounds, and develop sharp lower bounds for differentially private confidence intervals or hypothesis testing.   
%!TEX root = Privacy-GLM-JRSSB.tex

\section{Proofs}\label{sec: proofs}
In this section, we prove the main technical results of this paper, Theorems \ref{thm: glm upper bound} and \ref{thm: score attack general}. 
\subsection{Proof of Theorem \ref{thm: glm upper bound}}\label{sec: proof of thm: glm upper bound}
\begin{proof}[Proof of Theorem \ref{thm: glm upper bound}]
	We shall first define several favorable events under which the desired convergence does occur, and then show that the probability that any of the favorable events fails to happen is negligible. These events are,
	\begin{align*}
	&\mathcal E_1 = \{\eqref{eq: glm rsc} \text{ and } \eqref{eq: glm rsm} \text{ hold}\}, \mathcal E_2  = \{\Pi_R(y_i) = y_i, \forall i \in [n]\}, \mathcal E_3  = \{\|\bbeta^t - \hat\bbeta\|_2 \leq 3, 0 \leq t \leq T\}.
	\end{align*}
	
	We first analyze the behavior of Algorithm \ref{algo: private sparse glm} under these events. The assumed scaling of $n \geq K \cdot \left(Rs^*\log d \sqrt{\log(1/\delta)}\log n/\varepsilon\right)$ implies that $n \geq K' s^*\log d/n$ for a sufficiently large $K'$. Since $\|\bbeta^t\|_0 \leq s \asymp s^*$ for every $t$ and $\|\hat\bbeta\|_0 \leq s^*$ by definition, the RSM condition \eqref{eq: glm rsm} implies that for every $t$,
	\begin{align}
	& \langle \nabla \L_n(\bbeta^{t}) - \nabla \L_n(\hat\bbeta), \bbeta^{t} - \hat\bbeta \rangle \leq  \frac{4\gamma}{3}\|\bbeta^{t} - \hat\bbeta\|_2^2. \label{eq: glm rsm modified}
	\end{align}
	Similarly, under event $\mathcal E_3$, the RSC condition \eqref{eq: glm rsc} implies that 
	\begin{align}
	& \langle \nabla \L_n(\bbeta^{t}) - \nabla \L_n(\hat\bbeta), \bbeta^{t} - \hat\bbeta \rangle \geq  \frac{2\alpha}{3}\|\bbeta^{t} - \hat\bbeta\|_2^2. \label{eq: glm rsc modified}
	\end{align}
	These two inequalities and our choice of parameters $s, \eta$ now allow Theorem \ref{thm: noisy iterative hard theresholding convergence} to apply. Let $\bm w^t_1, \bm w^t_2, \cdots, \bm w^t_s$ be the noise vectors added to $\bbeta^t - \eta^0\nabla \L_n(\bbeta^t; Z)$ when the support of $\bbeta^{t+1}$ is iteratively selected, $S^{t+1}$ be the support of $\bbeta^{t+1}$, and $\tilde {\bm w}^t$ be the noise vector added to the selected $s$-sparse vector. Define $\bm W_t =C_\gamma\left(\sum_{i \in [s]} \|\bm w^t_i\|^2_\infty + \|\tilde {\bm w}^t_{S^{t+1}}\|_2^2\right)$, then Theorem \ref{thm: noisy iterative hard theresholding convergence} leads to
	\begin{align}
	\L_n(\bbeta^{(T)}) - \L_n(\hat\bbeta) &\leq \left(1-\rho\frac{\alpha}{2\gamma}\right)^T\left(\L_n(\bbeta^0) - \L_n(\hat\bbeta)\right)  + \sum_{k=0}^{T-1}\left(1-\rho\frac{\alpha}{2\gamma}\right)^{T-k-1}\bm W_k \notag\\
	&\leq  
	\left(1-\rho\frac{\alpha}{2\gamma}\right)^T\frac{2\gamma}{3}\|\bbeta_0 - \hat\bbeta\|_2^2 + \sum_{k=0}^{T-1}\left(1-\rho\frac{\alpha}{2\gamma}\right)^{T-k-1}\bm W_k \notag \\
	&\leq  
	\left(1-\rho\frac{\alpha}{2\gamma}\right)^T6\gamma + \sum_{k=0}^{T-1}\left(1-\rho\frac{\alpha}{2\gamma}\right)^{T-k-1}\bm W_k.\label{eq: glm suboptimality upper bound}
	\end{align}
	The second inequality is a consequence of \eqref{eq: glm rsm modified}, and the third inequality follows from the assumption that $\|\bbeta_0 - \hat\bbeta\|_2 \leq 3$. 
	On the other hand, we can lower bound $\L_n(\bbeta^{(T)}) - \L_n(\hat\bbeta)$ as follows: by \eqref{eq: glm rsc modified},
	\begin{align}
	\L_n(\bbeta^{(T)}) - \L_n(\hat\bbeta) \geq \L_n(\bbeta^{(T)}) - \L_n(\bbeta^*) \geq \frac{\alpha}{3}\|\bbeta^{(T)} - \bbeta^*\|_2^2 - \langle \nabla \L_n(\bbeta^*), \bbeta^* - \bbeta^{(T)} \rangle. \label{eq: glm suboptimality lower bound}
	\end{align}
	Combining \eqref{eq: glm suboptimality upper bound} and \eqref{eq: glm suboptimality lower bound} yields
	\begin{align}
	\frac{\alpha}{3}\|\bbeta^{(T)} - \bbeta^*\|_2^2 &\leq  \langle \nabla \L_n(\bbeta^*), \bbeta^* - \bbeta^{(T)} \rangle + \left(1-\rho\frac{\alpha}{2\gamma}\right)^T6\gamma + \sum_{k=0}^{T-1}\left(1-\rho\frac{\alpha}{2\gamma}\right)^{T-k-1}\bm W_k \notag \\
	&\leq  \|\nabla \L_n(\bbeta^*)\|_\infty \sqrt{s+s^*}\|\bbeta^* - \bbeta^{(T)}\|_2 + \left(1-\rho\frac{\alpha}{2\gamma}\right)^T6\gamma + \sum_{k=0}^{T-1}\left(1-\rho\frac{\alpha}{2\gamma}\right)^{T-k-1}\bm W_k  \notag \\
	&= \|\nabla \L_n(\bbeta^*)\|_\infty \sqrt{s+s^*}\|\bbeta^* - \bbeta^{(T)}\|_2 + \frac{1}{n} + \sum_{k=0}^{T-1}\left(1-\rho\frac{\alpha}{2\gamma}\right)^{T-k-1}\bm W_k. \label{eq: glm upper bound fundamental inequality}
	\end{align}
	The last step follows from our choice of  $T = (2\gamma/\rho\alpha)\log(6\gamma n)$. Now let us define two events that allow for high-probability bounds of the right side.
	\begin{align*}
	&\mathcal E_4 = \left\{\max_t \bm W_t \leq  K\left(\frac{Rs^*\log d \sqrt{\log(1/\delta)}\log n}{n\varepsilon}\right)^2\right\},
	&\mathcal E_5 = \left\{\|\nabla \L_n(\bbeta^*)\|_\infty \leq 4\sigma_x\sqrt{c_2}\sqrt{\frac{\log d}{n}}\right\}.
	\end{align*}
	Under $\mathcal E_4, \mathcal E_5$, we can conclude from \eqref{eq: glm upper bound fundamental inequality} that
	\begin{align*}
	\|\bbeta^{(T)} - \bbeta^*\|_2 \lesssim \sqrt{c(\sigma)}\left(\sqrt{\frac{s^*\log d}{n}} + \frac{s^*\log d \sqrt{\log(1/\delta)}\log^{3/2} n}{n\varepsilon}.\right)
	\end{align*}
	
	We have shown so far that the desired rate of convergence \eqref{eq: glm upper bound} holds when $\mathcal E_i$ occurs for $1 \leq i \leq 5$; we now turn to controlling the probability that any of the five events fails to happen, $\sum_{i=1}^5 \Pro(\mathcal E^c_i)$.
	\begin{itemize}
		\item By Proposition \ref{lm: glm rsc and rsm}, $\Pro(\mathcal E_1^c) \leq c_3\exp(-c_4n)$ under the assumptions of Theorem \ref{thm: glm upper bound}.
		\item We have $\Pro(\mathcal E_2^c) \leq c_3\exp(-c_4\log n)$ by the choice of $R$, and assumptions (G1), (G2) which imply the following bound of moment generating function of $y_i$: we have
		\begin{align*}
			\log \E \exp\left(\lambda 
			\cdot \frac{y_i - \psi'(\bm x_i^\top\bbeta)}{c(\sigma)}\Big| \bm x_i\right) &= \frac{1}{c(\sigma)}\left(\psi(\bm x_i^\top\bbeta + \lambda) - \psi(\bm x_i^\top\bbeta) - \lambda\psi'(\bm x_i^\top\bbeta)\right) \\
			&\leq \frac{1}{c(\sigma)} \cdot \frac{\lambda^2 \psi^{''}(\bm x_i^\top\bbeta + \tilde \lambda)}{2}
		\end{align*}
		for some $\tilde \lambda \in (0, \lambda)$. It follows that $\E \exp\left(\lambda \cdot \frac{y_i - \psi'(\bm x_i^\top\bbeta)}{c(\sigma)}\Big| \bm x_i\right) \leq \exp\left(\frac{c_2\lambda^2}{2c(\sigma)}\right)$ because $\|\psi^{''}\|_\infty <c_2$.
		\item For $\mathcal E_3$, we have $\Pro(\mathcal E^c_3) \leq T \cdot c_3\exp(-c_4\log(d/s^*)) =  c_3\exp(-c_4\log(d/s^*\log n))$ by the initial condition $\|\bbeta^0 - \hat\bbeta\|_2^3$ and proof by induction via the following lemma, to be proved in Section \ref{sec: proof of lm: glm contraction}.
		\begin{Lemma}\label{lm: glm contraction}
			Under the assumptions of Theorem \ref{thm: glm upper bound} Let $\bbeta^{k}, \bbeta^{k+1}$ be the $k$th and $(k+1)$th iterates of Algorithm \ref{algo: private sparse glm}. If $\|\bbeta^{k} - \hat\bbeta\|_2 \leq 3$, we have $\|\bbeta^{k+1} - \hat\bbeta\|_2 \leq 3$ with probability at least $1 - c_3\exp(-c_4\log (d/s^*))$.
		\end{Lemma}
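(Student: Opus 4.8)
The plan is to carry out the one-step analysis of Algorithm~\ref{algo: private sparse glm} on the deterministic favorable events already used in the proof of Theorem~\ref{thm: glm upper bound}, and show that on those events the only way $\|\bbeta^{k+1}-\hat\bbeta\|_2$ can exceed $3$ is for the fresh Laplace noise drawn inside the $k$-th iteration to be atypically large, an event of probability at most $c_3\exp(-c_4\log(d/s^*))$. I would condition on $\mathcal E_1$ (the RSC/RSM bounds \eqref{eq: glm rsc}, \eqref{eq: glm rsm} hold), $\mathcal E_2$ ($\Pi_R(y_i)=y_i$ for all $i$, so that $\bbeta^{k+0.5}=\bbeta^k-\eta^0\nabla\L_n(\bbeta^k)$ is an exact gradient step), and $\{\|\nabla\L_n(\hat\bbeta)\|_\infty\lesssim\sqrt{\log d/n}\}$; the failure probabilities of these are of the $\exp(-c_4 n)$- or $\exp(-c_4\log n)$-type and are accounted for elsewhere in the proof of Theorem~\ref{thm: glm upper bound}, so the probability asserted in the Lemma refers to the Laplace randomness of the $k$-th \texttt{NoisyHT} call only.

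Given $\|\bbeta^k-\hat\bbeta\|_2\le 3$, the RSC bound \eqref{eq: glm rsc} is in its quadratic branch, and because $\bbeta^k,\hat\bbeta,\bbeta^{k+1}$ are all $O(s^*)$-sparse the $\ell_1$-correction terms in \eqref{eq: glm rsc}, \eqref{eq: glm rsm} are only an $O(s^*\log d/n)$ fraction of the $\ell_2$ terms, hence negligible under the assumed scaling $n\gtrsim Rs^*\log d\sqrt{\log(1/\delta)}\log n/\varepsilon$; this places us in the hypotheses \eqref{eq: rsc general}, \eqref{eq: rsm general} of Theorem~\ref{thm: noisy iterative hard theresholding convergence} for the pair $(\bbeta^k,\hat\bbeta)$ with constants $2\alpha/3$ and $4\gamma/3$, as in \eqref{eq: glm rsc modified}, \eqref{eq: glm rsm modified}. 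Theorem~\ref{thm: noisy iterative hard theresholding convergence} then gives $\L_n(\bbeta^{k+1})-\L_n(\hat\bbeta)\le\kappa\big(\L_n(\bbeta^k)-\L_n(\hat\bbeta)\big)+\bm W_k$ with $\kappa<1$ for the chosen $s=4c_0(\gamma/\alpha)^2 s^*$ and $\eta^0=1/(2\gamma)$, where $\bm W_k=C_\gamma\big(\sum_{i\in[s]}\|\bm w_i\|_\infty^2+\|\tilde{\bm w}_{S^{k+1}}\|_2^2\big)$. I would bound $\L_n(\bbeta^k)-\L_n(\hat\bbeta)\lesssim\gamma$ by restricted smoothness and $\|\nabla\L_n(\hat\bbeta)\|_\infty=o(1)$ (exactly the estimate behind \eqref{eq: glm suboptimality upper bound}), and bound $\bm W_k\lesssim\big(Rs^*\log d\sqrt{\log(1/\delta)}\log n/(n\varepsilon)\big)^2=o(1)$ with probability $\ge 1-c_3\exp(-c_4\log(d/s^*))$ via a maximal inequality for the $O(sd)$ i.i.d.\ Laplace coordinates, whose per-coordinate scale is $O\big(R\sqrt{s^*\log(1/\delta)}\log n/(n\varepsilon)\big)$. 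Finally I would pass back to $\ell_2$: if $\|\bbeta^{k+1}-\hat\bbeta\|_2>3$, integrating \eqref{eq: glm rsc} along the segment from $\hat\bbeta$ to $\bbeta^{k+1}$ (the quadratic branch for the part within radius $3$, the linear branch beyond) lower-bounds $\L_n(\bbeta^{k+1})-\L_n(\hat\bbeta)$ by a strictly increasing function of $\|\bbeta^{k+1}-\hat\bbeta\|_2$, and matching this against the upper bound forces $\|\bbeta^{k+1}-\hat\bbeta\|_2\le 3$.

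I expect the main obstacle to be exactly this last conversion, i.e.\ coping with the \emph{local} (radius-$3$) nature of the restricted strong convexity in Fact~\ref{lm: glm rsc and rsm}: the increasing suboptimality lower bound only pins $\|\bbeta^{k+1}-\hat\bbeta\|_2$ below $3$ if $\L_n(\bbeta^{k+1})-\L_n(\hat\bbeta)$ is small relative to $\alpha$, which is not delivered by the crude bound $\L_n(\bbeta^k)-\L_n(\hat\bbeta)\lesssim\gamma$ when $\gamma/\alpha$ is large. Making it close will require careful use of the choices $s=4c_0(\gamma/\alpha)^2 s^*$, $\eta^0=1/(2\gamma)$ and the $n$-scaling, and is cleanest if one strengthens the induction to also propagate a decreasing function-value invariant, say $\L_n(\bbeta^k)-\L_n(\hat\bbeta)\le V_k$ with $V_k$ nonincreasing down to the statistical floor, after which the radius-$3$ containment of $\bbeta^{k+1}$ follows from RSC. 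The remaining pieces — the $\ell_1$-correction bookkeeping and the Laplace concentration — are routine.
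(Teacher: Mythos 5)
You correctly identify the weak point of your argument, but the fix you sketch does not close it, so there is a genuine gap. Routing the one-step analysis through Theorem \ref{thm: noisy iterative hard theresholding convergence} only yields the function-value recursion $\L_n(\bbeta^{k+1})-\L_n(\hat\bbeta)\le\kappa\big(\L_n(\bbeta^k)-\L_n(\hat\bbeta)\big)+\bm W_k$, whose right side is of order $\gamma$ in the early iterations (your own bound $\L_n(\bbeta^k)-\L_n(\hat\bbeta)\lesssim\gamma$ cannot be improved at $k=0$). Converting back through the linear branch of \eqref{eq: glm rsc} then gives only $\alpha\|\bbeta^{k+1}-\hat\bbeta\|_2\lesssim\gamma$, i.e. a radius of order $\gamma/\alpha$, not $3$. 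Propagating a decreasing invariant $V_k$ does not help: one would need $V_{k+1}\le 3\alpha$ before the radius-$3$ containment follows, which takes $\Omega((\gamma/\alpha)\log(\gamma/\alpha))$ iterations, and during all of those iterations the RSC hypothesis needed to run the contraction at the next step (namely $\|\bbeta^{k+1}-\hat\bbeta\|_2\le 3$) is exactly what you cannot yet certify. The induction is therefore circular whenever $\gamma/\alpha$ is large.

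The paper's proof avoids this by \emph{not} passing through the function-value contraction theorem. It redoes the one-step expansion so as to retain a negative quadratic term in the new iterate: \eqref{eq: glm contraction master expansion 1} has the form $\L_n(\bbeta^{k+1})-\L_n(\hat\bbeta)\le(\gamma-\alpha/3)\|\bbeta^{k}-\hat\bbeta\|_2^2-\gamma\|\bbeta^{k+1}-\hat\bbeta\|_2^2+\langle\bm g^k-2\gamma(\bbeta^k-\bbeta^{k+1}),\bbeta^{k+1}-\hat\bbeta\rangle$. The cross term is then bounded by $4\alpha+\tfrac{2\alpha}{9}\|\bbeta^{k+1}-\hat\bbeta\|_2^2$ using Lemma \ref{lm: noisy hard thresholding overall accuracy} together with the specific choice $s=4c_0(\gamma/\alpha)^2 s^*$; the factor $s^*/s\asymp(\alpha/\gamma)^2$ from the projection accuracy is what cancels the $\gamma^2/\alpha$ prefactor coming from Cauchy--Schwarz and leaves only $O(\alpha)$ additive error. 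With the surviving $-(\gamma-2\alpha/9)\|\bbeta^{k+1}-\hat\bbeta\|_2^2$ on the right and the linear-branch lower bound $\L_n(\bbeta^{k+1})-\L_n(\hat\bbeta)\ge\alpha\|\bbeta^{k+1}-\hat\bbeta\|_2$ on the left, assuming $\|\bbeta^{k+1}-\hat\bbeta\|_2>3$ forces $(3\gamma+\alpha/3)\|\bbeta^{k+1}-\hat\bbeta\|_2\le 9\gamma+\alpha$, a contradiction that holds for every value of $\gamma/\alpha$ because the dominant $\gamma$-order coefficients on both sides match. This retained negative term is the missing idea in your proposal; everything else in your plan (the favorable events, the sparsity bookkeeping for the $\ell_1$ corrections, and the Laplace tail bound giving the $1-c_3\exp(-c_4\log(d/s^*))$ probability) matches the paper.
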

		\item For $\mathcal E_4$, we invoke an auxiliary lemma to be proved in Section \ref{sec: proof of lm: laplace noise bound}.
		\begin{Lemma}\label{lm: laplace noise bound}
			Consider $\bm w \in \R^k$ with $w_1, w_2, \cdots, w_k \stackrel{\text{i.i.d.}}{\sim}$ Laplace$(\lambda)$. For every $C > 1$, 
			\begin{align*}
			& \Pro\left(\|\bm w\|_2^2 > kC^2\lambda^2\right) \leq ke^{-C}\\
			& \Pro\left(\|\bm w\|_\infty^2 > C^2\lambda^2\log^2k\right) \leq e^{-(C-1)\log k}.
			\end{align*}
		\end{Lemma}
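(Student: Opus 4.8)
\textbf{Proof proposal for Lemma \ref{lm: laplace noise bound}.} The plan is to deduce both tail bounds from a single elementary ingredient: a $\mathrm{Laplace}(\lambda)$ variable $W$ satisfies $\Pro(|W| > t) = e^{-t/\lambda}$ for every $t \geq 0$. Everything else is a union bound.

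First I would handle the $\ell_2$ bound. The key is the purely deterministic containment
\begin{align*}
\left\{\|\bm w\|_2^2 > kC^2\lambda^2\right\} \subseteq \bigcup_{i=1}^k \left\{w_i^2 > C^2\lambda^2\right\},
\end{align*}
which holds because if $w_i^2 \le C^2\lambda^2$ for every $i$, then summing over $i$ forces $\|\bm w\|_2^2 \le kC^2\lambda^2$. Taking probabilities, applying the union bound, and invoking the Laplace tail formula with $t = C\lambda$ yields $\Pro(\|\bm w\|_2^2 > kC^2\lambda^2) \le \sum_{i=1}^k \Pro(|w_i| > C\lambda) = k e^{-C}$, which is the claimed bound.

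Next, the $\ell_\infty$ bound is even more direct: since all quantities are nonnegative, one has the exact identity $\{\|\bm w\|_\infty^2 > C^2\lambda^2\log^2 k\} = \bigcup_{i=1}^k \{|w_i| > C\lambda\log k\}$, and a union bound together with $\Pro(|w_i| > C\lambda\log k) = e^{-C\log k} = k^{-C}$ gives $\Pro(\|\bm w\|_\infty^2 > C^2\lambda^2\log^2 k) \le k\cdot k^{-C} = e^{-(C-1)\log k}$.

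There is no genuine obstacle here; this is a routine concentration estimate for sub-exponential coordinates. The only point requiring a moment of care is the \emph{direction} of the bound in the $\ell_2$ step: one lower-bounds each coordinate by the threshold rather than upper-bounding it, so that the event $\{\sum_i w_i^2 > kC^2\lambda^2\}$ is covered by the union of the per-coordinate events. The hypothesis $C > 1$ is only there to make the right-hand sides genuinely small (in particular $e^{-(C-1)\log k} = k^{1-C} \to 0$); the same computation is valid for any $C > 0$ but is useful only in this regime, which is how the lemma will be applied when bounding the $\bm W_t$ terms.
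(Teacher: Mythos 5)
Your proposal is correct and is essentially identical to the paper's own proof: both results follow from the union bound over coordinates combined with the exact Laplace tail $\Pro(|w_i|>t)=e^{-t/\lambda}$, with the $\ell_2$ case reduced to the per-coordinate events exactly as you describe. The paper leaves the containment $\{\|\bm w\|_2^2 > kC^2\lambda^2\} \subseteq \bigcup_i\{w_i^2 > C^2\lambda^2\}$ implicit, so your version is just a slightly more explicit write-up of the same argument.
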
 	
		For each iterate $t$, the individual coordinates of $\tilde{\bm w}^t$, $\bm w^t_i$ are sampled i.i.d. from the Laplace distribution with scale $(2\gamma)^{-1} \cdot \frac{2B\sqrt{3s\log(T/\delta)}}{n\varepsilon/T}$, where the noise scale $B \lesssim R$ and $T \asymp \log n$ by our choice. If $n \geq K \cdot \left(Rs^*\log d \sqrt{\log(1/\delta)}\log n/\varepsilon\right)$ for a sufficiently large constant $K$, Lemma \ref{lm: laplace noise bound} implies that, with probability at least $1-c_3\exp(-c_4\log(d/(s^*\log n))$, $\max_{t} \bm W_t$ is bounded by $K\left(\frac{Rs^*\log d \sqrt{\log(1/\delta)}\log n}{n\varepsilon}\right)^2$ for some appropriate constant $K$. 
		
		\item Under assumptions of Theorem \ref{thm: glm upper bound}, it is a standard probabilistic result (see, for example, \cite{wainwright2019high} pp. 288) that $\Pro(\mathcal E^c_5) \leq 2e^{-2\log d}$.
	\end{itemize} 
	We have $\sum_{i=1}^5 \Pro(\mathcal E^c_i) \leq  c_3\exp(-c_4\log(d/s^*\log n)) + c_3\exp(-c_4n) + c_3\exp(-c_4\log n)$. The proof is complete.
\end{proof}

\subsection{Proof of Theorem \ref{thm: score attack general}}\label{sec: proof of thm: score attack general}
\begin{proof}
	For soundness, we note that $\bm x_i$ and $M(\bm X'_i)$ are independent, and therefore
	\begin{align*}
	\E \mathcal A_\bth(\bm x_i, M(\bm X'_i)) = \E \langle M(\bm X'_i) - \bth, S_\bth(\bm x_i)\rangle = \langle \E M(\bm X'_i) - \bth,  \E S_\bth(\bm x_i)\rangle  = \bm 0.
	\end{align*}
	The last equality is true by the property of the score that $\E S_\bth(\bm z) = \bm 0$ for any $\bm z \sim f_\bth$. As to the first absolute moment, we apply Jensen's inequality,
	\begin{align*}
	\E |\mathcal A_\bth(\bm x_i, M(\bm X'_i))| &\leq \sqrt{\E \langle M(\bm X'_i) - \bth, S_\bth(\bm x_i)\rangle^2}  \\
	&\leq \sqrt{\E (M(\bm X'_i) - \bth)^\top (\Var S_\bth(\bm x_i))(M(\bm X'_i) - \bth)} \leq \sqrt{\E\|M(\bm X) - \bth\|^2_2} \sqrt{\lambda_{\max}(\mathcal I(\bth))}.
	\end{align*}
	
	For completeness, we first simplify
	\begin{align*}
	\sum_{i \in [n]} \E \A_\bth (\bm x_i, M(\bm X))
	= \E \Big\langle M(\bm X) - \bth, \sum_{i \in [n]} S_\bth(\bm x_i)\Big\rangle = \E \Big\langle M(\bm X), \sum_{i \in [n]} S_\bth(\bm x_i)\Big\rangle.
	\end{align*}
	By the definition of score and that $\bm x_1, \cdots, \bm x_n$ are i.i.d., $\sum_{i \in [n]} S_\bth(\bm x_i) = S_\bth(\bm x_1,\cdots, \bm x_n) = S_\bth(\bm X)$. It follows that
	\begin{align*}
	\E \Big\langle M(\bm X), \sum_{i \in [n]} S_\bth(\bm x_i)\Big\rangle = \E \Big\langle M(\bm X), S_\bth(\bm X)\Big\rangle = \sum_{j \in [d]} \E \left[M(\bm X)_j \frac{\partial}{\partial \theta_j} \log f_\bth(\bm X) \right].
	\end{align*}
	
	For each term in the right-side summation, one may exchange differentiation and integration thanks to the regularity conditions on $f_\bth$, and therefore
	\begin{align*}
	\E \left[M(\bm X)_j \frac{\partial}{\partial \theta_j} \log f_\bth(\bm X) \right] & = \E \left[M(\bm X)_j (f_\bth(\bm X))^{-1}\frac{\partial}{\partial \theta_j} f_\bth(\bm X) \right]  \\
	&= \frac{\partial}{\partial \theta_j}\E \left[M(\bm X)_j (f_\bth(\bm X))^{-1} f_\bth(\bm X) \right] = \frac{\partial}{\partial \theta_j}\E M(\bm X)_j.
	\end{align*}
\end{proof}

\subsubsection{Proof of Lemma \ref{lm: score attack upper bound}}\label{sec: proof of lm: score attack upper bound}
\begin{proof}
	Let $A_i := \A_\bth (\bm x_i, M(\bm X))$, $A'_i := \A_\bth (\bm x_i, M(\bm X'_i))$, and let $Z^+ = \max(Z, 0)$ and $Z^- = -\min(Z, 0)$ denote the positive and negative parts of a random variables $Z$ respectively. We have
	\begin{align*}
	\E A_i = \E A_i^+ - \E A_i^- = \int_0^\infty \Pro(A_i^+ > t) \d t - \int_0^\infty \Pro(A_i^- > t) \d t.
	\end{align*}
	For the positive part, if $0 < T < \infty$ and $0 < \varepsilon < 1$, we have
	\begin{align*}
	\int_0^\infty \Pro(A_i^+ > t) \d t &= \int_0^T \Pro(A_i^+ > t) \d t + \int_T^\infty \Pro(A_i^+ > t) \d t \\
	&\leq \int_0^T \left(e^\varepsilon\Pro(A_i^+ > t) + \delta\right)\d t + \int_T^\infty \Pro(A_i^+ > t) \d t \\
	&\leq \int_0^\infty \Pro({A'_i}^+ > t) \d t + 2\varepsilon\int_0^\infty \Pro({A'_i}^+ > t) \d t + \delta T + \int_T^\infty \Pro(|A_i| > t) \d t.
	\end{align*}
	Similarly for the negative part,
	\begin{align*}
	\int_0^\infty \Pro(A_i^- > t) \d t &= \int_0^T \Pro(A_i^- > t) \d t + \int_T^\infty \Pro(A_i^- > t) \d t \\
	& \geq \int_0^T \left(e^{-\varepsilon} \Pro({A'_i}^- > t) - \delta\right)\d t + \int_T^\infty \Pro(A_i^- > t) \d t \\
	&\geq \int_0^T \Pro({A'_i}^- > t) \d t - 2\varepsilon\int_0^T \Pro({A'_i}^- > t) - \delta T + \int_T^\infty \Pro(A_i^- > t) \d t \\
	&\geq \int_0^\infty \Pro({A'_i}^- > t) \d t - 2\varepsilon\int_0^\infty \Pro({A'_i}^- > t) - \delta T.
	\end{align*}
	It then follows that
	\begin{align*}
	\E A_i &\leq \int_0^\infty \Pro({A'_i}^+ > t) \d t - \int_0^\infty \Pro({A'_i}^- > t) \d t + 2\varepsilon \int_0^\infty \Pro(|A'_i| > t) \d t + 2\delta T + \int_T^\infty \Pro(|A_i| > t) \d t \\
	&= \E A'_i + 2\varepsilon\E|A_i| + 2\delta T + \int_T^\infty \Pro(|A_i| > t) \d t.
	\end{align*}
	The proof is now complete by soundness \eqref{eq: soundness general}.
\end{proof}

\subsubsection{Proof of Lemma \ref{lm: score attack stein's lemma}}\label{sec: proof of lm: score attack stein's lemma}
\begin{proof}
	For each $j \in [d]$, by Lemma \ref{lm: stein's lemma}, we have
	\begin{align*}
	\E_{\pi_j} \left(\frac{\partial}{\partial \theta_j} g_j(\bth)\right) = \E_{\pi_j} \left(\frac{\partial}{\partial \theta_j} \E[g_j(\bth)|\theta_j]\right) = \E_{\pi_j}\left[\frac{-\E[g_j(\bth)|\theta_j]\pi'_j(\theta_j)}{\pi_j(\theta_j)}\right]
	\end{align*}
	Because $|g_j(\bth) - \theta_j| \leq \|g(\bth) - \bth\|_2 \leq \E_{\bm X|\bth}\|M(\bm X) - \bth\|_2$ for every $\bth \in \Theta$, we have
	\begin{align*}
	\E_{\pi_j}\left[\frac{-\E[g(\bth)|\theta_j]\pi'_j(\theta_j)}{\pi_j(\theta_j)}\right] &\geq \E_{\pi_j}\left[\frac{-\theta_j\pi'_j(\theta_j)}{\pi_j(\theta_j)}\right] - \E_{\pi_j}\left[\E_{\bm X|\bth}\|M(\bm X) - \bth\|_2 \cdot \left|\frac{\pi'_j(\theta_j)}{\pi_j(\theta_j)}\right|\right] \\
	&\geq \E_{\pi_j}\left[\frac{-\theta_j\pi'_j(\theta_j)}{\pi_j(\theta_j)}\right] - \sqrt{\E_{\pi_j}\E_{\bm X|\bth}\|M(\bm X) - \bth\|^2_2}\sqrt{\E_{\pi_j}\left[\left(\frac{\pi'_j(\theta_j)}{\pi_j(\theta_j)}\right)^2\right].}
	\end{align*}
	So we have obtained
	\begin{align*}
	\E_{\pi_j} \left(\frac{\partial}{\partial \theta_j} g_j(\bth)\right) \geq \E_{\pi_j}\left[\frac{-\theta_j\pi'_j(\theta_j)}{\pi_j(\theta_j)}\right] - \sqrt{\E_{\pi_j}\E_{\bm X|\bth}\|M(\bm X) - \bth\|^2_2}\sqrt{\E_{\pi_j}\left[\left(\frac{\pi'_j(\theta_j)}{\pi_j(\theta_j)}\right)^2\right].}
	\end{align*}
	Now we take expectation over $\bm\pi(\bth)/\pi_j(\theta_j)$ and sum over $j \in [d]$ to complete the proof.
\end{proof}

{\subsection*{Acknowledgments}
The research of T. Cai was supported in part by the National Science Foundation grants DMS-1712735 and DMS-2015259 and the National Institutes of Health grants R01-GM129781 and R01-GM123056. The research of L. Zhang was supported in part by the National Science Foundation grant DMS-2015378.}

% bibliography
\bibliographystyle{plain}
\bibliography{reference}	\label{lastpage}

\newpage
\appendix
%!TEX root = Privacy-GLM-JRSSB.tex

\section{Omitted Proofs in Section \ref{sec: glm upper bounds}}\label{sec: ub proofs}

\subsection{Proof of Lemma \ref{lm: non-sparse glm privacy}}\label{sec: proof of lm: non-sparse glm privacy}
\begin{proof}[Proof of Lemma \ref{lm: non-sparse glm privacy}]
	Consider two data sets $\bm Z$ and $\bm Z'$ that differ only by one datum, $(y, \bm x) \in \bm Z$ versus $(y', \bm x') \in \bm Z'$. For any $t$, we have
	\begin{align*}
		\|\bbeta^{t+1}(\bm Z) - \bbeta^{t+1}(\bm Z')\|_2 &\leq \frac{\eta^0}{n}\left(|\psi'(\bm x^\top \bbeta^t)-\Pi_R(y)|\|\bm x\|_2 + |\psi'((\bm x')^\top \bbeta^t)-\Pi_R(y')|\|\bm x'\|_2\right) \\
		&\leq \frac{\eta^0}{n}4(R+c_1)\sigma_{\bm x}\sqrt{d},
	\end{align*}
	where the last step follows from (D1) and (G1). By the Gaussian mechanism, Fact \ref{fc: laplace and gaussian mechanisms}, $\bbeta^{t+1}(\bm Z)$ is $(\varepsilon/T, \delta/T)$-differentially private, implying that Algorithm \ref{algo: private glm} is $(\varepsilon, \delta)$-differentially private.
\end{proof}

\subsection{Proof of Theorem \ref{thm: non-sparse glm upper bound}}\label{sec: proof of thm: non-sparse glm upper bound}
\begin{proof}[Proof of Theorem \ref{thm: non-sparse glm upper bound}]
	We shall first define several favorable events under which the desired convergence does occur, and then show that the probability that any of the favorable events fails to happen is negligible. The events are,
	\begin{align*}
	&\mathcal E_1 = \{\eqref{eq: glm rsc} \text{ and } \eqref{eq: glm rsm} \text{ hold}\}, \mathcal E_2  = \{\Pi_R(y_i) = y_i, \forall i \in [n]\}, \mathcal E_3  = \{\|\bbeta^t - \hat\bbeta\|_2 \leq 3, 0 \leq t \leq T\}.
	\end{align*}
	
	Let us first analyze the behavior of Algorithm \ref{algo: private glm} under these events. The scaling of $n \geq K \cdot \left(Rd\sqrt{\log(1/\delta)}\log n \log\log n/\varepsilon\right)$ for a sufficiently large $K$ implies that $n \geq K'd\log d$ for a sufficiently large $K'$. Since $\|\bbeta_1 - \bbeta\|_1 \leq \sqrt{d}\|\bbeta_1-\bbeta_2\|_2$ for all $\bbeta_1, \bbeta_2 \in \R^d$, the RSM condition \eqref{eq: glm rsm} implies that for every $t$,
	\begin{align}
	& \langle \nabla \L_n(\bbeta^{t}) - \nabla \L_n(\hat\bbeta), \bbeta^{t} - \hat\bbeta \rangle \leq  \frac{4\gamma}{3}\|\bbeta^{t} - \hat\bbeta\|_2^2. \label{eq: non-sparse glm rsm modified}
	\end{align}
	Similarly, under event $\mathcal E_3$, the RSC condition \eqref{eq: glm rsc} implies that 
	\begin{align}
	& \langle \nabla \L_n(\bbeta^{t}) - \nabla \L_n(\hat\bbeta), \bbeta^{t} - \hat\bbeta \rangle \geq  \frac{2\alpha}{3}\|\bbeta^{t} - \hat\bbeta\|_2^2. \label{eq: non-sparse glm rsc modified}
	\end{align} 
	
	To analyze the convergence of Algorithm \ref{algo: private glm}, define $\tilde \bbeta^{t+1} = \bbeta^t - \eta^0\nabla\L_n(\bbeta^t)$, so that $\bbeta^{t+1} = \tilde \bbeta^{t+1} + \bm w_t$. Let $\hat\bbeta = \argmin_\bbeta \L_n(\bbeta)$. It follows that
	\begin{align}\label{eq: non-sparse glm master expansion 1}
	\|\bbeta^{t+1} - \hat\bbeta\|_2^2 \leq \left(1 + \frac{\alpha}{4\gamma}\right)\|\tilde \bbeta^{t+1} - \hat\bbeta\|_2^2 + \left(1 + \frac{4\gamma}{\alpha}\right)\|\bm w_t\|_2^2.
	\end{align}
	Now for $\|\tilde \bbeta^{t+1} - \hat\bbeta\|_2^2$,
	\begin{align}\label{eq: non-sparse glm contraction}
	\|\tilde \bbeta^{t+1} - \hat\bbeta\|_2 = \|\bbeta^t - \hat\bbeta\|_2^2 - 2\eta^0 \langle \nabla\L_n(\bbeta^t), \bbeta^t  - \hat\bbeta\rangle + \left(\eta^0\right)^2 \|\nabla \L_n(\bbeta^t)\|_2^2.
	\end{align}
	We would like to bound the last two terms via the strong convexity \eqref{eq: non-sparse glm rsc modified} and smoothness \eqref{eq: non-sparse glm rsm modified}, as follows
	\begin{align*}
	&\L_n(\tilde\bbeta^{t+1}) - \L_n(\hat\bbeta) = \L_n(\tilde\bbeta^{t+1}) - \L_n(\bbeta^t) + \L_n(\bbeta^{t}) - \L_n(\hat\bbeta) \\
	&\leq \langle \nabla\L_n(\bbeta^t), \tilde\bbeta^{t+1}  - \bbeta^t\rangle + \frac{2\gamma}{3}\|\tilde\bbeta^{t+1}  - \bbeta^t\|_2^2 +  \langle \nabla\L_n(\bbeta^t), \bbeta^{t}  - \hat\bbeta\rangle - \frac{\alpha}{3}\|\bbeta^{t}  - \hat\bbeta\|_2^2\\
	&= \langle \nabla\L_n(\bbeta^t), \tilde\bbeta^{t+1}  - \hat\bbeta\rangle + \frac{3}{8\gamma}\|\nabla\L_n(\bbeta^t)\|_2^2 - \frac{\alpha}{3}\|\bbeta^{t}  - \hat\bbeta\|_2^2\\
	&= \langle \nabla\L_n(\bbeta^t), \tilde\bbeta^{t}  - \hat\bbeta\rangle - \frac{3}{8\gamma}\|\nabla\L_n(\bbeta^t)\|_2^2 - \frac{\alpha}{3}\|\bbeta^{t}  - \hat\bbeta\|_2^2\\
	&= \langle \nabla\L_n(\bbeta^t), \tilde\bbeta^{t}  - \hat\bbeta\rangle - \frac{\eta^0}{2}\|\nabla\L_n(\bbeta^t)\|_2^2 - \frac{\alpha}{3}\|\bbeta^{t}  - \hat\bbeta\|_2^2.
	\end{align*}
	Since $\L_n(\tilde\bbeta^{t+1}) - \L_n(\hat\bbeta) \geq 0$, the calculations above imply that
	\begin{align*}
	- 2\eta^0 \langle \nabla\L_n(\bbeta^t), \bbeta^t  - \hat\bbeta\rangle + \left(\eta^0\right)^2 \|\nabla \L_n(\bbeta^t)\|_2^2 \leq -\frac{\alpha}{2\gamma}\|\bbeta^t - \hat\bbeta\|_2^2.
	\end{align*}
	Substituting back into \eqref{eq: non-sparse glm contraction} and \eqref{eq: non-sparse glm master expansion 1} yields
	\begin{align*}
	\|\bbeta^{t+1} - \hat\bbeta\|_2^2 \leq \left(1 - \frac{\alpha}{4\gamma}\right)\|\bbeta^t - \bbeta^0\|_2^2 + \left(1 + \frac{4\gamma}{\alpha}\right)\|\bm w_t\|_2^2.
	\end{align*}
	It follows by induction over $t$, the choice of $T = \frac{4\gamma}{\alpha} \log(9n)$ and $\|\bbeta^0 - \hat\bbeta\|_2 \leq 3$ that
	\begin{align}\label{eq: non-sparse glm master expansion 2}
	\|\bbeta^T - \hat\bbeta\|_2^2 &\leq \frac{1}{n} + \left(1+\frac{4\gamma}{\alpha}\right)\sum_{k=0}^{T-1} \left(1 - \frac{\alpha}{4\gamma}\right)^{T-k-1}\|\bm w_k\|_2^2.
	\end{align}
	The noise term can be controlled by the following lemma:
	\begin{Lemma}\label{lm: gaussian noise bound}
		For $X_1, X_2, \cdots, X_k \stackrel{\text{i.i.d.}}{\sim} \chi^2_d$, $\lambda > 0$ and $0 < \rho < 1$, 
		\begin{align*}
		\Pro\left(\sum_{j=1}^k \lambda \rho^j X_j > \frac{\rho\lambda d}{1-\rho} + t\right) \leq \exp\left(-\min\left(\frac{(1-\rho^2)t^2}{8\rho^2\lambda^2d}, \frac{t}{8\rho\lambda}\right)\right).
		\end{align*}
	\end{Lemma}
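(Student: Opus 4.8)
The plan is a textbook Chernoff-bound argument tailored to the weighted chi-squared sum $W:=\sum_{j=1}^{k}\lambda\rho^{j}X_{j}$. First I would record the moment generating function of a $\chi^{2}_{d}$ variable, $\E e^{sX}=(1-2s)^{-d/2}$ for $s<1/2$, so that by independence of the $X_{j}$, for any $s$ with $2s\lambda\rho^{j}<1$ for all $j\in[k]$ — which, since $\rho^{j}\le\rho$ for $j\ge1$, is guaranteed by $0<s<1/(2\lambda\rho)$ — we have $\E e^{sW}=\prod_{j=1}^{k}(1-2s\lambda\rho^{j})^{-d/2}$. Using the elementary inequality $-\log(1-u)\le u+u^{2}$ valid for $0\le u\le 1/2$, each factor is bounded by $\exp\!\big(ds\lambda\rho^{j}+2ds^{2}\lambda^{2}\rho^{2j}\big)$, and summing the two geometric series $\sum_{j\ge1}\rho^{j}\le\rho/(1-\rho)$ and $\sum_{j\ge1}\rho^{2j}\le\rho^{2}/(1-\rho^{2})$ gives
\[
\E e^{sW}\ \le\ \exp\!\left(\frac{ds\lambda\rho}{1-\rho}+\frac{2ds^{2}\lambda^{2}\rho^{2}}{1-\rho^{2}}\right).
\]

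Next I would apply Markov's inequality to $e^{sW}$. The point is that the linear-in-$s$ term $ds\lambda\rho/(1-\rho)$ above exactly cancels the deterministic shift $\rho\lambda d/(1-\rho)$ inside the probability, so that for every $0<s<1/(2\lambda\rho)$,
\[
\Pro\!\left(W>\frac{\rho\lambda d}{1-\rho}+t\right)\ \le\ \exp\!\left(-st+\frac{2ds^{2}\lambda^{2}\rho^{2}}{1-\rho^{2}}\right).
\]
It then remains to optimize the exponent over the admissible range of $s$, which I would split into two cases according to whether the unconstrained minimizer $s^{\star}=t(1-\rho^{2})/(4d\lambda^{2}\rho^{2})$ satisfies $s^{\star}<1/(2\lambda\rho)$, i.e. whether $t<2d\lambda\rho/(1-\rho^{2})$. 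In that "sub-Gaussian" regime $s^{\star}$ is admissible and substituting it yields exponent $-t^{2}(1-\rho^{2})/(8d\lambda^{2}\rho^{2})$. In the complementary "sub-exponential" regime $t\ge 2d\lambda\rho/(1-\rho^{2})$, I would instead take the fixed admissible value $s=1/(4\lambda\rho)$, which gives exponent $-t/(4\lambda\rho)+d/(8(1-\rho^{2}))$; the regime bound forces $d/(1-\rho^{2})\le t/(2\lambda\rho)$, so this is at most $-t/(4\lambda\rho)+t/(16\lambda\rho)\le -t/(8\lambda\rho)$. Combining the two cases produces exactly the claimed $\exp\!\big(-\min\big((1-\rho^{2})t^{2}/(8\rho^{2}\lambda^{2}d),\,t/(8\rho\lambda)\big)\big)$.

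The argument is entirely routine; the only mildly delicate points are the bookkeeping of the constraint $s<1/(2\lambda\rho)$ during optimization and the choice of the slightly sub-optimal but admissible $s=1/(4\lambda\rho)$ in the heavy-tail regime, picked precisely so that the residual $d/(1-\rho^{2})$ term can be absorbed into $t/(8\lambda\rho)$. I would also note that the finite truncation of the geometric sums at $j=k$ is harmless since only upper bounds are used, and that $d\ge1$ makes all the MGF manipulations valid.
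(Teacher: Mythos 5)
Your argument is correct in spirit and is essentially the paper's proof with the black box opened: the paper simply centers the sum, notes that a centered $\chi^2_d$ variable is sub-exponential with parameters $(2\sqrt{d},4)$, propagates the parameters to the weighted sum $\left(2\lambda\sqrt{d}\sqrt{\sum_j\rho^{2j}},\,4\lambda\rho\right)$, and quotes the standard sub-exponential tail bound $\exp\left(-\tfrac12\min(t^2/\nu^2,\,t/b)\right)$ --- which is exactly the Chernoff computation you carry out by hand. Your constants match.

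One bookkeeping slip should be fixed. The bound $-\log(1-u)\le u+u^2$ is only valid for $u\le 1/2$, so your MGF estimate $\E e^{sW}\le\exp\left(\frac{ds\lambda\rho}{1-\rho}+\frac{2ds^2\lambda^2\rho^2}{1-\rho^2}\right)$ holds only for $s\le 1/(4\lambda\rho)$, not on the full range $s<1/(2\lambda\rho)$ where the MGF merely exists. Consequently, in the window $\frac{d\lambda\rho}{1-\rho^2}<t<\frac{2d\lambda\rho}{1-\rho^2}$ your ``sub-Gaussian'' case substitutes $s^\star\in\left(\frac{1}{4\lambda\rho},\frac{1}{2\lambda\rho}\right)$ into a bound that has not been established there (and indeed $-\log(1-u)\le u+u^2$ fails for $u$ near $1$). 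The fix is to split at $t=\frac{d\lambda\rho}{1-\rho^2}$, which is precisely where $s^\star=1/(4\lambda\rho)$ and also where the two terms inside the min cross: for $t\le\frac{d\lambda\rho}{1-\rho^2}$ the minimizer is admissible and gives the quadratic exponent, while for $t>\frac{d\lambda\rho}{1-\rho^2}$ your choice $s=1/(4\lambda\rho)$ gives exponent $-\frac{t}{4\lambda\rho}+\frac{d}{8(1-\rho^2)}\le-\frac{t}{8\lambda\rho}$. With that adjustment the proof is complete and the stated constants are recovered.
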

	To apply the tail bound, we let $\lambda = (\eta^0)^2 2B^2 \frac{d\log(2T/\delta)}{n^2(\varepsilon/T)^2}$. It follows that, with $t = K\lambda d$ for a sufficiently large constant $K$, the noise term in \eqref{eq: non-sparse glm master expansion 2} is bounded by $K\lambda d \asymp \left(\frac{Rd \sqrt{\log(1/\delta)}\log n}{n\varepsilon}\right)^2$ with probability at least $1 - c_3\exp(-c_4 d)$. 
	
	Therefore, we have shown so far that, under events $\mathcal E_1, \mathcal E_2, \mathcal E_3$, it holds with probability at least $1 - c_3\exp(-c_4 d)$ that
	\begin{align}\label{eq: non-sparse glm master expansion 3}
	\|\bbeta^T - \hat\bbeta\|_2 &\lesssim \sqrt{\frac{1}{n}} + \frac{Rd \sqrt{\log(1/\delta)}\log n}{n\varepsilon}.
	\end{align}
	Combining with the statistical rate of convergence of $\|\hat\bbeta - \bbeta^*\|$ yields the desired rate of
	\begin{align*}
	\|\bbeta^T - \bbeta^*\|_2 &\lesssim \sqrt{c(\sigma)}\left(\sqrt{\frac{d}{n}} + \frac{d \sqrt{\log(1/\delta)}\log^{3/2} n}{n\varepsilon}\right).
	\end{align*}
	
	It remains to show that the events $\mathcal E_1, \mathcal E_2, \mathcal E_3$ occur with overwhelming probability.
	\begin{itemize}
		\item By Proposition \ref{lm: glm rsc and rsm}, $\Pro(\mathcal E_1^c) \leq c_3\exp(-c_4n)$ under the assumptions of Theorem \ref{thm: non-sparse glm upper bound}.
		\item We have $\Pro(\mathcal E_2^c) \leq c_3\exp(-c_4\log n)$ by the choice of $R$, and assumptions (G1), (G2) which imply the following bound of moment generating function of $y_i$: we have
		\begin{align*}
			\log \E \exp\left(\lambda 
			\cdot \frac{y_i - \psi'(\bm x_i^\top\bbeta)}{c(\sigma)}\Big| \bm x_i\right) &= \frac{1}{c(\sigma)}\left(\psi(\bm x_i^\top\bbeta + \lambda) - \psi(\bm x_i^\top\bbeta) - \lambda\psi'(\bm x_i^\top\bbeta)\right) \\
			&\leq \frac{1}{c(\sigma)} \cdot \frac{\lambda^2 \psi^{''}(\bm x_i^\top\bbeta + \tilde \lambda)}{2}
		\end{align*}
		for some $\tilde \lambda \in (0, \lambda)$. It follows that $\E \exp\left(\lambda \cdot \frac{y_i - \psi'(\bm x_i^\top\bbeta)}{c(\sigma)}\Big| \bm x_i\right) \leq \exp\left(\frac{c_2\lambda^2}{2c(\sigma)}\right)$ because $\|\psi^{''}\|_\infty <c_2$.
		\item For $\mathcal E_3$, we have the following lemma to be proved in \ref{sec: proof of lm: non-sparse glm contraction}
		\begin{Lemma}\label{lm: non-sparse glm contraction}
			Under the assumptions of Theorem \ref{thm: non-sparse glm upper bound}, if $\|\bbeta^0 - \hat\bbeta\|_2 \leq 3$, then $\|\bbeta^t - \hat\bbeta\| _2 \leq 3$ for all $ 0 \leq t \leq T$ with probability at least $1 - c_3\exp(-c_4d)$.
		\end{Lemma}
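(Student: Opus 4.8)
The plan is to argue by induction on the iteration index $t$, working throughout on the data events $\mathcal E_1$ (the RSC/RSM bounds \eqref{eq: non-sparse glm rsc modified}--\eqref{eq: non-sparse glm rsm modified}) and $\mathcal E_2$ (no truncation, $\Pi_R(y_i)=y_i$ for all $i$), whose complements are controlled separately in the proof of Theorem \ref{thm: non-sparse glm upper bound}; conditionally on $\mathcal E_1\cap\mathcal E_2$ the only remaining randomness is in the Gaussian perturbations $\bm w_0,\dots,\bm w_{T-1}$ injected by Algorithm \ref{algo: private glm}. The base case $\|\bbeta^0-\hat\bbeta\|_2\le 3$ is the hypothesis. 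For the inductive step I would assume $\|\bbeta^t-\hat\bbeta\|_2\le 3$ and invoke the one-step contraction derived inside the proof of Theorem \ref{thm: non-sparse glm upper bound},
\begin{align*}
\|\bbeta^{t+1}-\hat\bbeta\|_2^2 \le \Bigl(1-\tfrac{\alpha}{4\gamma}\Bigr)\|\bbeta^t-\hat\bbeta\|_2^2 + \Bigl(1+\tfrac{4\gamma}{\alpha}\Bigr)\|\bm w_t\|_2^2,
\end{align*}
which is valid precisely on $\{\|\bbeta^t-\hat\bbeta\|_2\le 3\}$, since that is the regime in which the RSC inequality \eqref{eq: non-sparse glm rsc modified} applies. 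Plugging in the inductive hypothesis gives $\|\bbeta^{t+1}-\hat\bbeta\|_2^2 \le 9-\tfrac{9\alpha}{4\gamma}+\bigl(1+\tfrac{4\gamma}{\alpha}\bigr)\|\bm w_t\|_2^2$, so it remains to show that the noise term is at most the constant slack $\tfrac{9\alpha}{4\gamma}$.

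For that, I would use that each coordinate of $\bm w_t$ is centered Gaussian with variance $\sigma_w^2 = (\eta^0)^2 2B^2 d\log(2T/\delta)/(n^2(\varepsilon/T)^2)$, so $\|\bm w_t\|_2^2$ equals $\sigma_w^2$ times a $\chi^2_d$ variable; a standard $\chi^2$ deviation bound (Laurent--Massart, or Lemma \ref{lm: gaussian noise bound} with a single summand) yields $\|\bm w_t\|_2^2 \le 5\sigma_w^2 d$ with probability at least $1-e^{-c_4 d}$. Since $\eta^0\asymp 1$, $B\asymp R+1$ and $T\asymp\log n$, one has $\sigma_w^2 d \asymp \bigl(Rd\sqrt{\log(1/\delta)}\log n/(n\varepsilon)\bigr)^2$ up to lower-order factors, exactly the quantity appearing in the proof of Theorem \ref{thm: non-sparse glm upper bound}. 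The assumed scaling $n\ge K\,Rd\sqrt{\log(1/\delta)}\log n\,\log\log n/\varepsilon$ then forces this to be $O\!\bigl(1/(K\log\log n)^2\bigr)$, hence below $9\alpha^2/(4\gamma(\alpha+4\gamma))$ once $n$ (equivalently $K$) is large enough; this closes the inductive step on the event $\{\|\bm w_t\|_2^2\le 5\sigma_w^2 d\}$.

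Finally I would take a union bound over the $T\asymp\log n$ iterations: the event that $\|\bm w_t\|_2^2\le 5\sigma_w^2 d$ for every $0\le t\le T-1$ fails with probability at most $T e^{-c_4 d} \le c_3\exp(-c_4' d)$, the $\log n$ factor being absorbed into the exponent for $d$ large. On this event the induction runs for all $t\le T$, giving $\|\bbeta^t-\hat\bbeta\|_2\le 3$ uniformly, as claimed. The only genuinely delicate point is making the radius bound survive all $T$ iterations simultaneously: this is exactly where the extra $\log\log n$ factor in the sample-size condition is spent, ensuring the per-step perturbation is strictly dominated by the slack produced by the contraction factor $1-\alpha/4\gamma$; everything else is routine bookkeeping.
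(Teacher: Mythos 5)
Your proposal is correct, and it closes the inductive step by a different mechanism than the paper. You run the induction directly on the squared distance: under the hypothesis $\|\bbeta^t-\hat\bbeta\|_2\le 3$ the one-step contraction $\|\bbeta^{t+1}-\hat\bbeta\|_2^2 \le (1-\tfrac{\alpha}{4\gamma})\|\bbeta^t-\hat\bbeta\|_2^2 + (1+\tfrac{4\gamma}{\alpha})\|\bm w_t\|_2^2$ is indeed available, since its derivation only needs the quadratic-regime RSC between $\bbeta^t$ and $\hat\bbeta$ (the inductive hypothesis) together with the RSM, so there is no circularity in borrowing it from the proof of Theorem \ref{thm: non-sparse glm upper bound}; you then just need the noise term to sit below the slack $\tfrac{9\alpha}{4\gamma}$, and your constant $9\alpha^2/(4\gamma(\alpha+4\gamma))$ is the right threshold. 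The paper instead argues by contradiction at the level of function values: it upper-bounds $\L_n(\bbeta^{t+1})-\L_n(\hat\bbeta)$ via RSM and the inductive hypothesis, and, assuming $\|\bbeta^{t+1}-\hat\bbeta\|_2>3$, lower-bounds it by $\alpha\|\bbeta^{t+1}-\hat\bbeta\|_2$ using the \emph{linear} branch of the RSC in Fact \ref{lm: glm rsc and rsm} (the case $\|\bbeta_1-\bbeta_2\|_2>3$), which forces $\|\bbeta^{t+1}-\hat\bbeta\|_2\le 3$ and a contradiction. Your route is the more economical one here: it reuses the contraction already needed for the accuracy analysis and never invokes the linear-regime RSC; the paper's contradiction argument is the more robust pattern when one cannot control the update without already knowing where $\bbeta^{t+1}$ lands (and is the pattern reused for the sparse case in Lemma \ref{lm: glm contraction}). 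Both arguments end at the same place — a per-iteration $\chi^2_d$ tail bound on $\|\bm w_t\|_2^2$ and a union bound over $T\asymp\log n$ iterations — and both absorb the resulting $\log n$ factor into $c_3\exp(-c_4 d)$ with the same (harmless) looseness, so your proof meets the paper's own standard of rigor.
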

	\end{itemize}
	We have shown that $\sum_{i=1}^3 \Pro(\mathcal E^c_i) \leq c_3\exp(-c_4d) + c_3\exp(-c_4n) + c_3\exp(-c_4\log n)$. The proof is complete.
\end{proof}

\subsubsection{Proof of Lemma \ref{lm: gaussian noise bound}}
\begin{proof}[Proof of Lemma \ref{lm: gaussian noise bound}]
	Since $\E \sum_{j=1}^k \lambda \rho^j X_j \leq \lambda d \sum_{j=1}^k \rho^j < \frac{\rho\lambda d}{1-\rho}$, we have 
	\begin{align*}
	\Pro\left(\sum_{j=1}^k \lambda \rho^j X_j > \frac{\rho\lambda d}{1-\rho} + t\right) \leq \Pro\left(\sum_{j=1}^k \lambda \rho^j (X_j - \E X_j) > t\right).
	\end{align*}
	The (centered) $\chi^2_d$ random variable is sub-exponential with parameters $(2\sqrt{d}, 4)$, the weighted sum is also sub-exponential, with parameters at most $\left(2\lambda\sqrt{d}\sqrt{\sum_{j=1}^k \rho^{2j}}, 4\lambda\rho\right)$. The desired tail bound now follows directly from standard sub-exponential tail bounds.
\end{proof}

\subsubsection{Proof of Lemma \ref{lm: non-sparse glm contraction}}\label{sec: proof of lm: non-sparse glm contraction}
\begin{proof}[Proof of Lemma \ref{lm: non-sparse glm contraction}]
	We prove the lemma by induction. Suppose $\|\bbeta^t - \hat\bbeta\|_2 \leq 3$, by \eqref{eq: non-sparse glm rsm modified} we have
	\begin{align*}
	&\L_n(\bbeta^{t+1}) - \L_n(\hat\bbeta) = \L_n(\bbeta^{t+1}) - \L_n(\bbeta^t) + \L_n(\bbeta^t) - \L_n(\hat\bbeta) \\
	&\leq \langle \nabla\L_n(\bbeta^t), \bbeta^{t+1}  - \bbeta^t\rangle + \frac{2\gamma}{3}\|\tilde\bbeta^{t+1}  - \bbeta^t\|_2^2 +  \langle \nabla\L_n(\bbeta^t), \bbeta^{t}  - \hat\bbeta\rangle \\
	&= \frac{4\gamma}{3}\langle \bbeta^t - \bbeta^{t+1}, \bbeta^{t+1}  - \hat\bbeta \rangle + \frac{2\gamma}{3}\|\tilde\bbeta^{t+1}  - \bbeta^t\|_2^2 +   \frac{4\gamma}{3}\langle \bm w_t, \bbeta^{t+1} - \hat\bbeta\rangle \\
	&\leq \frac{2\gamma}{3}\left(\|\bbeta^t - \hat\bbeta\|_2^2 - \|\bbeta^{t+1} - \hat\bbeta\|_2^2\right) + \frac{16\gamma^2}{\alpha}\|\bm w_t\|_2^2 + \frac{\alpha}{9}\|\bbeta^{t+1} - \hat\bbeta\|_2^2.
	\end{align*}
	Assume by contradiction that $\|\bbeta^{t+1} - \hat\bbeta\|_2 > 3$. By \eqref{eq: glm rsc} and \eqref{eq: non-sparse glm rsc modified}, we have $\L_n(\bbeta^{t+1}) - \L_n(\hat\bbeta) \geq \alpha\|\bbeta^{t+1} - \hat\bbeta\|_2$ and therefore 
	\begin{align*}
	\left(2\gamma + \frac{2\alpha}{3}\right)\|\bbeta^{t+1} - \hat\bbeta\|_2 \leq 6\gamma + \frac{16\gamma^2}{\alpha}\|\bm w_t\|_2^2.
	\end{align*}
	Recall that the coordinates of $\bm w_t$ are i.i.d. Gaussian with variance of the order $\frac{d\log(1/\delta)\log^2 n}{n^2\varepsilon^2}$. By the scaling of $n \geq K \cdot \left(Rd\sqrt{\log(1/\delta)}\log n \log\log n/\varepsilon\right)$ and the choice of $T \asymp \log n$, it holds with probability at least $1-c_3\exp(-c_4d)$ that $\frac{16\gamma^2}{\alpha}\|\bm w_t\|_2^2 < 2\alpha$ for every $0 \leq t \leq T$. We then have $\left(2\gamma + \frac{2\alpha}{3}\right)\|\bbeta^{t+1} - \hat\bbeta\|_2 \leq 6\gamma + 2\alpha$, which is a contradiction with the original assumption. 
\end{proof}

\subsection{Proof of Lemma \ref{lm: noisy hard thresholding overall accuracy}} \label{sec: proof of noisy hard thresholding properties}

\begin{proof}[Proof of Lemma \ref{lm: noisy hard thresholding overall accuracy}]
	Let $T$ be the index set of the top $s$ coordinates of $\bm v$ in terms of absolute values. We have
	\begin{align*}
	\|\tilde P_s(\bm v) - \bm v\|_2^2 &= \sum_{j \in S^c} v_j^2 = \sum_{j \in S^c \cap T^c} v_j^2 + \sum_{j \in S^c \cap T} v_j^2\\
	& \leq \sum_{j \in S^c \cap T^c} v_j^2 + (1+1/c)\sum_{j \in S \cap T^c} v_j^2 + 4(1 + c)\sum_{i \in [s]} \|\bm w_i\|^2_\infty.
	\end{align*}
	The last step is true by observing that $|S \cap T^c| = |S^c \cap T|$ and applying the following lemma.
	
	\begin{Lemma}\label{lm: noisy hard thresholding subset accuracy}
		Let $S$ and $\{\bm w\}_{i \in [s]}$ be defined as in Algorithm $\ref{algo: noisy hard thresholding}$. For every $R_1 \subseteq S$ and $R_2 \in S^c$ such that $|R_1| = |R_2|$ and every $c > 0$, we have
		\begin{align*}
		\|\bm v_{R_2}\|_2^2 \leq (1 + c)\|\bm v_{R_1}\|_2^2 + 4(1 + 1/c)\sum_{i \in [s]} \|\bm w_i\|^2_\infty.  
		\end{align*}
	\end{Lemma}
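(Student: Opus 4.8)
The plan is to exploit the greedy rule of NoisyHT: each coordinate appended to $S$ was, at the round of its selection, the maximizer of the \emph{perturbed} absolute value $|v_j| + w_{ij}$ over all coordinates not yet chosen, and in particular over every coordinate of $R_2$, since $R_2 \subseteq S^c$ stays unchosen throughout the run. Let $j_i$ be the coordinate appended in round $i$, so $S = \{j_1, \dots, j_s\}$, and let $S_{i-1} = \{j_1, \dots, j_{i-1}\}$ be the partial support at the start of round $i$. Set $m := |R_1| = |R_2|$ and write $R_1 = \{j_i : i \in J\}$ for the set $J \subseteq [s]$ of rounds at which the elements of $R_1$ were selected. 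Fix $i \in J$: since $S_{i-1} \subseteq S$ we have $R_2 \cap S_{i-1} = \emptyset$, so every $\ell \in R_2$ is still a candidate in round $i$, and the selection rule gives $|v_{j_i}| + w_{i,j_i} \geq |v_\ell| + w_{i,\ell}$. Hence
\[
|v_\ell| \;\leq\; |v_{j_i}| + w_{i,j_i} - w_{i,\ell} \;\leq\; |v_{j_i}| + 2\|\bm w_i\|_\infty \qquad \text{for every } \ell \in R_2 \text{ and every } i \in J .
\]

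Next I would pick an arbitrary bijection $\phi : R_2 \to R_1$ (which exists since $|R_2| = |R_1|$), and for $\ell \in R_2$ let $i(\ell) \in J$ be the round in which $\phi(\ell)$ was selected. Combining the displayed bound with the elementary inequality $(a+b)^2 \leq (1+c)a^2 + (1+c^{-1})b^2$, valid for $a,b \geq 0$, $c>0$ because its difference equals $(\sqrt{c}\,a - b/\sqrt{c})^2$, gives
\[
v_\ell^2 \;\leq\; \bigl(|v_{\phi(\ell)}| + 2\|\bm w_{i(\ell)}\|_\infty\bigr)^2 \;\leq\; (1+c)\,v_{\phi(\ell)}^2 + 4(1+1/c)\,\|\bm w_{i(\ell)}\|_\infty^2 .
\]
Summing over $\ell \in R_2$, and using that $\phi$ is a bijection onto $R_1$ while $\ell \mapsto i(\ell)$ is injective into $[s]$, I obtain
\[
\|\bm v_{R_2}\|_2^2 = \sum_{\ell \in R_2} v_\ell^2 \;\leq\; (1+c)\sum_{j \in R_1} v_j^2 + 4(1+1/c)\sum_{i \in [s]} \|\bm w_i\|_\infty^2 = (1+c)\,\|\bm v_{R_1}\|_2^2 + 4(1+1/c)\sum_{i \in [s]} \|\bm w_i\|_\infty^2 ,
\]
which is the claimed inequality.

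I do not anticipate a genuine obstacle; the two points that need care are (i) the candidacy claim, that each $\ell \in R_2$ is available in \emph{every} round indexed by $J$, which is immediate from $R_2 \subseteq S^c$, and (ii) that the final summation does not double-count a noise vector, which holds because distinct elements of $R_1$ are selected in distinct rounds of the algorithm. Notably, the bijection $\phi$ can be chosen arbitrarily, precisely because the round-$i$ estimate holds uniformly over all $\ell \in R_2$; this is what makes the argument short and avoids any delicate matching.
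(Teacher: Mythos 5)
Your proof is correct and follows essentially the same route as the paper's: choose a bijection between $R_2$ and $R_1$, invoke the greedy selection rule at the round in which each element of $R_1$ was chosen (valid because every element of $R_2$ remains a candidate throughout), and apply $(a+b)^2 \leq (1+c)a^2 + (1+1/c)b^2$ before summing, using injectivity of the round index to avoid double-counting the noise terms. Your constants in fact match the lemma statement exactly, whereas the paper's own proof carries $c$ and $1/c$ in the opposite roles (an immaterial relabeling since the bound holds for all $c>0$).
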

	
	 Now, for an arbitrary $\hat{\bm v}$ with $\|\hat{\bm v}\|_0 = \hat s \leq s$, let $\hat S = \supp(\hat{\bm v})$. We have
	\begin{align*}
	\frac{1}{|I|-s}\sum_{j \in T^c} v_j^2 = \frac{1}{|T^c|}\sum_{j \in T^c} v_j^2 \stackrel{(*)}{\leq} \frac{1}{|(\hat S)^c|}\sum_{j \in (\hat S)^c} v_j^2 = \frac{1}{|I|-\hat s}\sum_{j \in (\hat S)^c} v_j^2 \leq \frac{1}{|I|-\hat s}\sum_{j \in (\hat S)^c} \|\hat{\bm v} - \bm v\|_2^2
	\end{align*}
	The (*) step is true because $T^c$ is the collection of indices with the smallest absolute values, and $|T^c| \leq |\hat S^c|$. We then combine the two displays above to conclude that
	\begin{align*}
	\|\tilde P_s(\bm v) - \bm v\|_2^2 &\leq \sum_{j \in S^c \cap T^c} v_j^2 + (1+1/c)\sum_{j \in S \cap T^c} v_j^2 + 4(1 + c)\sum_{i \in [s]} \|\bm w_i\|^2_\infty \\
	&\leq (1+1/c)\sum_{j \in T^c} v_j^2 + 4(1 + c)\sum_{i \in [s]} \|\bm w_i\|^2_\infty \\
	&\leq (1+1/c) \frac{|I|-s}{|I|-\hat s} \|\hat{\bm v} - \bm v\|_2^2 + 4(1 + c)\sum_{i \in [s]} \|\bm w_i\|^2_\infty.
	\end{align*}
\end{proof}
\subsubsection{Proof of Lemma \ref{lm: noisy hard thresholding subset accuracy}}
\begin{proof}[Proof of Lemma \ref{lm: noisy hard thresholding subset accuracy}]
	Let $\psi: R_2 \to R_1$ be a bijection. By the selection criterion of Algorithm \ref{algo: noisy hard thresholding}, for each $j \in R_2$ we have $|v_j| + w_{ij} \leq |v_{\psi(j)}| + w_{i\psi(j)}$, where $i$ is the index of the iteration in which $\psi(j)$ is appended to $S$. It follows that, for every $c > 0$, 
	\begin{align*}
	v_j^2 &\leq \left(|v_{\psi(j)}| + w_{i\psi(j)} - w_{ij} \right)^2 \\
	&\leq (1+1/c) v_{\psi(j)}^2 + (1 + c)(w_{i\psi(j)} - w_{ij})^2 \leq (1+1/c)v_{\psi(j)}^2 + 4(1+c)\|\bm w_i\|_\infty^2
	\end{align*}
	Summing over $j$ then leads to
	\begin{align*}
	\|\bm v_{R_2}\|_2^2 \leq (1 + 1/c)\|\bm v_{R_1}\|_2^2 + 4(1 + c)\sum_{i \in [s]} \|\bm w_i\|^2_\infty.  
	\end{align*}
\end{proof}

\subsection{Proof of Lemma \ref{lm: noisy iterative hard thresholding privacy}}\label{sec: proof of lm: noisy iterative hard thresholding privacy}
\begin{proof}[Proof of Lemma \ref{lm: noisy iterative hard thresholding privacy}]
	In view of Lemma \ref{lm: noisy hard thresholding privacy}, it suffices to control
	\begin{align*}
		\|\eta^0\nabla \L_n(\bth^t; \bm Z) - \eta^0\nabla \L_n(\bth^t; \bm Z')\|_\infty \leq (\eta^0/n)\|\nabla l(\bth; \bm z)- \nabla l(\bth; \bm z')\|_\infty < (\eta^0/n) B.
	\end{align*}
	It follows that each iteration of Algorithm \ref{algo: noisy iterative hard thresholding} is $(\varepsilon/T, \delta/T)$ differentially private. The overall privacy of Algorithm \ref{algo: noisy iterative hard thresholding} is then a consequence of composition theorem, Fact \ref{fc: composition theorems}.
\end{proof}

\subsection{Proof of Theorem \ref{thm: noisy iterative hard theresholding convergence}}\label{sec: proof of thm: noisy iterative hard thresholding convergence}
\begin{proof}[Proof of Theorem \ref{thm: noisy iterative hard theresholding convergence}]
	We first introduce some notation useful throughout the proof.
	\begin{itemize}
		\item Let $S^t = \supp(\bth^t)$, $S^{t+1} = \supp(\bth^{t+1})$ and $S^* = \supp(\hat \bth)$, and define $I^t = S^{t+1} \cup S^t \cup S^*$. 
		\item Let $\bm g^t = \nabla \L_n(\bth^t)$ and $\eta_0 = \eta/\gamma$, where $\gamma$ is the constant in \eqref{eq: rsm general}.  
		\item Let $\bm w_1, \bm w_2, \cdots, \bm w_s$ be the noise vectors dded to $\bth^t - \eta^0\nabla \L_n(\bth^t; Z)$ when the support of $\bth^{t+1}$ is iteratively selected. We define $\bm W = 4\sum_{i \in [s]} \|\bm w_i\|^2_\infty$.
	\end{itemize}
	
	We start by analyzing $\L_n(\bth^{t+1}) - \L_n(\bth^t)$. By the restricted smoothness property \eqref{eq: rsm general}, 
	\begin{align}
	\L_n(\bth^{t+1}) - \L_n(\bth^t) &\leq \langle \bth^{t+1} - \bth^{t}, \bm g^t \rangle + \frac{\gamma}{2}\|\bth^{t+1} - \bth^t\|_2^2 \notag \\
	&= \frac{\gamma}{2}\left\|\bth^{t+1}_{I^t} - \bth^t_{I^t} + \frac{\eta}{\gamma}\bm g^t_{I^t}\right\|_2^2 - \frac{\eta^2}{2\gamma}\left\|\bm g^t_{I^t}\right\|_2^2 + (1-\eta) \langle \bth^{t+1} - \bth^{t}, \bm g^t \rangle. \label{eq: master expansion 1}
	\end{align} 
	We make use of this expansion to analyze each term separately. We first branch out to the third term and obtain the following expression after some calculations.
	\begin{Lemma}\label{lm: third term in master expansion 1}
		For every $c > 0$, we have
		\begin{align*}
		\langle \bth^{t+1} - \bth^{t}, \bm g^t \rangle \leq -\frac{\eta}{2\gamma}\left\|\bm g^t_{S^{t} \cup S^{t+1}}\right\|_2^2 + (1/c)\left(4 + \frac{\eta}{2\gamma}\right) \|\bm g^t_{S^{t+1}}\|_2^2 + c\|\tilde {\bm w}_{S^{t+1}}\|_2^2 + (1+c)\frac{\gamma}{2\eta}\bm W.
		\end{align*}
	\end{Lemma}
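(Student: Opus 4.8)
The plan is to make the update $\bth^{t+1}-\bth^t$ completely explicit in terms of the vector $\bm v:=\bth^t-\eta^0\bm g^t$ that is fed into NoisyHT, take its inner product with $\bm g^t$, and then control the two resulting cross terms by Cauchy--Schwarz together with the weighted AM--GM inequality $2ab\le \rho^{-1}a^2+\rho b^2$ and the selection guarantee of NoisyHT already recorded in Lemma~\ref{lm: noisy hard thresholding subset accuracy}. Recalling $\bth^{t+1}=\tilde P_s(\bm v)+\tilde{\bm w}_{S^{t+1}}=\bm v_{S^{t+1}}+\tilde{\bm w}_{S^{t+1}}$, and using $v_j=\theta^t_j-\eta^0 g^t_j$, I would first verify coordinatewise the identity
\begin{align*}
\bth^{t+1}-\bth^t \;=\; -\eta^0\,\bm g^t_{S^t\cup S^{t+1}}\;-\;\bm v_{S^t\setminus S^{t+1}}\;+\;\tilde{\bm w}_{S^{t+1}},
\end{align*}
since on $S^{t+1}$ both sides equal $-\eta^0 g^t_j+\tilde w_j$, on $S^t\setminus S^{t+1}$ both equal $-\theta^t_j$, and off $S^t\cup S^{t+1}$ both vanish.

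Dotting this identity with $\bm g^t$ and using $\supp(\bth^{t+1}-\bth^t)\subseteq S^t\cup S^{t+1}$ gives
\begin{align*}
\langle\bth^{t+1}-\bth^t,\bm g^t\rangle = -\eta^0\|\bm g^t_{S^t\cup S^{t+1}}\|_2^2 \;-\;\langle\bm v_{S^t\setminus S^{t+1}},\bm g^t_{S^t\setminus S^{t+1}}\rangle\;+\;\langle\tilde{\bm w}_{S^{t+1}},\bm g^t_{S^{t+1}}\rangle.
\end{align*}
For the $\tilde{\bm w}$ cross term, Cauchy--Schwarz plus AM--GM give $\langle\tilde{\bm w}_{S^{t+1}},\bm g^t_{S^{t+1}}\rangle\le c\|\tilde{\bm w}_{S^{t+1}}\|_2^2+\frac{1}{4c}\|\bm g^t_{S^{t+1}}\|_2^2$, which supplies the $c\|\tilde{\bm w}_{S^{t+1}}\|_2^2$ term of the claim. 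For the $\bm v$ cross term I would apply AM--GM with weight exactly $\eta^0$ and then use the identity $\|\bm g^t_{S^t\setminus S^{t+1}}\|_2^2=\|\bm g^t_{S^t\cup S^{t+1}}\|_2^2-\|\bm g^t_{S^{t+1}}\|_2^2$:
\begin{align*}
-\langle\bm v_{S^t\setminus S^{t+1}},\bm g^t_{S^t\setminus S^{t+1}}\rangle \le \frac{1}{2\eta^0}\|\bm v_{S^t\setminus S^{t+1}}\|_2^2 + \frac{\eta^0}{2}\|\bm g^t_{S^t\cup S^{t+1}}\|_2^2 - \frac{\eta^0}{2}\|\bm g^t_{S^{t+1}}\|_2^2.
\end{align*}
The middle term combines with $-\eta^0\|\bm g^t_{S^t\cup S^{t+1}}\|_2^2$ to leave the desired $-\frac{\eta}{2\gamma}\|\bm g^t_{S^t\cup S^{t+1}}\|_2^2$ (recall $\eta^0=\eta/\gamma$), while the negative $-\frac{\eta^0}{2}\|\bm g^t_{S^{t+1}}\|_2^2$ is held in reserve to offset positive $\|\bm g^t_{S^{t+1}}\|_2^2$ contributions.

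The remaining, and most delicate, piece is bounding $\|\bm v_{S^t\setminus S^{t+1}}\|_2^2$, i.e. controlling the coordinates present in $\bth^t$ but ``peeled off'' when forming $\bth^{t+1}$; here the greedy structure of NoisyHT enters. Since $|S^{t+1}|=s\ge|S^t|$ we have $|S^t\setminus S^{t+1}|\le|S^{t+1}\setminus S^t|$, so I would pick $R_1\subseteq S^{t+1}\setminus S^t$ with $|R_1|=|S^t\setminus S^{t+1}|$ and invoke Lemma~\ref{lm: noisy hard thresholding subset accuracy} with $R_2=S^t\setminus S^{t+1}\subseteq(S^{t+1})^c$ and its free parameter set to $1/c$, obtaining $\|\bm v_{S^t\setminus S^{t+1}}\|_2^2\le(1+1/c)\|\bm v_{R_1}\|_2^2+(1+c)\bm W$ (using $\bm W=4\sum_{i\in[s]}\|\bm w_i\|_\infty^2$). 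The key observation is that on $R_1\subseteq S^{t+1}\setminus S^t$ one has $\theta^t_j=0$, hence $v_j=-\eta^0 g^t_j$ and $\|\bm v_{R_1}\|_2^2=(\eta^0)^2\|\bm g^t_{R_1}\|_2^2\le(\eta^0)^2\|\bm g^t_{S^{t+1}}\|_2^2$. Substituting into $\frac{1}{2\eta^0}\|\bm v_{S^t\setminus S^{t+1}}\|_2^2$ produces exactly the $(1+c)\frac{\gamma}{2\eta}\bm W$ term of the claim and a further $\frac{(1+1/c)\eta^0}{2}\|\bm g^t_{S^{t+1}}\|_2^2$.

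Collecting all $\|\bm g^t_{S^{t+1}}\|_2^2$ contributions — $\frac{1}{4c}$ from the $\tilde{\bm w}$ step, $-\frac{\eta^0}{2}$ held in reserve, and $\frac{(1+1/c)\eta^0}{2}$ just obtained — gives $\frac{1}{4c}+\frac{\eta^0}{2c}=\frac{1}{c}\bigl(\frac14+\frac{\eta}{2\gamma}\bigr)$, which is at most $\frac{1}{c}\bigl(4+\frac{\eta}{2\gamma}\bigr)$, completing the proof for every $c>0$. I expect the two subtleties to be (i) getting the coordinatewise decomposition identity exactly right (in particular writing $\bm g^t_{S^t\cup S^{t+1}}$, restricted, rather than $\bm g^t$), and (ii) verifying that the bijection underlying Lemma~\ref{lm: noisy hard thresholding subset accuracy} can be chosen so that its ``selected'' side $R_1$ lies inside $S^{t+1}\setminus S^t$, where $\bm v$ collapses to a scaled gradient; everything else is routine bookkeeping with AM--GM.
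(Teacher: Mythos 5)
Your proposal is correct and follows essentially the same route as the paper's proof: the same three-way decomposition of $\langle \bth^{t+1}-\bth^t,\bm g^t\rangle$ (exact gradient step on $S^{t+1}$, noise cross term, peeled-off coordinates on $S^t\setminus S^{t+1}$), the same AM--GM bounds, and the same invocation of Lemma \ref{lm: noisy hard thresholding subset accuracy} comparing $\bm v_{S^t\setminus S^{t+1}}$ to coordinates in $S^{t+1}\setminus S^t$ where $\bm v$ collapses to $-\eta^0\bm g^t$. Your bookkeeping even yields the slightly sharper coefficient $\tfrac{1}{c}\bigl(\tfrac14+\tfrac{\eta}{2\gamma}\bigr)$ on $\|\bm g^t_{S^{t+1}}\|_2^2$, which of course implies the stated bound.
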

	The lemma is proved in Section \ref{sec: proof of lemmas for thm: noisy iterative hard thresholding convergence}. Combining Lemma \ref{lm: third term in master expansion 1} with \eqref{eq: master expansion 1} yields
	\begin{align}
	&\L_n(\bth^{t+1}) - \L_n(\bth^t) \notag \\
	\leq ~ &\frac{\gamma}{2}\left\|\bth^{t+1}_{I^t} - \bth^t_{I^t} + \frac{\eta}{\gamma}\bm g^t_{I^t}\right\|_2^2 - \frac{\eta^2}{2\gamma}\left\|\bm g^t_{I^t}\right\|_2^2 -\frac{\eta(1-\eta)}{2\gamma}\left\|\bm g^t_{S^t \cup S^{t+1}}\right\|_2^2 \notag \\
	&+ \frac{1-\eta}{c}\left(4 + \frac{\eta}{2\gamma}\right) \|\bm g^t_{S^{t+1}}\|_2^2 + (1-\eta)c\|\tilde {\bm w}_{S^{t+1}}\|_2^2 + (1-\eta)(1+c)\frac{\gamma}{2\eta}\bm W \notag \\
	\leq ~ & \frac{\gamma}{2}\left\|\bth^{t+1}_{I^t} - \bth^t_{I^t} + \frac{\eta}{\gamma}\bm g^t_{I^t}\right\|_2^2 - \frac{\eta^2}{2\gamma}\left\|\bm g^t_{I^t \setminus (S^t \cup S^*)}\right\|_2^2 - \frac{\eta^2}{2\gamma}\left\|\bm g^t_{S^t \cup S^*}\right\|_2^2  -\frac{\eta(1-\eta)}{2\gamma}\left\|\bm g^t_{S^t \cup S^{t+1}}\right\|_2^2  \notag\\
	&+ \frac{1-\eta}{c}\left(4 + \frac{\eta}{2\gamma}\right) \|\bm g^t_{S^{t+1}}\|_2^2 + (1-\eta)c\|\tilde {\bm w}_{S^{t+1}}\|_2^2 + (1-\eta)(1+c)\frac{\gamma}{2\eta}\bm W \notag\\
	\leq ~ & \frac{\gamma}{2}\left\|\bth^{t+1}_{I^t} - \bth^t_{I^t} + \frac{\eta}{\gamma}\bm g^t_{I^t}\right\|_2^2 - \frac{\eta^2}{2\gamma}\left\|\bm g^t_{I^t \setminus (S^t \cup S^*)}\right\|_2^2  - \frac{\eta^2}{2\gamma}\left\|\bm g^t_{S^t \cup S^*}\right\|_2^2 -\frac{\eta(1-\eta)}{2\gamma}\left\|\bm g^t_{S^{t+1} \setminus (S^t \cup S^*)}\right\|_2^2  \notag\\
	&+\frac{1-\eta}{c}\left(4 + \frac{\eta}{2\gamma}\right) \|\bm g^t_{S^{t+1}}\|_2^2 + (1-\eta)c\|\tilde {\bm w}_{S^{t+1}}\|_2^2 + (1-\eta)(1+c)\frac{\gamma}{2\eta}\bm W. \label{eq: master expansion 2}
	\end{align}
	The last step is true because $S^{t+1} \setminus (S^t \cup S^*)$ is a subset of $S^t \cup S^{t+1}$. Now we analyze the first two terms $\frac{\gamma}{2}\left\|\bth^{t+1}_{I^t} - \bth^t_{I^t} + \frac{\eta}{\gamma}\bm g^t_{I^t}\right\|_2^2 - \frac{\eta^2}{2\gamma}\left\|\bm g^t_{I^t \setminus (S^t \cup S^*)}\right\|_2^2$
	
	\begin{Lemma}\label{lm: first two terms in master expansion 2}
		Let $\alpha$ be the restricted strong convexity constant as stated in condition \eqref{eq: rsc general}. For every $c > 1$, we have
		\begin{align*}
		&\frac{\gamma}{2}\left\|\bth^{t+1}_{I^t} - \bth^t_{I^t} + \frac{\eta}{\gamma}\bm g^t_{I^t}\right\|_2^2 - \frac{\eta^2}{2\gamma}\left\|\bm g^t_{I^t \setminus (S^t \cup S^*)}\right\|_2^2 \\
		&\leq \frac{3s^*}{s + s^*} \left(\eta \L_n(\hat \bth) - \eta \L_n(\bth^t) + \frac{\gamma-\eta\alpha}{2}\|\hat \bth - \bth^t\|_2^2 + \frac{\eta^2}{2\gamma}\|\bm g^t_{I^t}\|_2^2\right)\\
		& \quad  + \frac{\eta^2}{2c\gamma}(1+1/c)\|\bm g^t_{S^{t+1}}\|_2^2 + \frac{(c+3)\gamma}{2}\bm W + \frac{\gamma}{2}\|\tilde {\bm w}_{S^{t+1}}\|_2^2.
		\end{align*}
	\end{Lemma}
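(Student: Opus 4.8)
The plan is to unwind the NoisyHT update and reduce the estimate to a purely combinatorial statement about top-$s$ selection, which is then fed back through restricted strong convexity. Write $\bm v := \bth^t - \eta^0 \bm g^t = \bth^t - (\eta/\gamma)\bm g^t$ for the vector passed to NoisyHT, so that $\bth^{t+1} = \tilde P_s(\bm v) + \tilde{\bm w}_{S^{t+1}}$ is supported on $S^{t+1}$ with $\bth^{t+1}_j = v_j + \tilde w_j$ for $j\in S^{t+1}$. Since $S^{t+1}\subseteq I^t$, the left-hand quadratic splits over the disjoint sets $S^{t+1}$ and $I^t\setminus S^{t+1}$:
\[
\Big\|\bth^{t+1}_{I^t} - \bth^t_{I^t} + \tfrac{\eta}{\gamma}\bm g^t_{I^t}\Big\|_2^2 \;=\; \big\|\tilde{\bm w}_{S^{t+1}}\big\|_2^2 + \big\|\bm v_{I^t\setminus S^{t+1}}\big\|_2^2 .
\]
Moreover, writing $A := I^t\setminus(S^t\cup S^*) = S^{t+1}\setminus(S^t\cup S^*)$ for the "newly selected" coordinates, the fact that $\bth^t$ vanishes on $A$ gives $\bm v_A = -(\eta/\gamma)\bm g^t_A$, hence $\frac{\eta^2}{2\gamma}\|\bm g^t_{I^t\setminus(S^t\cup S^*)}\|_2^2 = \frac{\gamma}{2}\|\bm v_A\|_2^2$. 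The term $\frac{\gamma}{2}\|\tilde{\bm w}_{S^{t+1}}\|_2^2$ already matches a term on the right, so the lemma reduces to bounding $\frac{\gamma}{2}\big(\|\bm v_{B_1}\|_2^2 - \|\bm v_A\|_2^2\big)$, where $B_1 := I^t\setminus S^{t+1} = (S^t\cup S^*)\setminus S^{t+1}$ collects the "dropped" coordinates.

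Next I would control $\|\bm v_{B_1}\|_2^2 - \|\bm v_A\|_2^2$ via the selection rule. Counting cardinalities, $|B_1| - |A| = |S^t\cup S^*| - s \le s^*$ (because $|S^t|\le s$ and $|S^*|\le s^*$), so I split $B_1 = B_1'\sqcup B_1''$ with $|B_1'| = \min(|A|,|B_1|)$ and $|B_1''|\le s^*$. Matching $B_1'$ to coordinates of $A\subseteq S^{t+1}$ by a bijection and invoking Lemma \ref{lm: noisy hard thresholding subset accuracy} gives $\|\bm v_{B_1'}\|_2^2 \le (1+c^{-1})\|\bm v_A\|_2^2 + 4(1+c)\sum_{i\in[s]}\|\bm w_i\|_\infty^2$; the surplus $c^{-1}\|\bm v_A\|_2^2$, again because $\bm v_A = -(\eta/\gamma)\bm g^t_A$ with $A\subseteq S^{t+1}$, is at most $\frac{\eta^2}{c\gamma^2}\|\bm g^t_{S^{t+1}}\|_2^2$, producing the $\|\bm g^t_{S^{t+1}}\|_2^2$ term. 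For $B_1''$ (nonempty only when $|S^t\cup S^*|>s$), a tail–averaging argument — of the kind appearing in the proof of Lemma \ref{lm: noisy hard thresholding overall accuracy} — shows that, up to the Laplace noise, its coordinates lie among the $|I^t|-s^*$ smallest entries of $\bm v$ on $I^t$, so $\|\bm v_{B_1''}\|_2^2$ has per-coordinate average at most that of $\|\bm v_{I^t\setminus S^*}\|_2^2$; since $\hat\bth$ is supported on $S^*\subseteq I^t$ one has $\|\bm v_{I^t\setminus S^*}\|_2^2 \le \|\hat\bth - \bm v_{I^t}\|_2^2$, and the cardinality ratio $|B_1''|/(|I^t|-s^*)$ is at most $s^*/(s+s^*)$, which after absorbing constant slack I bound by $3s^*/(s+s^*)$.

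Finally I would rewrite $\|\hat\bth - \bm v_{I^t}\|_2^2 = \|\hat\bth - \bth^t + (\eta/\gamma)\bm g^t_{I^t}\|_2^2$ in the stated form. Expanding and using that $\hat\bth - \bth^t$ is supported on $I^t$ gives $\frac{\gamma}{2}\|\hat\bth - \bm v_{I^t}\|_2^2 = \frac{\gamma}{2}\|\hat\bth - \bth^t\|_2^2 + \eta\langle \bm g^t, \hat\bth - \bth^t\rangle + \frac{\eta^2}{2\gamma}\|\bm g^t_{I^t}\|_2^2$; restricted strong convexity \eqref{eq: rsc general} (with convexity of $\L_n$) yields $\langle \bm g^t,\hat\bth-\bth^t\rangle \le \L_n(\hat\bth) - \L_n(\bth^t) - \frac{\alpha}{2}\|\hat\bth-\bth^t\|_2^2$, turning this into $\eta\L_n(\hat\bth) - \eta\L_n(\bth^t) + \frac{\gamma-\eta\alpha}{2}\|\hat\bth-\bth^t\|_2^2 + \frac{\eta^2}{2\gamma}\|\bm g^t_{I^t}\|_2^2$, exactly the bracketed term; collecting the Laplace–noise contributions into $\bm W = 4\sum_{i\in[s]}\|\bm w_i\|_\infty^2$ and $\|\tilde{\bm w}_{S^{t+1}}\|_2^2$ finishes the proof. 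I expect the real work to be the combinatorial step (the second paragraph above): tracking the index sets $A$, $B_1'$, $B_1''$ — in particular coordinates of $S^*$ that may sit inside $B_1$ — and threading the noise terms from Lemma \ref{lm: noisy hard thresholding subset accuracy} through both the matching and the tail–averaging so that they land within the budget $\frac{(c+3)\gamma}{2}\bm W$ while the leading constant comes out as $3s^*/(s+s^*)$.
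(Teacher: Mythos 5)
Your proposal is correct and follows essentially the same route as the paper's proof: there, the subtracted term $\frac{\eta^2}{2\gamma}\|\bm g^t_{I^t\setminus(S^t\cup S^*)}\|_2^2$ is cancelled against a same-cardinality set $R\subseteq S^t\setminus S^{t+1}$ via Lemma \ref{lm: noisy hard thresholding subset accuracy}, and Lemma \ref{lm: noisy hard thresholding overall accuracy} is then applied with comparator $\hat\bth$ on $I^t\setminus R$ (of size at most $s+s^*$) to produce the $\tfrac{2s^*}{s+s^*}$ factor before expanding the quadratic and invoking \eqref{eq: rsc general} exactly as you do; your split $B_1=B_1'\sqcup B_1''$ is the mirror image of removing $R$, with the $B_1''$ step re-deriving Lemma \ref{lm: noisy hard thresholding overall accuracy} inline. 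One caution on that step: the per-coordinate bound must come from comparing each $j\in B_1''$ against the at least $s-s^*$ selected coordinates in $S^{t+1}\setminus S^*$ that noisily dominate it (an arbitrary size-$s^*$ subset of the unselected set need not have below-average entries among the $|I^t|-s^*$ smallest, and $|I^t|-s^*$ is only guaranteed to be at least $s-s^*$, not $s+s^*$), which yields a ratio $s^*/(s-s^*)$ rather than $s^*/(s+s^*)$ — still absorbed by the factor-$3$ slack since $s\gtrsim(\gamma/\alpha)^2 s^*$.
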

	The lemma is proved in Section \ref{sec: proof of lemmas for thm: noisy iterative hard thresholding convergence}. Substitution into \eqref{eq: master expansion 2} leads to
	\begin{align*}
	&\L_n(\bth^{t+1}) - \L_n(\bth^t) \\
	\leq ~ & \frac{3s^*}{s + s^*} \left(\eta \L_n(\hat \bth) - \eta \L_n(\bth^t) + \frac{\gamma-\eta\alpha}{2}\|\hat \bth - \bth^t\|_2^2 + \frac{\eta^2}{2\gamma}\|\bm g^t_{I^t}\|_2^2\right) \\
	&- \frac{\eta^2}{2\gamma}\left\|\bm g^t_{S^t \cup S^*}\right\|_2^2 -\frac{\eta(1-\eta)}{2\gamma}\left\|\bm g^t_{S^{t+1} \setminus (S^t \cup S^*)}\right\|_2^2\\
	& + (1/c)\left(4(1-\eta) + \frac{\eta}{2\gamma} + \frac{(1+1/c)\eta^2}{2\gamma}\right)\|\bm g^t_{S^{t+1}}\|_2^2  + \frac{\gamma}{2}\left(c+3 + \frac{(1+c)(1-\eta)}{\eta}\right)\bm W \\
	& + \left((1-\eta)c + \frac{\gamma}{2}\right)\|\tilde {\bm w}_{S^{t+1}}\|_2^2.
	\end{align*}
	Up to this point, the inequality holds for every $0 < \eta < 1$ and $c > 1$. We now specify the choice of these parameters: let $\eta  = 2/3$ and set $c$ large enough so that
	\begin{align*}
	\L_n(\bth^{t+1}) - \L_n(\bth^t) \leq ~ & \frac{3s^*}{s + s^*} \left(\eta \L_n(\hat \bth) - \eta \L_n(\bth^t) + \frac{\gamma-\eta\alpha}{2}\|\hat \bth - \bth^t\|_2^2 + \frac{\eta^2}{2\gamma}\|\bm g^t_{I^t}\|_2^2\right) \\
	&- \frac{\eta^2}{4\gamma}\left\|\bm g^t_{S^t \cup S^*}\right\|_2^2 -\frac{\eta(1-\eta)}{4\gamma}\left\|\bm g^t_{S^{t+1} \setminus (S^t \cup S^*)}\right\|_2^2\\
	& + \frac{\gamma}{2}\left(\frac{3c+7}{2}\right)\bm W + \left(\frac{c}{3} + \frac{\gamma}{2}\right)\|\tilde {\bm w}_{S^{t+1}}\|_2^2.
	\end{align*}
	Such a choice of $c$ is available because $\gamma$ is an absolute constant determined by the RSM condition. Now we set $s = 72(\gamma/\alpha)^2 s^*$, so that $\frac{3s^*}{s+s^*} \leq \frac{\alpha^2}{24\gamma(\gamma - \eta\alpha)}$, and $\frac{\alpha^2}{24\gamma(\gamma - \eta\alpha)} \leq 1/8$ because $\alpha < \gamma$. It follows that
	\begin{align*}
	\L_n(\bth^{t+1}) - \L_n(\bth^t) \leq ~ & \frac{3s^*}{s + s^*} \left(\eta \L_n(\hat \bth) - \eta \L_n(\bth^t)\right) + \frac{\alpha^2}{48\gamma}\|\hat \bth - \bth^t\|_2^2 + \frac{1}{36\gamma}\|\bm g^t_{I^t}\|_2^2 \\
	&- \frac{1}{9\gamma}\left\|\bm g^t_{S^t \cup S^*}\right\|_2^2 -\frac{1}{18\gamma}\left\|\bm g^t_{S^{t+1} \setminus (S^t \cup S^*)}\right\|_2^2\\
	& + \frac{\gamma}{2}\left(\frac{3c+7}{2}\right)\bm W + \left(\frac{c}{3} + \frac{\gamma}{2}\right)\|\tilde {\bm w}_{S^{t+1}}\|_2^2.
	\end{align*}
	Because $\|\bm g^t_{I^t}\|_2^2 = \left\|\bm g^t_{S^t \cup S^*}\right\|_2^2 + \left\|\bm g^t_{S^{t+1} \setminus (S^t \cup S^*)}\right\|_2^2$, we have
	\begin{align}
	\L_n(\bth^{t+1}) - \L_n(\bth^t) \leq ~ & \frac{3s^*}{s + s^*} \left(\eta \L_n(\hat \bth) - \eta \L_n(\bth^t)\right) + \frac{\alpha^2}{48\gamma}\|\hat \bth - \bth^t\|_2^2 - \frac{3}{36\gamma}\left\|\bm g^t_{S^t \cup S^*}\right\|_2^2 \notag\\
	& + \frac{\gamma}{2}\left(\frac{3c+7}{2}\right)\bm W + \left(\frac{c}{3} + \frac{\gamma}{2}\right)\|\tilde {\bm w}_{S^{t+1}}\|_2^2 \notag\\
	\leq ~ & \frac{3s^*}{s + s^*} \left(\eta \L_n(\hat \bth) - \eta \L_n(\bth^t)\right) - \frac{3}{36\gamma}\left(\left\|\bm g^t_{S^t \cup S^*}\right\|_2^2 - \frac{\alpha^2}{4}\|\hat \bth - \bth^t\|_2^2\right) \notag\\
	& + \frac{\gamma}{2}\left(\frac{3c+7}{2}\right)\bm W + \left(\frac{c}{3} + \frac{\gamma}{2}\right)\|\tilde {\bm w}_{S^{t+1}}\|_2^2. \label{eq: master expansion 3}
	\end{align}
	To continue the calculations, we invoke a lemma from \cite{jain2014iterative}:
	\begin{Lemma}[\citep{jain2014iterative}, Lemma 6]
		\begin{align*}
		\left\|\bm g^t_{S^t \cup S^*}\right\|_2^2 - \frac{\alpha^2}{4}\|\hat \bth - \bth^t\|_2^2 \geq \frac{\alpha}{2}\left(\L_n(\bth^t) - \L_n(\hat \bth)\right).
		\end{align*}
	\end{Lemma}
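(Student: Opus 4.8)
The statement is purely deterministic: it uses only the restricted strong convexity of $\L_n$ (condition \eqref{eq: rsc general}, read in its function-value/Bregman form), and neither the Laplace noise nor the iteration enters. The plan is to combine RSC centered at $\bth^t$ with one Cauchy--Schwarz step and one weighted AM--GM step, and to localize the gradient to the low-dimensional coordinate block $S^t\cup S^*$.

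First I would write restricted strong convexity in function-value form centered at $\bth^t$. Since $\bth^t$ is $s$-sparse and $\hat\bth$ is $s^*$-sparse, the segment joining them is supported on $S^t\cup S^*$ (a set of size at most $s+s^*$), so RSC gives
\begin{align*}
\L_n(\hat\bth)\ \ge\ \L_n(\bth^t)+\langle\nabla\L_n(\bth^t),\,\hat\bth-\bth^t\rangle+\tfrac{\alpha}{2}\|\hat\bth-\bth^t\|_2^2 ,
\end{align*}
equivalently $\L_n(\bth^t)-\L_n(\hat\bth)\le\langle\bm g^t,\bth^t-\hat\bth\rangle-\tfrac{\alpha}{2}\|\bth^t-\hat\bth\|_2^2$. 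Because $\bth^t-\hat\bth$ vanishes outside $S^t\cup S^*$, only $\bm g^t_{S^t\cup S^*}$ contributes to this inner product; writing $\bm g:=\bm g^t_{S^t\cup S^*}$ and $\Delta:=\bth^t-\hat\bth$, Cauchy--Schwarz yields
\begin{align*}
\L_n(\bth^t)-\L_n(\hat\bth)\ \le\ \|\bm g\|_2\,\|\Delta\|_2-\tfrac{\alpha}{2}\|\Delta\|_2^2 .
\end{align*}

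Multiplying through by $\tfrac{\alpha}{2}$ and cancelling the common $\tfrac{\alpha^2}{4}\|\Delta\|_2^2$ term, the claim reduces to the scalar inequality $\tfrac{\alpha}{2}\|\bm g\|_2\|\Delta\|_2\le\|\bm g\|_2^2$, that is, to the companion bound $\|\bm g^t_{S^t\cup S^*}\|_2\ge\tfrac{\alpha}{2}\|\bth^t-\hat\bth\|_2$. This companion bound is the familiar ``the gradient is large away from the constrained optimum'' consequence of RSC: using $\L_n(\bth^t)\ge\L_n(\hat\bth)$ (guaranteed by $\hat\bth$ being the minimizer over the sparse feasible set) in the previous display forces $\|\bm g\|_2\|\Delta\|_2\ge\tfrac{\alpha}{2}\|\Delta\|_2^2$, hence $\|\bm g\|_2\ge\tfrac{\alpha}{2}\|\Delta\|_2$. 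Substituting back gives $\tfrac{\alpha}{2}\big(\L_n(\bth^t)-\L_n(\hat\bth)\big)\le\|\bm g\|_2^2-\tfrac{\alpha^2}{4}\|\Delta\|_2^2$, which is exactly the lemma.

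The only delicate point is the constant bookkeeping in the last step: one has to use precisely the $\alpha$ of \eqref{eq: rsc general}, and measure the gradient on a set containing $\supp(\bth^t-\hat\bth)$ — namely $S^t\cup S^*$ — so that the Cauchy--Schwarz loss is absorbed into the $\tfrac{\alpha^2}{4}\|\Delta\|_2^2$ term rather than leaking out. Everything else — promoting the monotone RSC inequality to its function-value form, and the AM--GM arithmetic — is routine; indeed the whole argument collapses to ``apply RSC centered at $\bth^t$, then bound the resulting concave quadratic in $\|\Delta\|_2$,'' and the constants $\tfrac{\alpha}{2}$ and $\tfrac{\alpha^2}{4}$ in the statement are exactly what this produces.
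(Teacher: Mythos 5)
Your skeleton is the standard one for this lemma (which the paper itself does not prove; it only cites Lemma 6 of Jain et al.): promote the restricted strong convexity condition \eqref{eq: rsc general} to its function-value form centered at $\bth^t$ (legitimate here since the GLM RSC of Fact \ref{lm: glm rsc and rsm} holds along the whole segment, and the paper uses the same form elsewhere), apply Cauchy--Schwarz on the block $S^t\cup S^*$ that supports $\bth^t-\hat\bth$, and observe that the claim reduces to the companion bound $\|\bm g^t_{S^t\cup S^*}\|_2\ge\frac{\alpha}{2}\|\bth^t-\hat\bth\|_2$ (or $\bm g^t_{S^t\cup S^*}=\bm 0$). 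That reduction is correct and, as you note, tight. The gap is in how you justify the companion bound: you deduce it from $\L_n(\bth^t)\ge\L_n(\hat\bth)$, attributed to ``$\hat\bth$ being the minimizer over the sparse feasible set.'' But $\hat\bth$ minimizes over $\{\bth:\|\bth\|_0\le s^*\}$, whereas the iterate $\bth^t$ is only $s$-sparse with $s\asymp(\gamma/\alpha)^2 s^*>s^*$; it does not lie in that feasible set, so the optimality of $\hat\bth$ says nothing about the sign of $\L_n(\bth^t)-\L_n(\hat\bth)$.

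This is not a removable technicality: without some such hypothesis the stated inequality is false. Take $\L_n(\bth)=\frac{\alpha}{2}\|\bth-\bm c\|_2^2$ on $\R^2$ with $\bm c=(2,1)$ and $s^*=1$, so that RSC holds with constant $\alpha$ and $\hat\bth=(2,0)$, and let $\bth^t=(2,3/2)$. Then $\bm g^t=\alpha(0,1/2)$, $S^t\cup S^*=\{1,2\}$, and
\begin{align*}
\left\|\bm g^t_{S^t\cup S^*}\right\|_2^2-\frac{\alpha^2}{4}\|\hat\bth-\bth^t\|_2^2=\frac{\alpha^2}{4}-\frac{9\alpha^2}{16}=-\frac{5\alpha^2}{16}
\;<\;-\frac{3\alpha^2}{16}=\frac{\alpha}{2}\left(\L_n(\bth^t)-\L_n(\hat\bth)\right),
\end{align*}
violating the claim. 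The cited lemma is to be read with the hypothesis $\L_n(\bth^t)\ge\L_n(\hat\bth)$ (it is only needed in the contraction argument when the objective gap is nonnegative; the opposite case is trivial for the purposes of Theorem \ref{thm: noisy iterative hard theresholding convergence}). With that hypothesis added, your chain $0\le\L_n(\bth^t)-\L_n(\hat\bth)\le\|\bm g^t_{S^t\cup S^*}\|_2\|\Delta\|_2-\frac{\alpha}{2}\|\Delta\|_2^2$ does force $\|\bm g^t_{S^t\cup S^*}\|_2\ge\frac{\alpha}{2}\|\Delta\|_2$ whenever $\Delta\neq\bm 0$, and the remaining algebra in your write-up is correct.
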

	It then follows from \eqref{eq: master expansion 3} and the lemma that, for an appropriate constant $C_\gamma$, 
	\begin{align*}
	\L_n(\bth^{t+1}) - \L_n(\bth^t) &\leq -\left(\frac{3\alpha}{72\gamma} + \frac{2s^*}{s + s^*}\right)\left(\L_n(\bth^t) - \L_n(\hat \bth)\right) + C_\gamma(\bm W + \|\tilde {\bm w}_{S^{t+1}}\|_2^2).
	\end{align*}
	The proof is now complete by adding $\L_n(\bth^t) - \L_n(\hat \bth)$ to both sides of the inequality.
\end{proof}
\subsubsection{Proofs of Lemma \ref{lm: third term in master expansion 1} and Lemma \ref{lm: first two terms in master expansion 2}}\label{sec: proof of lemmas for thm: noisy iterative hard thresholding convergence}
\begin{proof}[Proof of Lemma \ref{lm: third term in master expansion 1}]
	Since $\bth^{t+1}$ is an output from Noisy Hard Thresholding, we may write $\bth^{t+1} = {\tilde \bth}^{t+1} + \tilde {\bm w}_{S^{t+1}}$, so that ${\tilde \bth}^{t+1} = \tilde P_s(\bth^t - \eta^0\nabla \L(\bth^t; Z))$ and $\tilde {\bm w}$ is a vector consisting of $d$ i.i.d. draws from $\text{Laplace}\left(\eta_0B \cdot\frac{2\sqrt{3s\log(T/\delta)}}{n\varepsilon/T}\right)$. 
	\begin{align*}
		\langle \bth^{t+1} - \bth^t, \bm g^t \rangle &= \langle \bth^{t+1}_{S^{t+1}} - \bth^t_{S^{t+1}}, \bm g^t_{S^{t+1}} \rangle - \langle \bth^t_{S^t \setminus S^{t+1}}, \bm g^t_{S^t \setminus S^{t+1}} \rangle \\
		&= \langle {\tilde \bth}^{t+1}_{S^{t+1}} - \bth^t_{S^{t+1}}, \bm g^t_{S^{t+1}} \rangle + \langle \tilde {\bm w}_{S^{t+1}}, \bm g^t_{S^{t+1}} \rangle  - \langle \bth^t_{S^t \setminus S^{t+1}}, \bm g^t_{S^t \setminus S^{t+1}} \rangle.
	\end{align*}
	It follows that, for every $c > 0$,
	\begin{align}\label{eq: proof of third term in master expansion 1}
		\langle \bth^{t+1} - \bth^t, \bm g^t \rangle &\leq -\frac{\eta}{\gamma}\|\bm g^t_{S^{t+1}}\|_2^2 + c\|\tilde {\bm w}_{S^{t+1}}\|_2^2 + (1/4c)\|\bm g^t_{S^{t+1}}\|_2^2 - \langle \bth^t_{S^t \setminus S^{t+1}}, \bm g^t_{S^t \setminus S^{t+1}} \rangle.
	\end{align}
	Now for the last term in the display above, we have
	\begin{align*}
		- \langle \bth^t_{S^t \setminus S^{t+1}}, \bm g^t_{S^t \setminus S^{t+1}} \rangle &\leq \frac{\gamma}{2\eta}\left(\left\|\bth^t_{S^t \setminus S^{t+1}} - \frac{\eta}{\gamma}\bm g^t_{S^t \setminus S^{t+1}}\right\|_2^2 - \left(\frac{\eta}{\gamma}\right)^2\|\bm g^t_{S^t \setminus S^{t+1}}\|_2^2\right) \\
		&\leq \frac{\gamma}{2\eta}\left\|\bth^t_{S^t \setminus S^{t+1}} - \frac{\eta}{\gamma}\bm g^t_{S^t \setminus S^{t+1}}\right\|_2^2 - \frac{\eta}{2\gamma}\|\bm g^t_{S^t \setminus S^{t+1}}\|_2^2.
	\end{align*}
	We apply Lemma \ref{lm: noisy hard thresholding subset accuracy} to $\left\|\bth^t_{S^t \setminus S^{t+1}} - \frac{\eta}{\gamma}\bm g^t_{S^t \setminus S^{t+1}}\right\|_2^2$ to obtain that, for every $c > 0$, 
	\begin{align*}
		- \langle \bth^t_{S^t \setminus S^{t+1}}, \bm g^t_{S^t \setminus S^{t+1}} \rangle &\leq \frac{\gamma}{2\eta}\left[(1+1/c)\left\|\tilde{\bth}^{t+1}_{S^{t+1} \setminus S^t}\right\|_2^2 + (1+c)\bm W\right] - \frac{\eta}{2\gamma}\|\bm g^t_{S^t \setminus S^{t+1}}\|_2^2 \\
		&= \frac{\eta}{2\gamma}\left[(1+1/c)\left\|\bm g^t_{S^{t+1} \setminus S^t}\right\|_2^2 + (1+c)\frac{\gamma}{2\eta}\bm W\right] - \frac{\eta}{2\gamma}\|\bm g^t_{S^t \setminus S^{t+1}}\|_2^2.
	\end{align*}
	Plugging the inequality above back into \eqref{eq: proof of third term in master expansion 1} yields
	\begin{align*}
		\langle \bth^{t+1} - \bth^t, \bm g^t \rangle \leq~ & -\frac{\eta}{\gamma}\|\bm g^t_{S^{t+1}}\|_2^2 + c\|\tilde {\bm w}_{S^{t+1}}\|_2^2 + (1/4c)\|\bm g^t_{S^{t+1}}\|_2^2 \\ 
		&+ \frac{\eta}{2\gamma}\left[(1+1/c)\left\|\bm g^t_{S^{t+1} \setminus S^t}\right\|_2^2 + (1+c)\frac{\gamma}{2\eta}\bm W\right] - \frac{\eta}{2\gamma}\|\bm g^t_{S^t \setminus S^{t+1}}\|_2^2 \\
		\leq~ & \frac{\eta}{2\gamma}\left\|\bm g^t_{S^{t+1} \setminus S^t}\right\|_2^2 - \frac{\eta}{2\gamma}\|\bm g^t_{S^t \setminus S^{t+1}}\|_2^2 - \frac{\eta}{\gamma}\|\bm g^t_{S^{t+1}}\|_2^2 \\
		&+ (1/c)\left(4 + \frac{\eta}{2\gamma}\right) \|\bm g^t_{S^{t+1}}\|_2^2 + c\|\tilde {\bm w}_{S^{t+1}}\|_2^2 + (1+c)\frac{\gamma}{2\eta}\bm W.
	\end{align*}
	Finally, we have
	\begin{align*}
	\langle \bth^{t+1} - \bth^t, \bm g^t \rangle \leq -\frac{\eta}{2\gamma}\left\|\bm g^t_{S^{t} \cup S^{t+1}}\right\|_2^2 + (1/c)\left(4 + \frac{\eta}{2\gamma}\right) \|\bm g^t_{S^{t+1}}\|_2^2 + c\|\tilde {\bm w}_{S^{t+1}}\|_2^2 + (1+c)\frac{\gamma}{2\eta}\bm W.
	\end{align*}
\end{proof}

\begin{proof}[Proof of Lemma \ref{lm: first two terms in master expansion 2}]
Let $R$ be a subset of $S^t \setminus S^{t+1}$ such that $|R| = |I^t \setminus (S^t \cup S^*)| = |S^{t+1} \setminus (S^t \cup S^*)|$. By the definition of $\tilde \bth^{t+1}$ and Lemma \ref{lm: noisy hard thresholding subset accuracy}, we have, for every $c > 1$, 
\begin{align*}
	\frac{\eta^2}{\gamma^2}\left\|\bm g^t_{I^t \setminus (S^t \cup S^*)}\right\|_2^2 = \|\tilde \bth^{t+1}_{I^t \setminus (S^t \cup S^*)}\|_2^2 \geq (1-1/c)\left\|\bth^t_R - \frac{\eta}{\gamma}\bm g^t_R\right\|_2^2 - c\bm W.
\end{align*}
It follows that
\begin{align*}
	&\frac{\gamma}{2}\left\|\bth^{t+1}_{I^t} - \bth^t_{I^t} + \frac{\eta}{\gamma}\bm g^t_{I^t}\right\|_2^2 - \frac{\eta^2}{2\gamma}\left\|\bm g^t_{I^t \setminus (S^t \cup S^*)}\right\|_2^2 \\
	&\leq \frac{\gamma}{2}\|\tilde {\bm w}_{S^{t+1}}\|_2^2 + \frac{\gamma}{2}\left\|\tilde \bth^{t+1}_{I^t} - \bth^t_{I^t} + \frac{\eta}{\gamma}\bm g^t_{I^t}\right\|_2^2 - \frac{\gamma}{2}(1-1/c)\left\|\bth^t_R - \frac{\eta}{\gamma}\bm g^t_R\right\|_2^2 + \frac{c\gamma}{2}\bm W \\
	&= \frac{\gamma}{2}\left\|\tilde \bth^{t+1}_{I^t} - \bth^t_{I^t} + \frac{\eta}{\gamma}\bm g^t_{I^t}\right\|_2^2 - \frac{\gamma}{2}\left\||\tilde \bth^{t+1}_R - \bth^t_R + \frac{\eta}{\gamma}\bm g^t_R\right\|_2^2 + \frac{\gamma}{2}(1/c)\left\|\bth^t_R - \frac{\eta}{\gamma}\bm g^t_R\right\|_2^2+\frac{c\gamma}{2}\bm W + \frac{\gamma}{2}\|\tilde {\bm w}_{S^{t+1}}\|_2^2\\
	&\leq \frac{\gamma}{2}\left\|\tilde \bth^{t+1}_{I^t \setminus R} - \bth^t_{I^t \setminus R} + \frac{\eta}{\gamma}\bm g^t_{I^t \setminus R}\right\|_2^2 + \frac{\eta^2}{2c\gamma}(1+1/c)\left\|\bm g^t_{I^t \setminus (S^t \cup S^*)}\right\|_2^2+\frac{c\gamma}{2}\bm W + \frac{\gamma}{2}\|\tilde {\bm w}_{S^{t+1}}\|_2^2.
\end{align*}
The last inequality is obtained by applying Lemma \ref{lm: noisy hard thresholding subset accuracy} to $\left\|\bth^t_R - \frac{\eta}{\gamma}\bm g^t_R\right\|_2^2$. Now we apply Lemma \ref{lm: noisy hard thresholding overall accuracy} to obtain
\begin{align*}
	&\frac{\gamma}{2}\left\|\bth^{t+1}_{I^t} - \bth^t_{I^t} + \frac{\eta}{\gamma}\bm g^t_{I^t}\right\|_2^2 - \frac{\eta^2}{2\gamma}\left\|\bm g^t_{I^t \setminus (S^t \cup S^*)}\right\|_2^2 \\
	&\leq \frac{3\gamma}{4}\frac{|I^t \setminus R|-s}{|I^t \setminus R|-s^*}\left\|\tilde \hat \bth_{I^t \setminus R} - \bth^t_{I^t \setminus R} + \frac{\eta}{\gamma}\bm g^t_{I^t \setminus R}\right\|_2^2 +\frac{3\gamma}{2}\bm W + \frac{\eta^2(1+c^{-1})}{2c\gamma}\left\|\bm g^t_{I^t \setminus (S^t \cup S^*)}\right\|_2^2+\frac{c\gamma}{2}\bm W + \frac{\gamma}{2}\|\tilde {\bm w}_{S^{t+1}}\|_2^2 \\
	&\leq \frac{3\gamma}{4}\frac{2s^*}{s+s^*}\left\|\tilde \hat \bth_{I^t \setminus R} - \bth^t_{I^t \setminus R} + \frac{\eta}{\gamma}\bm g^t_{I^t \setminus R}\right\|_2^2 +\frac{3\gamma}{2}\bm W + \frac{\eta^2}{2c\gamma}(1+1/c)\left\|\bm g^t_{S^{t+1}}\right\|_2^2+\frac{c\gamma}{2}\bm W + \frac{\gamma}{2}\|\tilde {\bm w}_{S^{t+1}}\|_2^2.
\end{align*}
The last step is true by observing that $|I^t \setminus R| \leq 2s^*+s$, and the inclusion $I^t \setminus (S^t \cup S^*) \subseteq S^{t+1}$. We continue to simplify,
\begin{align*}
	&\frac{\gamma}{2}\left\|\bth^{t+1}_{I^t} - \bth^t_{I^t} + \frac{\eta}{\gamma}\bm g^t_{I^t}\right\|_2^2 - \frac{\eta^2}{2\gamma}\left\|\bm g^t_{I^t \setminus (S^t \cup S^*)}\right\|_2^2 \\
	&\leq \frac{\gamma}{2}\frac{3s^*}{s+s^*}\left\|\tilde \hat \bth_{I^t} - \bth^t_{I^t} + \frac{\eta}{\gamma}\bm g^t_{I^t}\right\|_2^2 +\frac{3\gamma}{2}\bm W + \frac{\eta^2}{2c\gamma}(1+1/c)\left\|\bm g^t_{S^{t+1}}\right\|_2^2+\frac{c\gamma}{2}\bm W + \frac{\gamma}{2}\|\tilde {\bm w}_{S^{t+1}}\|_2^2 \\
	&\leq \frac{3s^*}{s+s^*}\left(\eta\langle\hat \bth - \bth^t, \bm g^t\rangle + \frac{\gamma}{2}\|\hat \bth - \bth^t\|_2^2 + \frac{\eta^2}{2c\gamma}\|\bm g^t_{I^t}\|_2^2\right) \\
	&+ \frac{\eta^2}{2c\gamma}(1+1/c)\left\|\bm g^t_{S^{t+1}}\right\|_2^2+\frac{(c+3)\gamma}{2}\bm W + \frac{\gamma}{2}\|\tilde {\bm w}_{S^{t+1}}\|_2^2 \\
	&\leq \frac{3s^*}{s+s^*}\left(\eta\L_n(\hat \bth) - \eta\L_n(\bth^t) + \frac{\gamma - \eta \alpha}{2}\|\hat \bth - \bth^t\|_2^2 + \frac{\eta^2}{2c\gamma}\|\bm g^t_{I^t}\|_2^2\right) \\
	&\quad + \frac{\eta^2}{2c\gamma}(1+1/c)\left\|\bm g^t_{S^{t+1}}\right\|_2^2+\frac{(c+3)\gamma}{2}\bm W + \frac{\gamma}{2}\|\tilde {\bm w}_{S^{t+1}}\|_2^2.
\end{align*}
\end{proof}

\subsection{Proof of Lemma \ref{lm: glm privacy}}\label{sec: proof of lm: glm privacy}
\begin{proof}[Proof of Lemma \ref{lm: glm privacy}]
	For every pair of adjacent data sets $\bm Z, \bm Z'$ we have
	\begin{align*}
		\|\bbeta^{t+0.5}(\bm Z) - \bbeta^{t+0.5}(\bm Z')\|_\infty &\leq \frac{\eta^0}{n}\left(|\psi'(\bm x^\top \bbeta^t)-\Pi_R(y)|\|\bm x\|_\infty + |\psi'((\bm x')^\top \bbeta^t)-\Pi_R(y')|\|\bm x'\|_\infty\right) \\
		&\leq\frac{\eta^0}{n} 4(R+c_1)\sigma_{\bm x},
	\end{align*}
	where the last step follows from (D1') and (G1). Algorithm \ref{algo: private sparse glm} is $(\varepsilon, \delta)$-differentially private by Lemma \ref{lm: noisy iterative hard thresholding privacy}.
\end{proof}

\subsection{Omitted Steps in Section \ref{sec: proof of thm: glm upper bound}, Proof of Theorem \ref{thm: glm upper bound}}\label{sec: proof of lemmas for thm: glm upper bound}
\subsubsection{Proof of Lemma \ref{lm: glm contraction}}\label{sec: proof of lm: glm contraction}
\begin{proof}[Proof of Lemma \ref{lm: glm contraction}]
	By Algorithm \ref{algo: private sparse glm}, $\bbeta^{k}, \bbeta^{k+1}$ are both $s$-sparse with $s = 4c_0(\gamma/\alpha)^2s^*$. The scaling assumed in Theorem \ref{thm: glm upper bound} guarantees that $n \geq K s^*\log d \sqrt{\log(T/\delta)}/(\varepsilon/T)$ for a sufficiently large constant $K$, \eqref{eq: glm rsm} implies 	
	\begin{align}
	& \langle \nabla \L_n(\bbeta^{k+1}) - \nabla \L_n(\bbeta^k), \bbeta^{k+1} - \bbeta^k \rangle \leq  \frac{4\gamma}{3}\|\bbeta^{k+1} - \bbeta^k\|_2^2. \label{eq: glm rsm modified contraction}
	\end{align}
	Similarly, because $\|\bbeta^{k} - \hat\bbeta\|_2 \leq 3$ by assumption, the RSC condition \eqref{eq: glm rsc} implies that 
	\begin{align}
	& \langle \nabla \L_n(\bbeta^{k}) - \nabla \L_n(\hat\bbeta), \bbeta^{k} - \hat\bbeta \rangle \geq  \frac{2\alpha}{3}\|\bbeta^{k} - \hat\bbeta\|_2^2.\label{eq: glm rsc modified contraction}
	\end{align} 
	Let $\bm g^k = \nabla \L_n(\bbeta^k; Z)$. It follows from \eqref{eq: glm rsm modified contraction} and \eqref{eq: glm rsc modified contraction} that,
	\begin{align}
	&\L_n(\bbeta^{k+1}) - \L_n(\hat\bbeta) \notag \\
	&= \L_n(\bbeta^{k+1}) - \L_n(\bbeta^{k}) + \L_n(\bbeta^{k}) - \L_n(\hat\bbeta) \notag\\
	&\leq \langle \bm g^k ,  \bbeta^{k+1}- \bbeta^{k}\rangle + \frac{2\gamma}{3}\|\bbeta^{k+1}- \bbeta^{k}\|_2^2 + \langle \bm g^k ,  \bbeta^{k}- \hat\bbeta\rangle - \frac{\alpha}{3}\|\bbeta^{k}- \hat\bbeta\|_2^2 \notag\\
	&\leq \langle \bm g^k ,  \bbeta^{k+1}- \hat\bbeta\rangle + \gamma\|\bbeta^{k+1}- \bbeta^{k}\|_2^2 - \frac{\alpha}{3}\|\bbeta^{k}- \hat\bbeta\|_2^2\notag\\
	& = \langle 2\gamma(\bbeta^k - \bbeta^{k+1}) ,  \bbeta^{k+1}- \hat\bbeta\rangle + \gamma\|\bbeta^{k+1}- \bbeta^{k}\|_2^2  - \frac{\alpha}{3}\|\bbeta^{k}- \hat\bbeta\|_2^2 + \langle \bm g^k - 2\gamma(\bbeta^k- \bbeta^{k+1}) ,  \bbeta^{k+1}- \hat\bbeta\rangle \notag\\
	&= \left(\gamma - \frac{\alpha}{3}\right)\|\bbeta^{k}- \hat\bbeta\|_2^2 - \gamma\|\bbeta^{k+1}- \hat\bbeta\|_2^2 + \langle \bm g^k - 2\gamma(\bbeta^k - \bbeta^{k+1}) ,  \bbeta^{k+1}- \hat\bbeta\rangle. \label{eq: glm contraction master expansion 1}
	\end{align}
	Let $S^{k+1}$, $\hat S$ denote the supports of $\bbeta^{k+1}$, $\hat\bbeta$ respectively. Since $\bbeta^{k+1}$ is an output from Noisy Hard Thresholding, we may write $\bbeta^{k+1} = {\tilde \bbeta}^{k+1} + \tilde {\bm w}_{S^{k+1}}$, so that ${\tilde \bth}^{k+1} = \tilde P_s(\bbeta^k - (1/2\gamma)\nabla \L_n(\bbeta^k; Z))$ and $\tilde {\bm w}$ is the Laplace noise vector.
	
	Now we continue the calculation. For the last term of \eqref{eq: glm contraction master expansion 1},
	\begin{align}
	&\langle \bm g^k - 2\gamma(\bbeta^k - \bbeta^{k+1}) ,  \bbeta^{k+1}- \hat\bbeta\rangle \notag \\
	&= 2\gamma \langle \tilde {\bm w}_{S^{k+1}}, \bbeta^{k+1} - \hat\bbeta \rangle +  2\gamma \langle \tilde{\bbeta}^{k+1} - \bbeta^k + (1/2\gamma)\bm g^k, \bbeta^{k+1} - \hat\bbeta \rangle \notag \\
	& \leq \frac{36\gamma^2}{\alpha} \|\tilde {\bm w}_{S^{k+1}}\|_2^2 + \frac{36\gamma^2}{\alpha}\|(\tilde{\bbeta}^{k+1} - \bbeta^k + (1/2\gamma)\bm g^k)_{S^{k+1} \cup \hat S}\|_2^2 + \frac{2\alpha}{9}\|\bbeta^{k+1} - \hat\bbeta\|_2^2 \label{eq: glm contraction master expansion 2}
	\end{align} 
	For the middle term of \eqref{eq: glm contraction master expansion 2}, since $S^{k+1} \subseteq S^{k+1} \cup \hat S$, we have $\tilde P_s((\bbeta^k + (1/2\gamma)\bm g^k)_{S^{k+1} \cup \hat S}) = \tilde{\bbeta}^{k+1}_{S^{k+1} \cup \hat S}$, and therefore Lemma \ref{lm: noisy hard thresholding overall accuracy} applies. Because $|S^{k+1} \cup \hat S| \leq s + s^*$, we have
	\begin{align*}
	&\|(\tilde{\bbeta}^{k+1} - \bbeta^k + (1/2\gamma)\bm g^k)_{S^{k+1} \cup \hat S}\|_2^2 \\
	&\leq \frac{5}{4}\frac{s^*}{s}\|(\hat\bbeta - \bbeta^k + (1/2\gamma)\bm g^k)_{S^{k+1} \cup \hat S}\|_2^2 + 20 \sum_{i \in [s]} \|\bm w_i\|^2_\infty \\
	&\leq \frac{5\alpha^2}{16c_0\gamma^2}\left(\frac{5}{3}\|\bbeta^k - \hat\bbeta\|_2^2 + \frac{5/2}{4\gamma^2}\|\bm g^k\|_2^2\right) + 20 \sum_{i \in [s]} \|\bm w_i\|^2_\infty
	\leq \frac{125\alpha^2}{16c_0\gamma^2} + 20\sum_{i \in [s]} \|\bm w_i\|^2_\infty.
	\end{align*}
	For the last step to go through, we invoke the assumption that $\|\bbeta^k - \hat\bbeta\|_2 < 3$ and we have $\|\bm g^k\|^2_2  = \|\nabla \L_n(\bbeta^k) - \nabla \L_n(\hat\bbeta)\|_2^2 \leq (4\gamma/3)^2\|\bbeta^k - \hat\bbeta\|_2^2 \leq 16\gamma^2$ by \eqref{eq: glm rsm modified contraction}. We recall from the proof of Theorem \ref{thm: noisy iterative hard theresholding convergence} that $c_0 = 72$;  substituting the inequality above into \eqref{eq: glm contraction master expansion 2} yields
	\begin{align}
	&\langle \bm g^k - 2\gamma(\bbeta^k - \bbeta^{k+1}) ,  \bbeta^{k+1}- \hat\bbeta\rangle \notag\\
	&\leq \frac{125\alpha}{32} + \frac{36\gamma^2}{\alpha} \left(\|\tilde {\bm w}_{S^{k+1}}\|_2^2 + 20\sum_{i \in [s]} \|\bm w_i\|^2_\infty\right) + \frac{2\alpha}{9}\|\bbeta^{k+1} - \hat\bbeta\|_2^2. \label{eq: glm contraction remainder term}
	\end{align}	
	 To analyze the noise term in the middle, we apply Lemma \ref{lm: laplace noise bound}. Because the individual coordinates of $\tilde{\bm w}$, $\bm w_i$ are sampled i.i.d. from the Laplace distribution with scale $(2\gamma)^{-1} \cdot \frac{2\sqrt{3s\log(T/\delta)}}{n\varepsilon/T}$, if $n \geq K s^*\log d \sqrt{\log(T/\delta)}/(\varepsilon/T)$ for a sufficiently large constant $K$, Lemma \ref{lm: laplace noise bound} implies that, with probability at least $1-c_3\exp(-c_4\log(d/s^*))$ for some appropriate constants $c_3, c_4$, the noise term $(36\gamma^2/\alpha)\left(\|\tilde {\bm w}_{S^{k+1}}\|_2^2 + 20\sum_{i \in [s]} \|\bm w_i\|^2_\infty\right) < 3\alpha/32$. We substitute this upper bound back into \eqref{eq: glm contraction remainder term}, and then combine \eqref{eq: glm contraction remainder term} with \eqref{eq: glm contraction master expansion 1} to obtain
	\begin{align}
	\L_n(\bbeta^{k+1}) - \L_n(\hat\bbeta) &\leq \left(\gamma - \frac{\alpha}{3}\right)\|\bbeta^{k}- \hat\bbeta\|_2^2 - \left(\gamma-\frac{2\alpha}{9}\right)\|\bbeta^{k+1}- \hat\bbeta\|_2^2 + 4\alpha. \label{eq: glm contraction master expansion 3}
	\end{align}
	Let us now assume by contradiction that $\|\bbeta^{k+1} - \hat\bbeta\|_2 > 3$. From \eqref{eq: glm rsc} and \eqref{eq: glm rsc modified contraction} we know that $\L_n(\bbeta^{k+1}) - \L_n(\hat\bbeta) \geq \alpha \|\bbeta^{k+1} - \hat\bbeta\|_2$. We combine this observation, the assumptions that $\|\bbeta^{k+1} - \hat\bbeta\|_2 > 3, \|\bbeta^k - \hat\bbeta\|_2 < 3$ and \eqref{eq: glm contraction master expansion 3} to obtain
	\begin{align*}
	\left(3\gamma + \frac{\alpha}{3} \right)\|\bbeta^{k+1} - \hat\bbeta\|_2 \leq 9\gamma + \alpha,
	\end{align*}	
	which contradicts the original assumption that $\|\bbeta^{k+1} - \hat\bbeta\|_2 > 3$.
\end{proof}
\subsubsection{Proof of Lemma \ref{lm: laplace noise bound}} \label{sec: proof of lm: laplace noise bound}
\begin{proof}[Proof of Lemma \ref{lm: laplace noise bound}]
	By union bound and the i.i.d. assumption,
	\begin{align*}
	\Pro\left(\|\bm w\|_2^2 > kC^2\lambda^2\right) \leq k\Pro(w_1^2 > C^2\lambda^2)  \leq ke^{-C}. 
	\end{align*}
	It follows that
	\begin{align*}
	\Pro\left(\|\bm w\|_\infty^2 > C^2\lambda^2\log^2k\right) \leq k\Pro(w_1^2 > C^2\lambda^2\log^2k) \leq ke^{-C\log k} = e^{-(C-1)\log k}.
	\end{align*}
\end{proof}
%!TEX root = Privacy-GLM-JRSSB.tex

\section{Omitted Proofs in Section \ref{sec: glm lower bounds}}\label{sec: lb proofs}

\subsection{Proof of Theorem \ref{thm: low-dim glm lb}}\label{sec: proof of thm: low-dim glm lb}
\begin{proof}[Proof of Theorem \ref{thm: low-dim glm lb}]
	It can be shown via Theorem \ref{thm: score attack general} that the score attack \eqref{eq: low-dim glm attack} is indeed sound and complete:
	
	\begin{Lemma}\label{lm: low-dim glm attack}
		Under the assumptions of Theorem \ref{thm: low-dim glm lb}, the score attack \eqref{eq: low-dim glm attack} satisfies the following properties.
		\begin{enumerate}
			\item Soundness: For each $i \in [n]$ let $(\bm y'_i, \bm X'_i)$ denote the data set obtained by replacing $(y_i, \bm x_i)$ in $(\bm y, \bm X)$ with an independent copy, then $\E \A_\bbeta ((y_i, \bm x_i), M(\bm y'_i, \bm X'_i)) = 0$ and $\E |\A_\bbeta ((y_i, \bm x_i), M(\bm y'_i, \bm X'_i))| \leq \sqrt{\E\|M(\bm y, \bm X) - \bbeta\|^2_2}\sqrt{Cc_2/c(\sigma)}.$
			\item Completeness: $\sum_{i \in [n]} \E \A_\bbeta ((y_i, \bm x_i), M(\bm y, \bm X)) = \sum_{j \in [d]} \frac{\partial}{\partial \beta_j} \E M(\bm y, \bm X)_j$.
		\end{enumerate}
	\end{Lemma}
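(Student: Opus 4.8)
The plan is to observe that the GLM score attack \eqref{eq: low-dim glm attack} is nothing but the abstract score attack \eqref{eq: score attack} specialized to the conditional model $f_\bbeta(y\mid\bm x)$, so that Lemma \ref{lm: low-dim glm attack} will follow from Theorem \ref{thm: score attack general} once its hypotheses are checked. First I would compute the score: from $\log f_\bbeta(y\mid\bm x) = \log h(y,\sigma) + c(\sigma)^{-1}\big(\bm x^\top\bbeta\,y - \psi(\bm x^\top\bbeta)\big)$, differentiating in $\bbeta$ gives $S_\bbeta(y,\bm x) = \nabla_\bbeta\log f_\bbeta(y\mid\bm x) = c(\sigma)^{-1}\big(y - \psi'(\bm x^\top\bbeta)\big)\bm x$. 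Hence $\langle M(\bm y,\bm X)-\bbeta,\, S_\bbeta(\tilde y,\tilde{\bm x})\rangle$ equals exactly $\A_\bbeta((\tilde y,\tilde{\bm x}), M(\bm y,\bm X))$, and the two conclusions of Theorem \ref{thm: score attack general} translate verbatim into the soundness and completeness claims; the remaining work is (i) to bound $\lambda_{\max}(\mathcal I(\bbeta))$ and (ii) to supply the domination condition needed for completeness.

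For soundness I would compute the Fisher information using the exponential-family identity $\Var(y\mid\bm x) = c(\sigma)\psi''(\bm x^\top\bbeta)$, obtaining $\mathcal I(\bbeta) = \Var\big(S_\bbeta(y,\bm x)\big) = \E\big[c(\sigma)^{-2}\Var(y\mid\bm x)\,\bm x\bm x^\top\big] = c(\sigma)^{-1}\E\big[\psi''(\bm x^\top\bbeta)\,\bm x\bm x^\top\big]$. Since $\|\psi''\|_\infty < c_2$ and $\lambda_{\max}(\E\bm x\bm x^\top) < C$, this yields $\lambda_{\max}(\mathcal I(\bbeta)) \le Cc_2/c(\sigma)$; substituting into the soundness bound \eqref{eq: soundness general} of Theorem \ref{thm: score attack general} gives $\E |\A_\bbeta((y_i,\bm x_i), M(\bm y'_i,\bm X'_i))| \le \sqrt{\E\|M(\bm y,\bm X)-\bbeta\|_2^2}\,\sqrt{Cc_2/c(\sigma)}$, and the zero-mean statement is immediate from the independence of $(y_i,\bm x_i)$ and $M(\bm y'_i,\bm X'_i)$ together with $\E S_\bbeta(y,\bm x)=\bm 0$.

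For completeness I would verify the hypothesis of the completeness part of Theorem \ref{thm: score attack general}: $\log f_\bbeta(\bm y,\bm X)$ is continuously differentiable in each $\beta_j$ (as $\psi$ is twice differentiable), and one can take the dominating function $g_j(\bm y,\bm X) = c(\sigma)^{-1}\sum_{i\in[n]}\big(|y_i| + |\psi'(\bm x_i^\top\bbeta)|\big)|x_{ij}|$, which is finite-valued since $|x_{ij}| \le \|\bm x_i\|_2 \lesssim \sqrt d$ and, for fixed $\bbeta$, $|\psi'(\bm x_i^\top\bbeta)| \le |\psi'(0)| + c_2\|\bm x_i\|_2\|\bbeta\|_2$ is bounded. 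The product $g_j(\bm y,\bm X)\,M(\bm y,\bm X)_j$ is integrable because $M(\bm y,\bm X)_j$ is bounded (the standing hypothesis $\|M(\bm y,\bm X)-\bbeta\|_2^2 \lesssim d$ with $\bbeta$ fixed) and $y_i$ has a finite first moment, which follows from the moment generating function estimate $\E\big[\exp\big(\lambda c(\sigma)^{-1}(y_i-\psi'(\bm x_i^\top\bbeta))\big)\,\big|\,\bm x_i\big] \le \exp\big(c_2\lambda^2/2c(\sigma)\big)$ obtained from a Taylor expansion of $\psi$ and $\|\psi''\|_\infty < c_2$. The completeness identity \eqref{eq: completeness general} then gives $\sum_{i\in[n]}\E\,\A_\bbeta((y_i,\bm x_i), M(\bm y,\bm X)) = \sum_{j\in[d]}\frac{\partial}{\partial\beta_j}\E\, M(\bm y,\bm X)_j$.

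The only real obstacle here is bookkeeping rather than anything conceptual: one must choose the dominating functions $g_j$ carefully enough that the exchange of $\partial/\partial\beta_j$ and expectation is justified, which is where the exponential-family tail control on $y$ and the a priori boundedness of $M$ enter. Given those, Lemma \ref{lm: low-dim glm attack} is an immediate corollary of Theorem \ref{thm: score attack general}.
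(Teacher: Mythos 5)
Your proposal is correct and follows essentially the same route as the paper: both reduce the lemma to Theorem \ref{thm: score attack general} by computing the score $S_\bbeta(y,\bm x) = c(\sigma)^{-1}[y-\psi'(\bm x^\top\bbeta)]\bm x$ and bounding $\lambda_{\max}(\mathcal I(\bbeta)) \le Cc_2/c(\sigma)$ via $\|\psi''\|_\infty < c_2$ and $\lambda_{\max}(\E\bm x\bm x^\top) < C$. The only (harmless) differences are that you obtain $\mathcal I(\bbeta)$ from the variance of the score rather than the negative Hessian, and you spell out the dominating functions $g_j$ for the completeness hypothesis where the paper simply appeals to exponential-family regularity.
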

	
	We follow the strategy outlined in Section \ref{sec: general lb} to establish appropriate upper and lower bounds for $\sum_{i \in [n]} \E \A_\bbeta ((y_i, \bm x_i), M(\bm y, \bm X))$, using Lemma \ref{lm: low-dim glm attack}.
	
	\textbf{Step 1. upper bounding the score attacks.}
	We first work on the upper bound. Define $A_i = \A_\bbeta ((y_i, \bm x_i), M(\bm y, \bm X))$; the soundness part of Lemma \ref{lm: low-dim glm attack} and Lemma \ref{lm: score attack upper bound} together imply that
	\begin{align*}
	\E A_i \leq 2\varepsilon \sqrt{\E\|M(\bm y, \bm X) - \bbeta\|^2_2} \sqrt{Cc_2/c(\sigma)} +  2\delta T + \int_T^\infty \Pro(|A_i| > t) \d t.
	\end{align*}
	We need to choose $T$ so that the remainder terms are controlled. We have
	\begin{align*}
	\Pro(|A_i| > t) &= \Pro\left(\left|\frac{y_i - \psi'(\bm x_i^\top\bbeta)}{c(\sigma)}\right| \left|\langle \bm x_i, M(\bm y, \bm X)-\bbeta \rangle\right| > t\right) \\
	&\leq \Pro\left(\left|\frac{y_i - \psi'(\bm x_i^\top\bbeta)}{c(\sigma)}\right|d > t\right).
	\end{align*}
	For the first term, consider
	$f_{\theta}(y) = h(y, \sigma)\exp\left(\frac{y\theta - \psi(\theta)}{c(\sigma)}\right)$ and we have 
	\begin{align*}
	\E \exp(\lambda y) = \int e^{\lambda y} h(y, \sigma)\exp\left(\frac{y\theta - \psi(\theta)}{c(\sigma)}\right) \d y = \exp\left(\frac{\psi(\theta + c(\sigma)\lambda) - \psi(\theta)}{c(\sigma)}\right).
	\end{align*}
	We may then compute the moment generating function of $\frac{y_i - \psi'(\langle\bm x_i, \bbeta\rangle)}{c(\sigma)}$, conditional on $\bm x_i$:
	\begin{align*}
	\log \E \exp\left(\lambda 
	\cdot \frac{y_i - \psi'(\bm x_i^\top\bbeta)}{c(\sigma)}\Big| \bm x_i\right) &= \frac{1}{c(\sigma)}\left(\psi(\bm x_i^\top\bbeta + \lambda) - \psi(\bm x_i^\top\bbeta) - \lambda\psi'(\bm x_i^\top\bbeta)\right) \\
	&\leq \frac{1}{c(\sigma)} \cdot \frac{\lambda^2 \psi^{''}(\bm x_i^\top\bbeta + \tilde \lambda)}{2}
	\end{align*}
	for some $\tilde \lambda \in (0, \lambda)$. It follows that $\E \exp\left(\lambda \cdot \frac{y_i - \psi'(\bm x_i^\top\bbeta)}{c(\sigma)}\Big| \bm x_i\right) \leq \exp\left(\frac{c_2\lambda^2}{2c(\sigma)}\right)$ because $\|\psi^{''}\|_\infty <c_2$. The bound for moment generating function implies that
	\begin{align*}
	\Pro(|A_i| > t) 
	&\leq \Pro\left(\left|\frac{y_i - \psi'(\bm x_i^\top\bbeta)}{c(\sigma)}\right|d > t\right)  \leq \exp\left(-\frac{c(\sigma)t^2}{2c_2d^2}\right).
	\end{align*}
	It follows that
	\begin{align*}
	\E A_i &\leq 2\varepsilon\sqrt{\E\|M(\bm y, \bm X) - \bbeta\|^2_2} \sqrt{Cc_2/c(\sigma)} +  2\delta T + \int_T^\infty \Pro(|A_i| > t) \d t \\
	&\leq 2\varepsilon\sqrt{\E\|M(\bm y, \bm X) - \bbeta\|^2_2} \sqrt{Cc_2/c(\sigma)} +  2\delta T + 2\sqrt{c_2/c(\sigma)}d\exp\left(-\frac{c(\sigma)T^2}{2c_2d^2}\right).
	\end{align*}
	We set $T = \sqrt{2c_2/c(\sigma)}d\sqrt{\log(1/\delta)}$ to obtain
	\begin{align}\label{eq: low-dim glm attack upper}
	\sum_{i \in [n]} \E A_i \leq 2n\varepsilon\sqrt{\E\|M(\bm y, \bm X) - \bbeta\|^2_2} \sqrt{Cc_2/c(\sigma)} + 4\sqrt{2}\delta d\sqrt{c_2\log(1/\delta)/c(\sigma)}.
	\end{align}
	
	\textbf{Step 2. lower bounding the score attacks.}
	Next we prove a lower bound for $\sum_{i \in [n]} \E \A_\bbeta ((y_i, \bm x_i), M(\bm y, \bm X))$, or more precisely, an \textit{average} lower bound with respect to an appropriately chosen prior distribution of $\bbeta$. By completeness in Lemma \ref{lm: low-dim glm attack}, 
	\begin{align*}
	\sum_{i \in [n]} \E \A_\bbeta ((y_i, \bm x_i), M(\bm y, \bm X)) = \sum_{j \in [d]} \frac{\partial}{\partial \beta_j} \E M(\bm y, \bm X)_j.
	\end{align*}
	By Lemma \ref{lm: score attack stein's lemma} and the choice of $\bm \pi(\bbeta)$ as the density of $N(\bm 0, \bm I)$, Example \ref{ex: Gaussian stein's lemma} implies
	\begin{align}\label{eq: low-dim glm attack lower}
	\sum_{i \in [n]} \E_{\bm \pi} \E_{\bm y, \bm X|\bbeta} \A_\bbeta ((y_i, \bm x_i), M(\bm y, \bm X)) \gtrsim d.
	\end{align}
	
	\textbf{Step 3. establishing the minimax risk lower bound.}
	We combine \eqref{eq: low-dim glm attack upper} and \eqref{eq: low-dim glm attack lower} to prove the minimax risk lower bound \eqref{eq: low-dim glm lb}. Since \eqref{eq: low-dim glm attack upper} holds for every fixed $\bbeta$, for any choice of prior $\bm \pi$, we have
	\begin{align*}
	d &\lesssim \E_{\bm \pi}\left[\sum_{i \in [n]} \E \A_\bbeta ((y_i, \bm x_i), M(\bm y, \bm X))\right] \\
	&\leq 2n\varepsilon\E_{\bm \pi} \sqrt{\E_{\bm y, \bm X|\bbeta} \|M(\bm y, \bm X) - \bbeta\|^2_2} \sqrt{Cc_2/c(\sigma)} + 4\sqrt{2}n\delta d\sqrt{c_2\log(1/\delta)/c(\sigma)}.
	\end{align*}
	It follows that
	\begin{align*}
	2n\varepsilon\E_{\bm \pi} \sqrt{\E_{\bm y, \bm X|\bbeta} \|M(\bm y, \bm X) - \bbeta\|^2_2} \sqrt{Cc_2/c(\sigma)} \gtrsim d - 4\sqrt{2}n\delta d\sqrt{c_2\log(1/\delta)/c(\sigma)}.
	\end{align*}
	The assumption of $\delta < n^{-(1+\gamma)}$ implies $d - 4\sqrt{2}n\delta d \sqrt{C_1\log(1/\delta)/c(\sigma)} \gtrsim d$. We can then conclude that
	\begin{align*}
	\E_{\bm \pi}\E_{\bm y, \bm X|\bbeta} \|M(\bm y, \bm X) - \bbeta\|^2_2 \gtrsim \frac{c(\sigma)d^2}{n^2\varepsilon^2}
	\end{align*}
	The proof is complete because the minimax risk is always greater than the Bayes risk.
\end{proof}
\subsubsection{Proof of Lemma \ref{lm: low-dim glm attack}}
\begin{proof}
	On the basis of Theorem \ref{thm: score attack general}, it suffices to calculate the score statistic of $f(y, \bm x)$ with respect to $\bbeta$ and the Fisher information matrix. In particular, all regularity conditions required for exchanging integration and differentiation are satisfied since $f_\bbeta(y|x)$ is an exponential family. We have
	\begin{align*}
	\frac{\partial}{\partial \bbeta} \log f(y, \bm x) &= \frac{\partial}{\partial \bbeta} \log \left(f_\bbeta(y|\bm x) f(\bm x)\right) = \frac{\partial}{\partial \bbeta} \log f_\bbeta(y|\bm x) \\
	&= \frac{\partial}{\partial \bbeta}\left(\frac{\bm x^\top \bbeta y - \psi(\bm x^\top \bbeta)}{c(\sigma)}\right) = \frac{[y - \psi'(\bm x^\top \bbeta)]\bm x}{c(\sigma)}.
	\end{align*}
	For the Fisher information, we have
	\begin{align*}
	\mathcal I(\bbeta) = -\E \left(\frac{\partial^2}{\partial \bbeta^2} \log f(y, \bm x) \right) = \E\left(\frac{\psi''(\bm x^\top \bbeta)}{c(\sigma)} \bm x \bm x^\top\right) \preceq \frac{c_2}{c(\sigma)} \E[\bm x \bm x^\top],
	\end{align*}
	where the last inequality holds by $\|\psi''\|_\infty \leq c_2$. We then have $\lambda_{\max}(\mathcal I(\bbeta)) \leq Cc_2 /c(\sigma)$ by $\lambda_{\max}(\E[\bm x \bm x^\top]) \leq C$.
\end{proof}

\subsection{Proof of Theorem \ref{thm: high-dim glm lb}}\label{sec: proof of thm: high-dim glm lb}
\begin{proof}[Proof of Theorem \ref{thm: high-dim glm lb}]
	By Theorem \ref{thm: score attack general}, the sparse GLM score attack is sound and complete.
	Similar to the classical GLM score attack, the sparse attack \eqref{eq: high-dim glm attack} also satisfies soundness and completeness.
	\begin{Lemma}\label{lm: high-dim glm attack}
		Under the assumptions of Theorem \ref{thm: high-dim glm lb}, the score attack \eqref{eq: high-dim glm attack} satisfies the following properties.
		\begin{enumerate}
			\item Soundness: For each $i \in [n]$ let $(\bm y'_i, \bm X'_i)$ denote the data set obtained by replacing $(y_i, \bm x_i)$ in $(\bm y, \bm X)$ with an independent copy, then $\E \A_{\bbeta,s^*} ((y_i, \bm x_i), M(\bm y'_i, \bm X'_i)) = 0$ and $\E |\A_{\bbeta,s^*} ((y_i, \bm x_i), M(\bm y'_i, \bm X'_i))| \leq \sqrt{\E\|(M(\bm y, \bm X) - \bbeta)_{\supp(\bbeta)}\|^2_2}\sqrt{Cc_2/c(\sigma)}.$
			\item Completeness: $\sum_{i \in [n]} \E \A_{\bbeta,s^*} ((y_i, \bm x_i), M(\bm y, \bm X)) = \sum_{j \in [d]} \frac{\partial}{\partial \beta_j} \E(M(\bm y, \bm X)_j\1(\beta_j \neq 0))$.
		\end{enumerate}
	\end{Lemma}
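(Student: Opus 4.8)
The plan is to derive both parts of Lemma~\ref{lm: high-dim glm attack} directly from the general score attack guarantees of Theorem~\ref{thm: score attack general}, mirroring the proof of its low-dimensional analogue Lemma~\ref{lm: low-dim glm attack}. The one structural difference --- the restriction of $M(\bm y,\bm X)-\bbeta$ to $\supp(\bbeta)$ --- will be absorbed by viewing \eqref{eq: high-dim glm attack} as the \emph{ordinary} score attack for the sub-model of \eqref{eq: glm definition} in which only the subvector $\bbeta_{\supp(\bbeta)}$ is a free parameter and the off-support coordinates are fixed at zero. Writing $S=\supp(\bbeta)$ and recalling that the score of $f_\bbeta(y|\bm x)f_{\bm x}(\bm x)$ with respect to $\bbeta$ equals $c(\sigma)^{-1}[y-\psi'(\bm x^\top\bbeta)]\bm x$, one checks that $\A_{\bbeta,s^*}((\tilde y,\tilde{\bm x}),M)$ coincides with $\langle M_S-\bbeta_S,\,S_{\bbeta_S}(\tilde y,\tilde{\bm x})\rangle$, the score attack for that sub-model evaluated at the estimator $M(\bm y,\bm X)_S$; so Theorem~\ref{thm: score attack general} applies without modification.

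\textbf{Soundness.} First I would note that $(y_i,\bm x_i)$ is independent of $M(\bm y'_i,\bm X'_i)$ and that $\E S_{\bbeta_S}(y_i,\bm x_i)=\bm 0$, which immediately gives $\E\,\A_{\bbeta,s^*}((y_i,\bm x_i),M(\bm y'_i,\bm X'_i))=0$. The absolute-moment bound then follows from the Jensen/Cauchy--Schwarz step in \eqref{eq: soundness general}, yielding $\sqrt{\E\|(M-\bbeta)_S\|_2^2}\sqrt{\lambda_{\max}(\mathcal I_S(\bbeta))}$, where $\mathcal I_S(\bbeta)$ is the sub-model Fisher information, i.e. the $S\times S$ principal submatrix of $\mathcal I(\bbeta)=\E\big[c(\sigma)^{-1}\psi''(\bm x^\top\bbeta)\,\bm x\bm x^\top\big]$. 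Since $\|\psi''\|_\infty\le c_2$ gives $\mathcal I(\bbeta)\preceq c(\sigma)^{-1}c_2\,\E[\bm x\bm x^\top]$ and Cauchy interlacing bounds the largest eigenvalue of a principal submatrix by that of the full matrix, $\lambda_{\max}(\mathcal I_S(\bbeta))\le Cc_2/c(\sigma)$, which is the stated constant.

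\textbf{Completeness.} Here I would invoke the completeness half of Theorem~\ref{thm: score attack general} for the sub-model; its regularity hypotheses (dominated differentiation under the integral) hold because $f_\bbeta(y|\bm x)$ is an exponential family, exactly as verified in the proof of Lemma~\ref{lm: low-dim glm attack}. This gives $\sum_{i\in[n]}\E\,\A_{\bbeta,s^*}((y_i,\bm x_i),M(\bm y,\bm X))=\sum_{j\in S}\frac{\partial}{\partial\beta_j}\E\,M(\bm y,\bm X)_j$. Finally, since the indicator $\1(\beta_j\neq0)$ is a deterministic function of $\bbeta$ alone (equal to $\1(j\in S)$, and locally constant in $\beta_j$ when $j\in S$), it passes through both the expectation and the $\beta_j$-derivative, so the sum over $S$ rewrites as $\sum_{j\in[d]}\frac{\partial}{\partial\beta_j}\E\big(M(\bm y,\bm X)_j\1(\beta_j\neq0)\big)$, the asserted form. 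I expect the only real subtleties to be this last bookkeeping with $\1(\beta_j\neq0)$ --- which matters precisely because the lemma will later be averaged against the random-support prior $\bm\pi$, where $\supp(\bbeta)$ itself fluctuates --- and the check that the principal-submatrix Fisher-information bound survives the restriction to $S$; both are routine but should be stated carefully.
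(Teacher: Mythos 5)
Your proposal is correct and follows essentially the same route as the paper: the paper's proof is the one-line observation that $(M(\bm y,\bm X)-\bbeta)_{\supp(\bbeta)} = M(\bm y,\bm X)_{\supp(\bbeta)}-\bbeta$, after which the lemma is read off from Theorem \ref{thm: score attack general} together with the score and Fisher-information computations already done for Lemma \ref{lm: low-dim glm attack}. Your write-up simply makes explicit the two points the paper leaves implicit — the sub-model/principal-submatrix bound $\lambda_{\max}(\mathcal I_S(\bbeta))\le Cc_2/c(\sigma)$ and the bookkeeping with $\1(\beta_j\neq 0)$ — both of which are handled correctly.
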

	From the soundness and completeness properties, we can follow the strategy in Section \ref{sec: general lb} to derive the minimax risk lower bound.
	
	\textbf{Step 1. upper bounding the score attacks.}
	Define $A_i = \A_{\bbeta,s^*}((y_i, \bm x_i), M(\bm y, \bm X))$; the soundness part of Lemma \ref{lm: high-dim glm attack} and Lemma \ref{lm: score attack upper bound} together imply that
	\begin{align*}
	\E A_i \leq 2\varepsilon \sqrt{\E\|(M(\bm y, \bm X) - \bbeta)_{\supp(\bbeta)}\|^2_2} \sqrt{Cc_2/c(\sigma)} +  2\delta T + \int_T^\infty \Pro(|A_i| > t) \d t.
	\end{align*}
	We look for $T$ such that the remainder terms are controlled. We have
	\begin{align*}
	\Pro(|A_i| > t) &= \Pro\left(\left|\frac{y_i - \psi'(\bm x_i^\top\bbeta)}{c(\sigma)}\right| \left|\langle \bm x_i, (M(\bm y, \bm X)-\bbeta)_{\supp(\bbeta)} \rangle\right| > t\right) \\
	&\leq \Pro\left(\left|\frac{y_i - \psi'(\bm x_i^\top\bbeta)}{c(\sigma)}\right|s^* > t\right).
	\end{align*}
	In the proof of Theorem \ref{thm: low-dim glm lb}, we have found $\E \exp\left(\lambda \cdot \frac{y_i - \psi'(\bm x_i^\top\bbeta)}{c(\sigma)}\Big| \bm x_i\right) \leq \exp\left(\frac{c_2\lambda^2}{2c(\sigma)}\right)$. The bound for moment generating function then yields
	\begin{align*}
	\Pro(|A_i| > t) 
	&\leq \Pro\left(\left|\frac{y_i - \psi'(\bm x_i^\top\bbeta)}{c(\sigma)}\right|s^* > t\right)
	\leq \exp\left(-\frac{c(\sigma)t^2}{2c_2(s^*)^2}\right).
	\end{align*}
	It follows that
	\begin{align*}
	\E A_i &\leq 2\varepsilon\sqrt{\E\|(M(\bm y, \bm X) - \bbeta)_{\supp(\bbeta)}\|^2_2} \sqrt{Cc_2/c(\sigma)} +  2\delta T + \int_T^\infty \Pro(|A_i| > t) \d t \\
	&\leq 2\varepsilon\sqrt{\E\|M(\bm y, \bm X) - \bbeta\|^2_2} \sqrt{Cc_2/c(\sigma)} +  2\delta T + 2s\sqrt{c_2/c(\sigma)}\exp\left(-\frac{c(\sigma)T^2}{2c_2(s^*)^2}\right).
	\end{align*}
	We choose $T = \sqrt{2c_2/c(\sigma)}s^*\sqrt{\log(1/\delta)}$ to obtain
	\begin{align}\label{eq: high-dim glm attack upper}
	\sum_{i \in [n]} \E A_i \leq 2n\varepsilon\sqrt{\E\|M(\bm y, \bm X) - \bbeta\|^2_2} \sqrt{Cc_2/c(\sigma)} + 4\sqrt{2}\delta s^* \sqrt{c_2\log(1/\delta)/c(\sigma)}.
	\end{align}
	
	\textbf{Step 2. lower bounding the score attacks.}
	By completeness in Lemma \ref{lm: high-dim glm attack}, 
	\begin{align*}
	\sum_{i \in [n]} \E \A_{\bbeta,s^*} ((y_i, \bm x_i), M(\bm y, \bm X)) = \sum_{j \in [d]} \frac{\partial}{\partial \beta_j} \E (M(\bm y, \bm X)_j\1(\beta_j \neq 0)).
	\end{align*}
	For Lemma \ref{lm: score attack stein's lemma} to apply, we will choose some appropriate $\bm \pi(\bbeta)$; unlike the proof of Theorem \ref{thm: low-dim glm lb}, we have to find some prior distribution for $\bbeta$ that obeys the sparsity condition $\|\bbeta\|_0 \leq s^*$. To this end, consider $\bbeta$ generated as follows: let $\tilde \beta_1, \tilde \beta_2, \cdots, \tilde \beta_d$ be drawn i.i.d. from $N(0, 1)$, let $I_{s^*}$ be a subset of $[d]$ with $|I_{s^*}| = s^*$, and define $\beta_j = \tilde \beta_j \1(j \in I_{s^*})$, so that $\|\bbeta\|_0 \leq s^*$ by construction, and $\supp(\bbeta) = I_{s^*}$.
	
	It now follows from Lemma \ref{lm: score attack stein's lemma} that, if $\E_{\bm y, \bm X|\bbeta}\|(M(\bm y, \bm X) - \bbeta)_{\supp(\bbeta)}\|_2^2 < 1$, 
	\begin{align*}
	\E_{\bm \pi} \left(\sum_{j \in [d]} \frac{\partial}{\partial \beta_j} \E (M(\bm y, \bm X)_j\1(\beta_j \neq 0))\right)
	&\geq \E_{\bm \pi}\left(\sum_{j \in [d]}  \beta_j^2\1(j \in I_{s^*}) \right)-  \sqrt{\E_{\bm \pi}\left(\sum_{j \in [d]}  \beta_j^2\1(j \in I_{s^*})\right)}.
	\end{align*}
	We choose $I_{s^*}$ to be the index set of $\tilde \bbeta$ with top $s^*$ greatest absolute values, and invoke the following lemma:
	\begin{Lemma}\label{lm: gaussian top s}
		Consider $Z_1, Z_2, \cdots, Z_d$ drawn i.i.d. from $N(0,1)$, and let $|Z|_{(1)}, |Z|_{(2)}, \cdots, |Z|_{(d)}$ be the order statistics of $\{|Z_j|\}_{j \in [d]}$. If $ s = o(d)$, for sufficiently large $d$ we have
		\begin{align*}
		\E |Z|^2_{(d-s+1)} > c \log(d/s).
		\end{align*}
		for some constant $c > 0$ and consequently $\sum_{k = 0}^{s-1} \E |Z|^2_{(d-k)} \gtrsim s\log(d/s)$.
	\end{Lemma}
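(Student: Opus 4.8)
The plan is a threshold-crossing argument: I will show that, with probability approaching one as $d\to\infty$, at least $s$ of the $|Z_j|$ exceed a level $t$ of order $\sqrt{\log(d/s)}$, which forces $|Z|_{(d-s+1)}\ge t$ on that event and yields the claimed bound on the expectation. Concretely, I would fix the threshold $t = t_{d,s} := \sqrt{\log(d/s)}$; since $s=o(d)$ we have $d/s\to\infty$, so $t\ge 1$ for all large $d$, and the standard Gaussian tail lower bound $\Pro(|Z_1|>t)\ge \sqrt{2/\pi}\,\tfrac{t}{1+t^2}e^{-t^2/2}\gtrsim t^{-1}e^{-t^2/2}$ applies. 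Writing $p:=\Pro(|Z_1|>t)$ and $N := \#\{j\in[d]:|Z_j|>t\}\sim\mathrm{Bin}(d,p)$, and using $e^{-t^2/2}=(d/s)^{-1/2}$, I obtain
\begin{align*}
\E[N]=dp \gtrsim \frac{d}{t}\Big(\frac{d}{s}\Big)^{-1/2} = \frac{s}{\sqrt{\log(d/s)}}\sqrt{\frac{d}{s}}.
\end{align*}

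Since $\sqrt{d/s}/\sqrt{\log(d/s)}\to\infty$, this gives $\E[N]\ge 2s$ for all $d$ sufficiently large. A multiplicative Chernoff bound then yields $\Pro(N<s)\le\Pro\big(N<\E[N]/2\big)\le\exp(-\E[N]/8)\le e^{-s/4}$, so $\Pro(N\ge s)\ge 1-e^{-s/4}$. Because the Gaussian is continuous there are a.s.\ no ties, so $\{|Z|_{(d-s+1)}\ge t\}=\{N\ge s\}$, and on this event at least $s$ of the $|Z_j|$ are at least $t$. Hence
\begin{align*}
\E|Z|^2_{(d-s+1)} \ge t^2\,\Pro\big(|Z|_{(d-s+1)}\ge t\big) = \log(d/s)\cdot\Pro(N\ge s) \ge \big(1-e^{-1/4}\big)\log(d/s),
\end{align*}
which is the first claim with $c := 1-e^{-1/4}>0$ (and the strict inequality is recovered for large $d$ since $\Pro(N\ge s)\to 1$). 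The second claim is then immediate: $|Z|_{(d-k)}\ge |Z|_{(d-s+1)}$ for every $0\le k\le s-1$, so $\sum_{k=0}^{s-1}\E|Z|^2_{(d-k)}\ge s\,\E|Z|^2_{(d-s+1)}\gtrsim s\log(d/s)$.

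The only non-routine step is verifying $\E[N]=dp\ge 2s$ for large $d$: this is precisely where the hypothesis $s=o(d)$ enters, through the competition between the polynomial factor $(d/s)^{1/2}$ coming from the sample size and the logarithmic factor $\sqrt{\log(d/s)}$ in the denominator of the tail bound, and it is also why the threshold constant must be kept strictly below $\sqrt 2$ (here I took it to be $1$), so that $e^{-t^2/2}$ does not overwhelm the factor $d$. The Gaussian tail bound, the Chernoff inequality, and the monotonicity of order statistics are all standard, so no further difficulty is expected.
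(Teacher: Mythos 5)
Your proof is correct and follows essentially the same route as the paper's: threshold at $t\asymp\sqrt{\log(d/s)}$, lower-bound $\Pro(|Z_1|>t)$ via the Mills ratio, and show via a binomial tail bound that at least $s$ coordinates exceed $t$ with probability bounded away from zero, so that $\E|Z|^2_{(d-s+1)}\geq t^2\,\Pro(N\geq s)\gtrsim\log(d/s)$. The only (immaterial) differences are that the paper thresholds at $\sqrt{\log(d/2s)}$ and compares the exceedance count to a dominating $\mathrm{Binomial}(d,2s/d)$ with an explicit lower-tail estimate, whereas you apply a multiplicative Chernoff bound directly to $N$ after checking $\E N\geq 2s$.
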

	Therefore, by our choice of prior for $\bbeta$, we have
	\begin{align}\label{eq: high-dim glm attack lower}
	\sum_{i \in [n]} 	\E_{\bm \pi}\E \A_{\bbeta,s^*} ((y_i, \bm x_i), M(\bm y, \bm X)) \gtrsim s^*\log(d/s^*) - \sqrt{s^*\log(d/s^*)} \gtrsim s^*\log(d/s^*).
	\end{align}
	
	\textbf{Step 3. establishing the minimax risk lower bound.}
	We combine \eqref{eq: high-dim glm attack upper} and \eqref{eq: high-dim glm attack lower} to prove the minimax risk lower bound \eqref{eq: high-dim glm lb}. Since \eqref{eq: high-dim glm attack upper} holds for every fixed $\bbeta$,  for our choice of prior $\bm \pi$, we have
	\begin{align*}
	s^*\log(d/s^*) &\lesssim \E_{\bm \pi}\left[\sum_{i \in [n]} \E \A_{\bbeta,s^*}((y_i, \bm x_i), M(\bm y, \bm X))\right] \\
	&\leq 2n\varepsilon\E_{\bm \pi} \sqrt{\E_{\bm y, \bm X|\bbeta} \|M(\bm y, \bm X) - \bbeta\|^2_2} \sqrt{Cc_2/c(\sigma)} + 4\sqrt{2}n\delta s^* \sqrt{c_2\log(1/\delta)/c(\sigma)}.
	\end{align*}
	It follows that
	\begin{align*}
	2n\varepsilon\E_{\bm \pi} \sqrt{\E_{\bm y, \bm X|\bbeta} \|M(\bm y, \bm X) - \bbeta\|^2_2} \sqrt{Cc_2/c(\sigma)} \gtrsim s^*\log(d/s^*) - 4\sqrt{2}n\delta s^* \sqrt{c_2\log(1/\delta)/c(\sigma)}.
	\end{align*}
	The assumption that $\delta < n^{-(1+\gamma)}$ for some $\gamma > 0$ implies that for $n$ sufficiently large, $s^*\log(d/s^*) - 4\sqrt{2}n\delta s^* \sqrt{c_2\log(1/\delta)/c(\sigma)} \gtrsim s^*\log(d/s^*)$. We then conclude that
	\begin{align*}
	\E_{\bm \pi}\E_{\bm y, \bm X|\bbeta} \|M(\bm y, \bm X) - \bbeta\|^2_2 \gtrsim \frac{c(\sigma)(s^*\log(d/s^*))^2}{n^2\varepsilon^2}
	\end{align*}
	The proof is complete because the minimax risk is always greater than the Bayes risk.
\end{proof}

\subsubsection{Proof of Lemma \ref{lm: high-dim glm attack}}
\begin{proof}
	We observe that $(M(\bm y, \bm X) - \bbeta)_{\supp(\bbeta)} = M(\bm y, \bm X)_{\supp(\bbeta)} - \bbeta$. The lemma is then a consequence of Theorem \ref{thm: score attack general} and the score and Fisher information calculations in the proof of Lemma \ref{lm: low-dim glm attack}.
\end{proof}

\subsubsection{Proof of Lemma \ref{lm: gaussian top s}}
\begin{proof}
	We denote $Y = |Z|_{(d-s+1)}$ and observe that
	\begin{align*}
	\Pro(Y > t)  = 1 - \Pro(Y \leq t) = 1 - \Pro\left(\sum_{j \in [d]} \1(|Z_j| > t) \leq s \right)
	\end{align*}
	Since $\Pro(|Z_i| > t) \geq t^{-1}\exp(-t^2/2)$ for $t \geq \sqrt{2}$ by Mills ratio, we choose $t = \sqrt{\log(d/2s)}$, so that $t^{-1}\exp(-t^2/2) > 2s/d$ as long as $d > 2s$. Now consider $N \sim$ Binomial$(d, 2s/d)$; we have $\Pro\left(\sum_{j \in [d]} \1(|Z_j| > t) \leq s \right) \leq \Pro (N \leq s)$. By standard Binomial tail bounds \cite{arratia1989tutorial}, 
	\begin{align*}
	\Pro (N \leq s) &\leq \exp\left[-d\left((s/d)\log(1/2) + (1-s/d)\log\left(\frac{1-s/d}{1-2s/d}\right)\right)\right] \\
	&\leq 2^s\left(1-\frac{s}{d-s}\right)^{d-s} < (2/e)^s
	\end{align*}
	It follows that $\Pro\left(Y > \sqrt{\log(d/2s)}\right) > 1- (2/e)^s > 0.$ Because $Y$ is non-negative, the proof is complete.
\end{proof} 

\end{document}